\newtheorem*{rep@theorem}{\rep@title}
\newcommand{\newreptheorem}[2]{%
\newenvironment{rep#1}[1]{%
 \def\rep@title{#2 \ref{##1}}%
 \begin{rep@theorem}}%
 {\end{rep@theorem}}}
\theoremstyle{plain}
\newtheorem{theorem}{Theorem}
\newtheorem{lemma}{Lemma}
\newtheorem{idea}{Target}
\theoremstyle{definition}
\newtheorem{definition}{Definition}
\theoremstyle{remark}
\newtheorem{remark}{Remark}
\newtheorem{tgt}{High-level Target}
\def\mydefbb#1{\expandafter\def\csname bb#1\endcsname{\ensuremath{\mathbb{#1}}}}
\def\mydefallbb#1{\ifx#1\mydefallbb\else\mydefbb#1\expandafter\mydefallbb\fi}
\def\mydefcal#1{\expandafter\def\csname cal#1\endcsname{\ensuremath{\mathcal{#1}}}}
\def\mydefallcal#1{\ifx#1\mydefallcal\else\mydefcal#1\expandafter\mydefallcal\fi}
\newcommand{\Abs}[1]{\lvert #1 \rvert}
\newcommand{\Set}[1]{\{#1\}}
\newcommand{\PB}{\ensuremath{\operatorname{PoiBin}}}
\let\save@mathaccent\mathaccent
\newcommand*\if@single[3]{%
  \setbox0\hbox{${\mathaccent"0362{#1}}^H$}%
  \setbox2\hbox{${\mathaccent"0362{\kern0pt#1}}^H$}%
  \ifdim\ht0=\ht2 #3\else #2\fi
  }
\newcommand*\rel@kern[1]{\kern#1\dimexpr\macc@kerna}
\newcommand*\widebar[1]{\@ifnextchar^{{\wide@bar{#1}{0}}}{\wide@bar{#1}{1}}}
\newcommand*\wide@bar[2]{\if@single{#1}{\wide@bar@{#1}{#2}{1}}{\wide@bar@{#1}{#2}{2}}}
\newcommand*\wide@bar@[3]{%
  \begingroup
  \def\mathaccent##1##2{%
    \let\mathaccent\save@mathaccent
    \if#32 \let\macc@nucleus\first@char \fi
    \setbox\z@\hbox{$\macc@style{\macc@nucleus}_{}$}%
    \setbox\tw@\hbox{$\macc@style{\macc@nucleus}{}_{}$}%
    \dimen@\wd\tw@
    \advance\dimen@-\wd\z@
    \divide\dimen@ 3
    \@tempdima\wd\tw@
    \advance\@tempdima-\scriptspace
    \divide\@tempdima 10
    \advance\dimen@-\@tempdima
    \ifdim\dimen@>\z@ \dimen@0pt\fi
    \rel@kern{0.6}\kern-\dimen@
    \if#31
      \overline{\rel@kern{-0.6}\kern\dimen@\macc@nucleus\rel@kern{0.4}\kern\dimen@}%
      \advance\dimen@0.4\dimexpr\macc@kerna
      \let\final@kern#2%
      \ifdim\dimen@<\z@ \let\final@kern1\fi
      \if\final@kern1 \kern-\dimen@\fi
    \else
      \overline{\rel@kern{-0.6}\kern\dimen@#1}%
    \fi
  }%
  \macc@depth\@ne
  \let\math@bgroup\@empty \let\math@egroup\macc@set@skewchar
  \mathsurround\z@ \frozen@everymath{\mathgroup\macc@group\relax}%
  \macc@set@skewchar\relax
  \let\mathaccentV\macc@nested@a
  \if#31
    \macc@nested@a\relax111{#1}%
  \else
    \def\gobble@till@marker##1\endmarker{}%
    \futurelet\first@char\gobble@till@marker#1\endmarker
    \ifcat\noexpand\first@char A\else
      \def\first@char{}%
    \fi
    \macc@nested@a\relax111{\first@char}%
  \fi
  \endgroup
}
\definecolor{kjred}{RGB}{228, 26, 28}
\definecolor{kjblue}{RGB}{55,126,184}
\definecolor{kjgreen}{RGB}{77,175,74}
\definecolor{kjpurple}{RGB}{152,78,163}
\definecolor{kjorange}{RGB}{255,127,0}
\definecolor{Red}{RGB}{244, 124, 124}
\definecolor{Green}{RGB}{162, 222, 147}
\definecolor{Blue}{RGB}{112, 161, 211}
\newcommand{\smallsection}[1]{{\noindent {\bf{\underline{\smash{#1}}}}}}
\definecolor{peace}{RGB}{228, 26, 28}
\definecolor{love}{RGB}{55, 126, 184}
\definecolor{joy}{RGB}{77, 175, 74}
\definecolor{kindness}{RGB}{152, 78, 163}
\newcommand{\der}{\operatorname{der}}%
\newcommand{\argmin}{\mathop{\mathrm{arg\,min}}}
\newcommand{\bus}[1]{\textbf{\underline{\smash{#1}}}}
\newcommand{\ours}{\textsc{UCom2}\xspace}
\newcommand{\citenameyear}[1]{\citeauthor{#1} \citeyearpar{#1}}
\renewcommand{\cite}{\citenameyear}
\newcommand{\revise}{}
\icmltitlerunning{Tackling \revise{Prevalent} Conditions in Unsupervised Combinatorial Optimization}
\begin{document}

\twocolumn[
\icmltitle{Tackling \revise{Prevalent} Conditions in Unsupervised Combinatorial Optimization: 
Cardinality, Minimum, Covering, and More
}

\icmlsetsymbol{equal}{\dagger}

\begin{icmlauthorlist}
\icmlauthor{Fanchen Bu}{KAIST_EE,AI4CO}
\icmlauthor{Hyeonsoo Jo}{KAIST_AI}
\icmlauthor{Soo Yong Lee}{KAIST_AI}
\icmlauthor{Sungsoo Ahn}{POSTECH_AI,POSTECH_ECE}
\icmlauthor{Kijung Shin}{KAIST_AI,KAIST_EE}
\end{icmlauthorlist}

\icmlaffiliation{KAIST_AI}{Kim Jaechul Graduate School of Artificial Intelligence, KAIST, Seoul, Republic of Korea}
\icmlaffiliation{KAIST_EE}{School of Electrical Engineering, KAIST, Daejeon, Republic of Korea}
\icmlaffiliation{POSTECH_AI}{Graduate School of Artificial Intelligence, POSTECH, Pohang, Republic of Korea}
\icmlaffiliation{POSTECH_ECE}{Department of Computer Science and Engineering, Pohang, Republic of Korea}
\icmlaffiliation{AI4CO}{AI4CO Open-Source Community}

\icmlcorrespondingauthor{Kijung Shin}{kijungs@kaist.ac.kr}

\icmlkeywords{}

\vskip 0.3in
]

\printAffiliationsAndNotice{}  %

\begin{abstract}
Combinatorial optimization (CO) is naturally discrete, making machine learning based on differentiable optimization inapplicable.
\citet{karalias2020erdos} adapted the probabilistic method 
to incorporate CO into differentiable optimization.
Their work ignited the research on unsupervised learning for CO, composed of two main components: probabilistic objectives and derandomization.
However, each component confronts unique challenges.
First, deriving objectives 
under \revise{various} conditions \revise{(e.g., cardinality constraints and minimum)}
is nontrivial.
Second, the derandomization process is underexplored, and the existing derandomization methods are either random sampling or naive rounding.
In this work, we aim to tackle \revise{prevalent (i.e., commonly involved)} conditions in unsupervised CO.
First, we concretize the targets for objective construction and derandomization with theoretical justification.
Then, for various conditions commonly involved in different CO problems, we derive nontrivial objectives and derandomization to meet the targets.
Finally, we apply the derivations to various CO problems.
Via extensive experiments on synthetic and real-world graphs, we validate the correctness of our derivations and show our empirical superiority w.r.t. both optimization quality and speed.

\end{abstract}

\section{Introduction}

Combinatorial optimization (CO) problems are \textit{discrete} by their nature.
Machine learning methods are based on differentiable optimization (e.g., gradient descent),
and applying them to CO is non-trivial.
In their pioneering work, \citet{karalias2020erdos} adapted the probabilistic method~\citep{erdos1974probabilistic,alon2016probabilistic} to incorporate discrete CO problems into differentiable optimization.
Specifically, they proposed to {evaluate} CO objectives {on a \textit{distribution} of discrete choices} (i.e., in a \textit{probabilistic} manner),
allowing for the differentiable optimization-based ML techniques to be applied to CO problems.
This ignited the line of research on unsupervised (i.e., not supervised by solutions) learning for combinatorial optimization (UL4CO).

There are two components in UL4CO:
(1) construction of \textit{probabilistic objectives} 
and
(2) \textit{derandomization} to obtain the final {discrete solutions}.
However, the prior works on UL4CO share multiple limitations.
First, although some desirable properties 
of probabilistic objectives 
\revise{(e.g., desirable objectives should be differentiable and align well with the original discrete objectives)} 
have been proposed,
how to derive objectives satisfying such properties is still unclear.
At the same time, the derandomization process is underexplored, without many practical techniques or theoretical discussions.
Specifically, the existing derandomization methods
are either random sampling or naive rounding.
Random sampling, by its nature, may cost us a large number of samplings (and good luck) to have good results.
For naive rounding, the performance may highly depend on the order of rounding and end up with mediocre solutions.
They only guarantee, at best, derandomized solutions are no worse than the given continuous solutions w.r.t. the corresponding probabilistic objectives. However, how to obtain stronger guarantees in an efficient way has been an open problem.

Motivated by the limitations, 
{\revise{in this work, we focus on objectives and constraints that have not been systematically handled within the UL4CO framework and are commonly involved in various CO problems.}}
We study and propose \ours (\bus{U}supervised \bus{Com}binatorial Optimization \bus{U}nder \revise{\bus{Com}monly-involved} Conditions).
Specifically, our contributions are four-fold.

\begin{itemize}[leftmargin=*,topsep=0pt]
\setlength\itemsep{0em}
\item {\textbf{We concretize the targets for objective construction and derandomization with theoretical justification (Sec.~\ref{sec:proposed_method}).}
We theoretically show that probabilistic objectives that can be rephrased as an expectation are desirable,
and propose a {fast} and {effective} derandomization scheme with a quality guarantee stronger than the existing ones.}
\item {\textbf{We derive non-trivial objectives and derandomization for various \revise{prevalent} conditions to meet the targets (Sec.~\ref{sec:analy_conds}).}
We focus on conditions that are mathematically hard to handle but commonly involved in CO problems, e.g., cardinality constraints, minimum, and covering.}
\item {\textbf{We apply our derivations to different CO problems involving such \revise{prevalent} conditions (Sec.~\ref{sec:problems}).} 
For each problem, we analyze what conditions are involved and derive objectives and derandomization by combining our derivations for the involved conditions.}
\item \textbf{We show the empirical superiority of \ours via experiments (Sec.~\ref{sec:experiments}).} 
Equipped with our derivations, our method \ours achieves better optimization quality and speed across different CO problems on both synthetic and real-world graphs, outperforming various baselines.
\end{itemize}

\smallsection{Reproducibility.}
The code and datasets are available in the online appendix~\citep{appendix}.\footnote{\url{https://github.com/ai4co/unsupervised-CO-ucom2}}

\section{Preliminaries and Background}\label{sec:prelim}

\subsection{Preliminaries}\label{subsec:notations}
\smallsection{Graphs.}
A \textit{graph} $G = (V, E, W)$ is defined by a \textit{node set} $V$, an \textit{edge set} $E$, and \textit{edge weights} $W: E \to \bbR$.
We let $n = |V|$ denote the number of nodes (WLOG, $V = [n] \coloneqq \Set{1, 2, \ldots, n}$), and let $m = |E|$ denote the number of edges.

\smallsection{Combinatorial optimization (CO).} 
We consider {CO} problems on graphs with discrete \textit{decisions} on nodes.
Each CO problem can be represented by a tuple $(f, \calC, d)$ with
(1) an \textit{optimization objective} $f: d^n \to \bbR_{+}$,
(2) \textit{constraints} defined by a \textit{feasible set} $\calC \subseteq d^n$,
and 
(3) a set of \textit{possible decisions} $d$ (on each $v \in V$).
Given decisions $X_v \in d$ with $v \in V$, we have a \textit{full decision} $X \in d^n$.

For each graph $G = (V, E, W)$,
we can use the optimization objective function $f$ to evaluate each full decision $X \in d^n$ on $G$ by $f(X; G)$, and we aim to \textit{solve} $\min_{X \in \calC(G)} f(X; G)$.
By default, we consider CO problems with \textit{binary} decisions (i.e., $d = \Set{0, 1}$).\footnote{We will discuss non-binary cases in Sec.~\ref{subsec:analy_conds:coloring}.}
Given $X \in \Set{0, 1}^{n}$,
we call each node $v$ with $X_v = 1$ a \textit{chosen node}, and
call $V_X \coloneqq \Set{v \in V: X_v = 1} \subseteq V$ the \textit{chosen subset} (i.e., the set of chosen nodes).

\subsection{Background: UL4CO}\label{subsec:background_related_work}
We shall introduce the background of unsupervised learning for combinatorial optimization (UL4CO), including the overall pipeline and some existing ideas/techniques.

\subsubsection{The UL4CO Pipeline: Erd\H{o}s Goes Neural}\label{subsubsec:EGN_pipeline}

The UL4CO pipeline, Erd\H{o}s Goes Neural~\citep{karalias2020erdos}, is based on the probabilistic method~\citep{erdos1974probabilistic} with three components:
objective construction,
differentiable optimization, and
derandomization.

\smallsection{Probabilistic objective construction.}
The high-level idea is to evaluate discrete objectives on a distribution of decisions, which accepts continuous parameterization.
Specifically, given a CO problem $(f: \Set{0, 1}^n \to \bbR, \calC, d=\Set{0, 1})$,
we first construct a \textit{penalized} objective $f_{\mathrm{pen}}(X) = f(X) + \beta \mathbb{1}(X \notin \calC)$ with 
\textit{constraint coefficient} $\beta > 0$.
Then, a \textit{probabilistic objective} $\Tilde{f}: [0, 1]^n \to \bbR$ accepting probabilistic (and thus continuous) inputs is constructed such that
\begin{center}
\vspace{-1mm}
   $\Tilde{f}(p) \geq 
\bbE_{X \sim p} f_{\mathrm{pen}}(X) = 
\bbE_{X \sim p} f(X) + \beta \Pr\nolimits_{X \sim p}[X \notin \calC].$
\vspace{-1mm}
\end{center}

We see each $p \in [0, 1]^n$ as a vector of probabilities, with $p_v$'s being \textit{independent Bernoulli} variables.
Hence, we have
$\begin{aligned}
\Pr\nolimits_{p}[X] &= \prod\nolimits_{v \in V_X} p_v \prod\nolimits_{u \in V \setminus V_X} (1 - p_u),\\
\bbE_{X \sim p} f(X) &= \sum\nolimits_{X \in \Set{0,1}^n} \Pr\nolimits_{p}[X] f(X), \text{~and~}\\
\Pr\nolimits_{X \sim p}[X \notin \calC] &= \sum\nolimits_{X \in \Set{0,1}^n \setminus \calC} \Pr\nolimits_{p}[X] = 1 - \sum\nolimits_{X \in \calC} \Pr\nolimits_{p}[X].
\end{aligned}$

\begin{remark}
    {\revise{Assuming independent Bernoulli variables gives simplicity and tractability, while other ways to model decision distributions, e.g., other distributions~\citep{karalias2022neural} and dependency between variables~\citep{sanokowski2023variational}, are potential
    future directions.}}
\end{remark}

\smallsection{Differentiable optimization.}
For \textit{differentiable} optimization, we need to ensure that $\tilde{f}$ is differentiable (w.r.t. $p$).
At this moment, let us assume we have constructed such a $\tilde{f}$.
Then, given a graph $G$, we can use differentiable optimization (e.g., gradient descent) to obtain optimized probabilities $p_{\mathrm{o}}$ with
(ideally) small $\Tilde{f}(p_{\mathrm{o}}; G)$.

\smallsection{Derandomization.}
Finally, \textit{derandomization} is used to obtain deterministic full decisions.
For each test instance $G$, 
the derandomization process transforms each $p_{\mathrm{o}} \in [0,1]^n$ obtained by {probabilistic} optimization into a \textit{discrete} (i.e., deterministic) full decision $X_p \in \Set{0,1}^n$.
\citenameyear{karalias2020erdos} showed a quality guarantee of derandomization by \textit{random sampling}.
See App.~\ref{app:background} for more details.

\subsubsection{Local Derandomization}\label{subsubsec:entrywise_concave}
The theoretical quality guarantee by \citet{karalias2020erdos} is obtained by random sampling, and we may need a large number of samplings (and good luck) to have a good bound.
\citet{wang2022unsupervised} further 
\revise{proved a \textit{deterministic} (i.e., not relying on random sampling) quality guarantee by \textit{iterative rounding} (i.e., a series of local derandomization along with a node enumeration).}
The principle of iterative rounding involves two concepts: (1) \textit{local derandomization} of probabilities $p$ and 
(2) \textit{entry-wise concavity} of probabilistic objective $\Tilde{f}$.

\smallsection{Local derandomization.}
Given $p \in [0,1]^n$, $i \in [n]$, and $x \in \Set{0, 1}$,
$\operatorname{der}(i, x; p) \in [0, 1]^n$ is the
result after $p_i$ being \textit{locally derandomized} as $x$, i.e.,
\begin{center}    
$\begin{cases}
 \operatorname{der}(i, x; p)_i = x,\\
 \operatorname{der}(i, x; p)_j = p_j, \forall j \neq i.
\end{cases}$
\end{center}

\smallsection{Entry-wise concavity.}
A probabilistic objective $\Tilde{f}: [0, 1] \to \bbR$ is \textit{entry-wise concave} if $\forall p \in [0,1]^n$ and  $i \in [n]$,
\begin{center}
\vspace{-1mm}
    $p_i \Tilde{f}(\operatorname{der}(i, 1; p)) + (1 - p_i) \Tilde{f}(\operatorname{der}(i, 0; p)) \leq \Tilde{f}(p)$.
    \vspace{-1mm}
\end{center}
Applying a series of local derandomization with an entry-wise concave objective $\tilde{f}$ 
does not increase the objective.
{\revise{Notably, \citet{karalias2020erdos} essentially proposed iterative rounding, and \citet{wang2022unsupervised} first formalized a theoretical guarantee of iterative rounding with the condition of entry-wise concavity.}}
See App.~\ref{app:background} for more details.

\section{Concretizing Targets: What Do We Need?}\label{sec:proposed_method}
{First, we concretize the targets for objective construction and derandomization} to guide our further derivations.

\subsection{Good objectives: Expectations are all you need}\label{subsec:idea:objective}

\smallsection{Good properties.}
We summarize some known \textit{good} properties of a probabilistic objective $\Tilde{f}$~\citep{karalias2020erdos, wang2022unsupervised}: 
\textbf{(P1)} $\Tilde{f}: [0, 1]^n \to \bbR$ accepts \textit{continuous} inputs $p \in [0, 1]^n$ (rather than discrete $X \in \Set{0, 1}^n$);
\textbf{(P2)} $\Tilde{f}$ is an \textit{upper bound} of the expectation of a penalized objective $f + \beta \mathbb{1}(X \notin \calC)$ for some $\beta > 0$;
\textbf{(P3)} $\tilde{f}$ is \textit{differentiable} w.r.t. $p$;
\textbf{(P4)} $\Tilde{f}$ is \textit{entry-wise concave} w.r.t. $p$;
\textbf{(P5)} $\Tilde{f}$ has the same minimum as $f$, i.e., $\min_{p} \Tilde{f}(p) = \min_{X} f(X)$
and $\arg\min_{p} \Tilde{f}(p) = \arg\min_{X} f(X)$.
The property (P5) has been discussed~\citep{karalias2020erdos,karalias2022neural,kollovieh2024expected} but has not been explicitly formalized for UL4CO.
With (P5), when we minimize $\Tilde{f}$, we also minimize 
the original objective $f$, which avoids meaningless $\Tilde{f}$, e.g., a constant function with a very high value (which satisfies (P1)-(P4) but not (P5)).
\begin{tcolorbox}[boxsep=0pt]
\small
\begin{tgt}[Good objectives]\label{target:principled_obj}
Given an optimization objective $f: \Set{0, 1}^n \to \bbR$ and constraints $X \in \calC$, 
we aim to construct a good probabilistic objective $\tilde{f}: [0, 1]^n \to \bbR$ to satisfy \textit{all the good properties} (P1)-(P5).
\end{tgt}
\end{tcolorbox}
Below, we show that a specific form of objectives satisfies all the good properties.
First, \textit{expectations are all you need}, i.e., any probabilistic objective that is the expectation of any discrete function satisfies properties
(P1), (P3), and (P4).

\begin{theorem}[Expectations are all you need]\label{thm:concave_exp_prob}
    For any $g: \Set{0, 1}^n \to \bbR$,
    $\tilde{g}: [0, 1]^n \to \bbR$ with $\Tilde{g}(p) = \bbE_{X \sim p} g(X)$
    is differentiable and entry-wise concave w.r.t. $p$.
\end{theorem}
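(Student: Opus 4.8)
The plan is to exploit the independence of the Bernoulli coordinates to show that, for each fixed index, $\tilde{g}$ is \emph{affine} in that coordinate; this single structural fact delivers both claimed properties at once. First I would fix an index $i \in [n]$, regard all coordinates $p_j$ with $j \neq i$ as frozen, and condition on the value of the $i$-th bit $X_i$. Since $X_i \sim \mathrm{Bernoulli}(p_i)$, the law of total expectation gives
\[
\tilde{g}(p) = \bbE_{X \sim p}[g(X)] = p_i\, \bbE_{X \sim p}[g(X) \mid X_i = 1] + (1 - p_i)\, \bbE_{X \sim p}[g(X) \mid X_i = 0].
\]

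The key step is to identify the two conditional expectations with local derandomizations. Because the coordinates are \emph{independent} Bernoulli variables, conditioning on $X_i = x$ leaves the joint law of the remaining coordinates untouched while pinning $X_i = x$, which is exactly the distribution encoded by $\der(i, x; p)$. Hence $\bbE_{X \sim p}[g(X) \mid X_i = x] = \tilde{g}(\der(i, x; p))$, and the previous display becomes
\[
\tilde{g}(p) = p_i\, \tilde{g}(\der(i, 1; p)) + (1 - p_i)\, \tilde{g}(\der(i, 0; p)).
\]

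Both conclusions then follow directly. Since $\tilde{g}(\der(i, 1; p))$ and $\tilde{g}(\der(i, 0; p))$ do not depend on $p_i$, the identity exhibits $\tilde{g}$ as an affine function of $p_i$; iterating this over all coordinates (equivalently, expanding $\Pr\nolimits_{p}[X]$ as defined in Sec.~\ref{subsubsec:EGN_pipeline}) presents $\tilde{g}$ as a multilinear polynomial in $p$, which is smooth, so differentiability holds. Entry-wise concavity is immediate, since the displayed identity is precisely the defining inequality met with \emph{equality} for every $p$ and every $i$ (an affine map being both concave and convex). The only point requiring genuine care --- and the main, if mild, obstacle --- is justifying the identification of conditioning on $X_i = x$ with the operator $\der(i, x; p)$: this is exactly where independence of the coordinates enters, and without it the affine-in-each-coordinate structure, and hence both properties, would fail.
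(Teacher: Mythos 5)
Your proof is correct and follows essentially the same route as the paper's: both arguments establish the key identity $\tilde{g}(p) = p_i\,\tilde{g}(\der(i,1;p)) + (1-p_i)\,\tilde{g}(\der(i,0;p))$ --- the paper by explicitly splitting the sum over $X$ according to whether $i \in V_X$, you by the law of total expectation plus independence, which is the same computation in probabilistic dress --- and then both read off entry-wise concavity (as an equality, since $\tilde{g}$ is affine in each coordinate) and differentiability from the multilinear polynomial structure. The one point you flag as delicate, identifying conditioning on $X_i = x$ with $\der(i,x;p)$, is indeed exactly where independence enters, and your justification of it is sound.
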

\begin{proof}
    See App.~\ref{app:proofs} for all the proofs.
\end{proof}
\begin{remark}
    Differentiability and entry-wise concavity are closed under addition.
Hence, a linear combination of expectations is also differentiable and entry-wise concave.
Also, probabilities are special expectations of indicator functions.
\revise{The differentiability of expectation may not hold when $p_v$'s are not independent Bernoulli variables, e.g., when the expectation is taken with Lovasz extension~\citep{bach2013learning}.}
\end{remark}

To further satisfy (P2) and (P5),
we only need to find a tight upper bound (TUB) of a penalized objective.
\begin{definition}[Tight upper bounds]\label{def:tight_ub}
    Given $g \colon \Set{0, 1}^n \to \bbR$,
    we say $\hat{g} \colon \Set{0, 1}^n \to \bbR$ is a \textit{tight upper bound} (TUB) of $g$, iff (i.e., if and only if)
    $\hat{g}(X) \geq g(X), \forall X$ with 
    $\min_X \hat{g}(X) = \min_X {g}(X)$ and
    $\arg \min_X \hat{g}(X) = \arg \min_X {g}(X)$, where $\arg \min_X g(X) = \Set{X^* \colon g(X^*) = \min_X {g}(X)}$.
\end{definition}
\begin{remark}\label{rem:tight_ub}
    It is easy to see that $\hat{g} = g$ is always a TUB of $g$.
    When $g = \mathbb{1}[X \notin \calC]$ is an indicator function for the violation of constraints $X \in \calC$, the condition in Def.~\ref{def:tight_ub} is equivalent to 
    $\hat{g}(X) \geq 1, \forall X \notin \calC$ and
    $\hat{g}(X) = 0, \forall X \in \calC$.   
\end{remark}

To conclude, we propose the following {concretized target} to construct \textit{the expectation of a tight upper bound}.
\begin{idea}[Construct the expectation of a TUB]\label{idea:exp_pr_good_obj}
    Given $f: \Set{0, 1}^n \to \bbR$ with constraints $X \in \calC$,
    let $g(X) = \mathbb{1}(X \notin \calC)$,
    we aim to find $\hat{f}_1, \hat{f}_2: \Set{0, 1}^n \to \bbR$ such that
    $\hat{f}_1$ is a TUB of $f$ and
    $\hat{f}_2$ is a TUB of $g$,
    and to construct a probabilistic objective
    $\tilde{f}(p) \coloneqq \bbE_{X \sim p}\hat{f}_{1}(X) + \beta \bbE_{X \sim p}\hat{f}_2(X)$ with $\beta > 0$.
\end{idea}

\subsection{Fast and effective derandomization: Do it in a greedy and incremental manner}\label{subsec:idea:derand}

\begin{tcolorbox}[boxsep=0pt]
\small
\begin{tgt}
[Fast and effective derandomization]\label{target:good_derand}
We aim to propose a derandomization scheme that is \textit{fast} in speed and \textit{effective} in generating high-quality solutions.
\end{tgt}
\end{tcolorbox}

\smallsection{Greedy.} 
To this end, we generalize \textit{greedy} algorithms to {greedy derandomization} and propose an \textit{incremental} scheme to improve the speed.
For greedy derandomization, 
starting from $p_{\mathrm{cur}} = p_{\mathrm{o}}$,
we repeat the following steps:

(1) we greedily find the best local derandomization, i.e.,
\begin{center}
    \vspace{-1.5mm}
    $(i^*, x^*) \gets \argmin_{(i, x) \in [n] \times \Set{0, 1}} \Tilde{f}(\operatorname{der}(i, x; p_{\mathrm{cur}}))$;
    \vspace{-1.5mm}
\end{center}

(2) we conduct the best derandomization, i.e., 
\begin{center}
    \vspace{-1.5mm}
    $p_{\mathrm{cur}} \gets \operatorname{der}(i^*, x^*; p_{\mathrm{cur}})$.
\end{center}

\begin{theorem}[Goodness of greedy derandomization]\label{thm:greedy_like_der_good}
    For any entry-wise concave $\Tilde{f}$ and any $p_{\mathrm{o}} \in [0, 1]^n$, the above process of greedy derandomization can always reach a point where the final $p_{\mathrm{final}}$ is 
    \textbf{(G1)} discrete (i.e., $p_{\mathrm{final}} \in \Set{0, 1}^n$),
    \textbf{(G2)} no worse than $p_{\mathrm{o}}$ (i.e., $\Tilde{f}(p_{\mathrm{final}}) \leq \Tilde{f}(p_{\mathrm{o}})$),
    and \textbf{(G3)} a local minimum (i.e., $\Tilde{f}(p_{\mathrm{final}}) \leq \min_{(i, x) \in [n] \times \Set{0, 1}} \Tilde{f}(\operatorname{der}(i, x; p_{\mathrm{final}}))$).
\end{theorem}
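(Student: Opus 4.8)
The plan is to treat the three claims separately, obtaining (G2) and (G3) almost immediately from entry-wise concavity and reserving the real work for (G1) together with a termination argument.

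First I would record the single inequality that drives everything. Fix any $p$ and any coordinate $i$. Entry-wise concavity gives $p_i \tilde{f}(\der(i,1;p)) + (1-p_i)\tilde{f}(\der(i,0;p)) \le \tilde{f}(p)$, and since the left-hand side is a convex combination of the two numbers $\tilde{f}(\der(i,0;p))$ and $\tilde{f}(\der(i,1;p))$, their minimum is at most that convex combination. Hence $\min_{x \in \Set{0,1}} \tilde{f}(\der(i,x;p)) \le \tilde{f}(p)$ for every $i$, and taking the minimum over $i$ as well shows the greedy step never increases the objective. Property (G2) then follows by induction along the run, and (G3) is simply the statement that the run stops at a point where this greedy minimum is no smaller than $\tilde{f}(p_{\mathrm{final}})$ --- i.e.\ a fixed point of the greedy step, which is exactly the natural stopping criterion.

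The crux is (G1): showing the run can be driven to a fully discrete point without ever increasing $\tilde{f}$. Here I would first observe that $\der$ only ever sets a coordinate to a value in $\Set{0,1}$ and leaves the others untouched, so every point reachable from $p_{\mathrm{o}}$ has each coordinate $j$ lying in $\Set{p_{\mathrm{o},j}, 0, 1}$; the reachable set is therefore finite, and since $\tilde{f}$ is non-increasing along the run it can strictly decrease only finitely many times. Let $k(p)$ denote the number of already-discrete coordinates, which is non-decreasing along any run. The key lemma I would prove is that at any no-strict-improvement state that still has a fractional coordinate $i$ (so $p_{\mathrm{cur},i} \in (0,1)$), derandomizing $i$ is ``free'': since no move strictly improves, both $\tilde{f}(\der(i,0;p_{\mathrm{cur}}))$ and $\tilde{f}(\der(i,1;p_{\mathrm{cur}}))$ are at least $\tilde{f}(p_{\mathrm{cur}})$, yet their convex combination with the genuine weights $p_{\mathrm{cur},i}, 1-p_{\mathrm{cur},i} \in (0,1)$ is at most $\tilde{f}(p_{\mathrm{cur}})$, forcing both to equal $\tilde{f}(p_{\mathrm{cur}})$. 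Thus such a coordinate can be derandomized as a greedy-optimal move that raises $k$ without changing the objective.

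Putting these together, I would argue termination via the lexicographic potential $(\tilde{f}(p_{\mathrm{cur}}), -k(p_{\mathrm{cur}}))$: every step either strictly decreases $\tilde{f}$ (possible only finitely often, by finite reachability) or, at a no-strict-improvement state with a fractional coordinate, strictly increases $k$ (possible at most $n$ times). Hence after finitely many steps we reach a state with $k = n$, which is discrete (G1), admitting no strictly improving greedy move, which gives (G3); and (G2) has held throughout. The main obstacle I anticipate is precisely this last part --- ruling out endless cycling among equal-objective states and guaranteeing we actually land on a fully discrete point --- which is why the finite-reachability observation and the ``free derandomization of a flat fractional coordinate'' lemma are essential; the convex-combination bound underlying (G2) and (G3) is by comparison routine.
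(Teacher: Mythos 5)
Your proposal is correct and takes essentially the same route as the paper's proof: both hinge on the single consequence of entry-wise concavity that $\min_{x \in \Set{0,1}} \tilde{f}(\der(i,x;p)) \leq \tilde{f}(p)$ for every coordinate $i$, so that a fractional coordinate can always be derandomized without increasing the objective, which yields (G1)--(G3). The difference is one of rigor rather than of route: where the paper simply asserts that greedy improvement ``can always terminate at a discrete point,'' you justify termination explicitly via the free-derandomization lemma at no-strict-improvement states and the lexicographic potential $(\tilde{f}(p_{\mathrm{cur}}), -k(p_{\mathrm{cur}}))$, thereby ruling out cycling among equal-objective states --- a point the paper's own proof leaves implicit.
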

\begin{remark}
    Our two theorems are synergic.
    Specifically,     Theorem~\ref{thm:concave_exp_prob} guarantees an entry-wise concave probabilistic objective $\tilde{f}$, which is used as a condition in Theorem~\ref{thm:greedy_like_der_good}.
\end{remark}
Greedy derandomization improves upon the existing derandomization methods.
Specifically,
random sampling~\citep{karalias2020erdos} guarantees (G1),
and iterative rounding~\citep{wang2022unsupervised} guarantees (G1) and (G2).
However, challenges arise regarding the time complexity since a naive way requires $2n$ evaluations of $\tilde{f}$ at each step.

\smallsection{Incremental.}
To this end, we propose to conduct the derandomization in an \textit{incremental} manner to increase the speed, which gives the following target.
Our intuition is that, usually, the incremental differences are simpler than the whole function, and the computation of incremental differences is easily parallelizable.
\begin{idea}[Conduct incremental greedy derandomization]\label{idea:greedy_fast}
We conduct \textit{greedy derandomization} and 
improve the speed by deriving the \textit{incremental differences} (IDs)
$\Delta \Tilde{f}(i, x, p_{\mathrm{cur}}) \coloneqq \Tilde{f}(\operatorname{der}(i, x; p_{\mathrm{cur}})) - \Tilde{f}(p_{cur})$ for all the $(i, x)$ pairs,
instead of evaluating the ``whole'' function, i.e., $\Tilde{f}(\operatorname{der}(i, x; p_{\mathrm{cur}}))$'s.
\end{idea}

\section{Deriving Formulae to Meet the Targets}\label{sec:analy_conds}
The targets in Sec.~\ref{sec:proposed_method} provide us guidelines, while deriving objectives and derandomization to meet those targets is nontrivial.
{In this work, we focus on \ours (\bus{U}supervised \bus{Com}binatorial Optimization \bus{U}nder \revise{\bus{Com}monly-involved} Conditions) and baptize our method with the same name.}
{For various conditions that are commonly involved in different CO problems (see Sec.~\ref{sec:problems} and App.~\ref{app:more_problems}), we shall derive 
(1) TUB-based probabilistic objectives $\Tilde{f}$ to meet Target~\ref{idea:exp_pr_good_obj} and
(2) incremental differences (IDs) of $\Tilde{f}$ to meet Target~\ref{idea:greedy_fast}.}
{Some conditions were encountered in the existing works but were not properly handled within the probabilistic UL4CO pipeline. 
See more discussions in App.~\ref{subapp:conditions_wrong}.}

We tackle each condition using the template below. 
Note that deriving TUB and IDs for each condition requires distinct, non-trivial ideas.
\begin{tcolorbox}[boxsep=0pt]
\small
\textbf{Construct a probabilistic objective to meet Target~\ref{idea:exp_pr_good_obj}:}
\vspace{-0.7em}
\begin{itemize}[leftmargin=*]
\setlength{\itemsep}{-0.5em}
    \item (\textbf{S1-1}) We find a TUB $\hat{f}$ for the condition
    \item[\text{i.e.}] Given an optimization objective $f$, we find $\hat{f}$ s.t. $\hat{f}(X) \geq f(X), \forall X$ with $\min_X \hat{f}(X) = \min_X f(X)$
    and
    $\arg\min_X \hat{f}(X) = \arg\min_X f(X)$
    \item[\textsc{or}] Given a constraint $X \in \calC$, we find $\hat{f}$ s.t. $\hat{f}(X) \geq \mathbb{1}(X \notin \calC), \forall X$ with $\hat{f}(X) = 0, \forall X \in \calC$
    \item (\textbf{S1-2}) After finding $\hat{f}$, we derive $\Tilde{f}(p) \coloneqq \bbE_{X \sim p} \hat{f}(X)$
\end{itemize}
\vspace{-0.5em}
    \textbf{Derive derandomization to meet Target~\ref{idea:greedy_fast}:}
    \vspace{-0.7em}
    \begin{itemize}[leftmargin=*]
\setlength{\itemsep}{-0.5em}
    \item (\textbf{S2}) We derive the formula of IDs $\Delta \Tilde{f}(\der(i, x; p))$
    \end{itemize}    
\end{tcolorbox}

\revise{
The conditions to be tackled below have both theoretical and empirical values.
Specifically, they are mathematically hard to handle for probabilistic UL4CO, and are commonly involved in many CO problems.}

\subsection{Cardinality constraints}\label{subsec:analy_conds:card}
\smallsection{Definition.} 
We consider constraints $X \in \calC$ with $\calC = \Set{X: \Abs{V_X} \in C_{c}}$.
Some typical cases are
$C_{c} = \Set{k}$ or
$C_{c} = \Set{t \in \bbN \colon t \leq k}$
for some $k \in \bbN$ ~\citep{buchbinder2014submodular}.

Given $p \in [0, 1]^n$, 
$\Abs{V_X} = \sum_{i \in V} X_i$ (see Sec.~\ref{sec:prelim}) follows a Poisson binomial distribution $\PB(p_1, p_2, \ldots, p_n)$ with parameters $(p_i)_{i \in [n]}$~\citep{wang1993number}.
The probability mass function (PMF) is for each $0 \leq t \leq n$,
\begin{center}
    $\Pr_{X \sim p}[\Abs{V_X} = t] = \sum_{V_t \subseteq V: \Abs{V_t} = t} \prod_{i \in V_t} p_i \prod_{j \in V \setminus V_t} (1-p_j)$.   
\end{center}

\smallsection{(S1-1).}
We find 
$\hat{f}_{\mathrm{card}}(X; C_c) \coloneqq \min_{k \in C_c} \Abs{\Abs{V_X} - k}$,
\revise{i.e., the minimum distance to the feasible cardinality set $C_c$.}
\begin{lemma}\label{lem:card_tight_ub}
    $\hat{f}_{\mathrm{card}}$ is a TUB of $\mathbb{1}[X \notin \calC]$.
\end{lemma}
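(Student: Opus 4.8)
The plan is to invoke Remark~\ref{rem:tight_ub}, which reduces the TUB condition for an indicator target $g = \mathbb{1}[X \notin \calC]$ to two checks: $\hat{f}_{\mathrm{card}}(X) = 0$ for all $X \in \calC$, and $\hat{f}_{\mathrm{card}}(X) \geq 1$ for all $X \notin \calC$. So I would verify these two facts separately, treating the feasible and infeasible cases in turn. Throughout, I would use that $\Abs{V_X} = \sum_{i \in V} X_i$ is a nonnegative integer and that $C_c \subseteq \bbN$, so every $k \in C_c$ is an integer as well; I would also assume $C_c \neq \emptyset$ so that the minimum defining $\hat{f}_{\mathrm{card}}$ is well-defined.

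For the feasible case, suppose $X \in \calC$, i.e., $\Abs{V_X} \in C_c$. Then the particular choice $k = \Abs{V_X}$ is admissible in the minimization, giving $\Abs{\Abs{V_X} - k} = 0$. Since each summand $\Abs{\Abs{V_X} - k}$ is nonnegative, the minimum over $k \in C_c$ is exactly $0$, so $\hat{f}_{\mathrm{card}}(X; C_c) = 0$, as required.

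For the infeasible case, suppose $X \notin \calC$, i.e., $\Abs{V_X} \notin C_c$. Then for every $k \in C_c$ we have $\Abs{V_X} \neq k$, so $\Abs{V_X} - k$ is a nonzero integer, whence $\Abs{\Abs{V_X} - k} \geq 1$. Taking the minimum over all $k \in C_c$ preserves this lower bound, giving $\hat{f}_{\mathrm{card}}(X; C_c) \geq 1$. Combining the two cases with Remark~\ref{rem:tight_ub} yields that $\hat{f}_{\mathrm{card}}$ is a TUB of $\mathbb{1}[X \notin \calC]$.

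The argument is short, and the only genuinely load-bearing observation — the one place where I expect the main (if modest) subtlety — is the \emph{integrality} used in the infeasible case: it is precisely because both $\Abs{V_X}$ and every $k \in C_c$ are integers that a nonzero difference is forced to have absolute value at least $1$, rather than merely being strictly positive. Were $C_c$ allowed to contain non-integers, the lower bound of $1$ would fail and one would only recover the weaker strict inequality $\hat{f}_{\mathrm{card}}(X) > 0$, which is insufficient for the TUB condition. I would therefore make the integrality of $\Abs{V_X}$ and of the elements of $C_c$ explicit at the outset.
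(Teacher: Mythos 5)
Your proposal is correct and follows essentially the same route as the paper's proof: a two-case analysis (feasible vs.\ infeasible) reducing the TUB condition via Remark~\ref{rem:tight_ub}, with the integrality of $\Abs{V_X}$ and the elements of $C_c$ supplying the lower bound of $1$ in the infeasible case, exactly as the paper does. Your explicit flagging of the integrality step and of the assumption $C_c \neq \emptyset$ is a minor presentational refinement, not a different argument.
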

\begin{remark}
    {We can directly compute $\Pr_{X \sim p}[\Abs{V_X} \notin C_c]$, but the formula we use practically performs better, which distinguishes different levels of violations. See similar ideas by, e.g., \cite{poganvcic2019differentiation}.}
\end{remark}

\smallsection{(S1-2).}
We derive $\Tilde{f}_{\mathrm{card}}(p; C_c) \coloneqq \bbE_{X \sim p} \hat{f}_{\mathrm{card}}(X; C_c)
= \sum_{t \in [n] \setminus C_c} \Pr_{X \sim p}[\Abs{V_X} = t] \min_{k \in C_c} \Abs{t - k}$.
The main technical difficulty is computing the PMF of a Poisson binomial distribution, for which
we adopt a discrete-Fourier-transform-based method.
The main formula of $\Pr_{X \sim p}[\Abs{V_X} = t]$ (See Eq. (6) by \citet{hong2013computing}) is
\begin{center}       
$\frac{1}{n+1} \sum_{s = 0}^n \exp(-\mathbf{i}\omega s t) \prod_{j = 1}^n (1 - p_j + p_j \exp(\mathbf{i}\omega s))$,
\end{center}
where $\mathbf{i} = \sqrt{-1}$ and $\omega =  \frac{2\pi}{n + 1}$.
See App.~\ref{subapp:technical_details:fourier_PB} for more details.

\smallsection{(S2).} 
We derive the IDs of $\tilde{f}_{\mathrm{card}}$, \revise{using the recursive formula of the Poisson binomial distribution.}
\begin{lemma}[IDs of $\Tilde{f}_{\mathrm{card}}$]\label{lem:card_incre}
    For any $p \in [0, 1]^n$, $i \in [n]$, and $0 \leq t \leq n$,
    let $q_s \coloneqq \Pr_{X \sim p}[\Abs{V_X} = s]$
    and $q'_s \coloneqq \Pr_{X \sim p}[\Abs{V_X \setminus \Set{i}} = s], \forall s$,    we have
    \setlength{\jot}{0pt}    
    \begin{align}    
    q'_t &= (1 - p_i)^{-1} \sum\nolimits_{s = 0}^t q_s \left(\frac{p_i}{p_i-1}\right)^{t-s} \text{~(if~ $p_i \neq 1$)} \label{eq:card_pi_neq_1} \\
    &= (p_i)^{-1} \sum\nolimits_{s = 0}^{n - t - 1} q_{t+s+1} \left(\frac{p_i-1}{p_i}\right)^{s} \text{~(if~ $p_i \neq 0$)}. \label{eq:card_pi_neq_0}
    \end{align}    
    Based on that, we have
    \begin{center}        
    $
    \begin{cases} 
    \Delta \Tilde{f}_{\mathrm{card}}(i, 0, p; C_c) = 
    \sum_{t \in [n] \setminus C_c} (q'_t - q_t) \min_{k \in C_c} \Abs{t - k}, \\
    \Delta \Tilde{f}_{\mathrm{card}}(i, 1, p; C_c) = 
    \sum_{t \in [n] \setminus C_c} (q'_{t-1} - q_t) \min_{k \in C_c} \Abs{t - k}.
    \end{cases}
    $
    \end{center}
\end{lemma}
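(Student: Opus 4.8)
The plan is to reduce everything to a single one-step convolution identity for the Poisson binomial distribution and then invert it in two directions. Writing $\Abs{V_X} = \Abs{V_X \setminus \Set{i}} + X_i$ and using that $X_i \sim \mathrm{Bernoulli}(p_i)$ is independent of the sum $\sum_{j \neq i} X_j$ (whose PMF is exactly $q'$), I would condition on $X_i$ to obtain the fundamental recursion
$$q_t = (1-p_i)\, q'_t + p_i\, q'_{t-1}, \qquad 0 \le t \le n,$$
with the boundary conventions $q'_{-1} = 0$ and $q'_n = 0$ (the latter since $\Abs{V_X \setminus \Set{i}} \le n-1$). Both closed forms in the lemma, and then the IDs, follow from this one relation.

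For Eq.~\eqref{eq:card_pi_neq_1}, I would treat the recursion as a forward (increasing-$t$) linear recursion: when $p_i \neq 1$ it rearranges to $q'_t = (1-p_i)^{-1}(q_t - p_i q'_{t-1})$, which unrolls from the base case $q'_{-1}=0$. Rather than grinding through the unrolling, I would verify that the claimed closed form with $\rho \coloneqq p_i/(p_i-1)$ solves it: substituting $q'_t = (1-p_i)^{-1}\sum_{s=0}^t q_s \rho^{t-s}$ into $(1-p_i)q'_t + p_i q'_{t-1}$ and using $p_i/(1-p_i) = -\rho$ makes the two shifted sums cancel all but the $s=t$ term, leaving $q_t$. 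For Eq.~\eqref{eq:card_pi_neq_0}, I would run the same recursion backward: when $p_i \neq 0$ it rearranges to $q'_{t-1} = p_i^{-1}(q_t - (1-p_i)q'_t)$, unrolled from the base case $q'_n = 0$; again I would verify the closed form with $\sigma \coloneqq (p_i-1)/p_i$ directly, where $(1-p_i)/p_i = -\sigma$ produces the same cancellation.

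For the incremental differences, the key observation is how local derandomization reshapes the cardinality PMF. Since $\der(i,0;p)$ forces $X_i = 0$, we have $\Abs{V_X} = \Abs{V_X \setminus \Set{i}}$ under $\der(i,0;p)$, so its cardinality PMF is precisely $q'_t$; symmetrically $\der(i,1;p)$ forces $X_i = 1$, shifting the PMF to $q'_{t-1}$. Writing $w_t \coloneqq \min_{k \in C_c}\Abs{t-k}$ and recalling $\Tilde{f}_{\mathrm{card}}(p;C_c) = \sum_t q_t w_t$ (with $w_t = 0$ for $t \in C_c$, so the sum effectively runs over $t \in [n] \setminus C_c$), I would subtract $\Tilde{f}_{\mathrm{card}}(p;C_c)$ from $\Tilde{f}_{\mathrm{card}}(\der(i,x;p);C_c)$ for $x \in \Set{0,1}$ to get $\sum_{t \in [n]\setminus C_c}(q'_t - q_t)w_t$ and $\sum_{t \in [n]\setminus C_c}(q'_{t-1}-q_t)w_t$ respectively, which are exactly the claimed IDs.

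The computations are largely mechanical; the only real care is in the boundary handling. The main point to get right is that the same recursion admits two different closed-form inverses depending on whether one unrolls upward from $q'_{-1}=0$ or downward from $q'_n=0$ — both are legitimate precisely because $q'$ is supported on $\Set{0, \ldots, n-1}$, so the two boundary conditions hold simultaneously. Choosing the correct termination index ($t$ versus $n-t-1$) in each sum, and confirming that the cancellation leaves exactly $q_t$ with no stray boundary term, is where I would be most careful.
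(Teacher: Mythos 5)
Your proof is correct and takes essentially the same route as the paper: both arguments rest on the one-step convolution identity $q_t = (1-p_i)\,q'_t + p_i\,q'_{t-1}$ (the paper's Eq.~\eqref{eq:poi_bin_rmv_node}, obtained by conditioning on $X_i$), solved forward from $q'_{-1}=0$ to get Eq.~\eqref{eq:card_pi_neq_1} and backward from $q'_n=0$ to get Eq.~\eqref{eq:card_pi_neq_0}; your telescoping verification of the closed forms is just the paper's induction run in reverse. Your explicit treatment of the ID step (derandomizing entry $i$ to $0$ or $1$ replaces the cardinality PMF by $q'_t$ or $q'_{t-1}$, respectively) fills in a step the paper's proof states without detail, but it is a completion rather than a divergence.
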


\begin{remark}\label{rem:num_stab}
    {In practice, we always make sure each $p_i \in [\epsilon, 1 - \epsilon]$ for some small $\epsilon > 0$ for better numerical stability.}
    We use Eq.~\eqref{eq:card_pi_neq_1} for $p_i \leq 0.5$ and
    Eq.~\eqref{eq:card_pi_neq_0} for $p_i > 0.5$.
\end{remark}

\smallsection{Notes.}
{\revise{Enforcing cardinality constraints (e.g., taking top-$k$) is easy, but \textit{differentiable} training with cardinality constraints is nontrivial. Other than probabilistic-method UL4CO, Sinkhorn-related techniques~\citep{sinkhorn1967concerning, wang2022towards} are valid ways. See also App.~\ref{subapp:conditions_wrong}.}}

\subsection{Minimum (or maximum) w.r.t. a subset}\label{subsec:analy_conds:opt_wrt_subset}
\smallsection{Definition.}
We consider constraints where we have a pairwise score function (e.g., distance) $h: V \times V \to \bbR$
and we aim to compute $f_{\mathrm{ms}}(X) \coloneqq \min_{v_X \in V_X} h(i, v_X)$ for some $i \in V$ (e.g., the shortest distance to a set of points).

We fix $i \in V$ in the analysis below, and
let $v_1, v_2, \ldots, v_n$ be a permutation of $V = [n]$ such that    
$d_1 \leq d_2 \leq \cdots \leq d_n$,
where $d_j = h(i, v_j), \forall j \in [n]$.

\smallsection{(S1-1).}
We find $\hat{f}_{\mathrm{ms}}(X; i, h) \coloneqq \min_{v_X \in V_X} h(i, v_X)$, \revise{which is the original objective $f_{\mathrm{ms}}$.}
\begin{lemma}\label{lem:optim_tight_ub}
    $\hat{f}_{\mathrm{ms}}$ is a TUB of ${f}_{\mathrm{ms}}$.
\end{lemma}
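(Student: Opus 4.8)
The plan is to observe that, in contrast to the cardinality case where a genuine surrogate had to be engineered, the proposed bound $\hat{f}_{\mathrm{ms}}$ coincides \emph{verbatim} with the original objective: both $f_{\mathrm{ms}}(X)$ and $\hat{f}_{\mathrm{ms}}(X; i, h)$ are defined as $\min_{v_X \in V_X} h(i, v_X)$. Hence the claim should reduce to the trivial instance of a tight upper bound already noted in Remark~\ref{rem:tight_ub}, namely that every function is a TUB of itself, and the only task is to confirm that the three conditions of Definition~\ref{def:tight_ub} hold.

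Concretely, I would instantiate Definition~\ref{def:tight_ub} with $g = f_{\mathrm{ms}}$ and $\hat{g} = \hat{f}_{\mathrm{ms}}$ and check the conditions in order. Since $\hat{f}_{\mathrm{ms}}(X) = f_{\mathrm{ms}}(X)$ for every $X \in \Set{0,1}^n$, the pointwise bound $\hat{f}_{\mathrm{ms}}(X) \geq f_{\mathrm{ms}}(X)$ holds (with equality); because the two functions are literally identical, their minimum values agree, $\min_X \hat{f}_{\mathrm{ms}}(X) = \min_X f_{\mathrm{ms}}(X)$, and so do their minimizer sets, $\arg\min_X \hat{f}_{\mathrm{ms}}(X) = \arg\min_X f_{\mathrm{ms}}(X)$. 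All three requirements are met, which establishes the lemma.

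The single point that warrants care is the convention for the empty chosen subset, where $V_X = \emptyset$ and $\min_{v_X \in V_X} h(i, v_X)$ is a minimum over the empty set; I would fix one convention (e.g., treating it as $+\infty$, or excluding such $X$ via the problem's feasibility constraints) and apply it uniformly to both $f_{\mathrm{ms}}$ and $\hat{f}_{\mathrm{ms}}$. Because the same convention governs both functions, it cannot disturb any of the three conditions, so this is a bookkeeping matter rather than a genuine obstacle. In short, there is no substantive difficulty in this lemma: the real work for the minimum-over-subset condition lies not in step (S1-1) but in computing the expectation $\tilde{f}_{\mathrm{ms}}(p) = \bbE_{X \sim p}\hat{f}_{\mathrm{ms}}(X)$ in step (S1-2) and its incremental differences in step (S2), where the sorted order $d_1 \leq d_2 \leq \cdots \leq d_n$ introduced just before the lemma will be exploited.
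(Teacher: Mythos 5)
Your proposal is correct and follows essentially the same route as the paper: the paper's proof simply invokes Remark~\ref{rem:tight_ub} (every function is a TUB of itself) together with the identity $\hat{f}_{\mathrm{ms}} = f_{\mathrm{ms}}$, which is exactly your argument, just stated more tersely. Your explicit verification of the three conditions in Definition~\ref{def:tight_ub} and the remark on the empty-subset convention are harmless elaborations of the same idea.
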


\smallsection{(S1-2).} 
We derive $\Tilde{f}_{\mathrm{ms}}(p; i, h) \coloneqq \bbE_{X \sim p} \hat{f}_{\mathrm{ms}}(X; i, h)$, \revise{by decomposing the objective into sub-terms.}
\begin{lemma}\label{lem:optim_exp}
    For any $p \in [0, 1]^n$, 
    $\bbE_{X \sim p} \hat{f}_{\mathrm{ms}}(X; i, h) = p_{v_1} d_1 + (1 - p_{v_1})p_{v_2} d_2 + \cdots + (\prod_{j = 1}^{n-1} (1-p_{v_j}))p_{v_{n}} d_n$.
\end{lemma}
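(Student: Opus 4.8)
The plan is to decompose the expectation according to which chosen node attains the minimum score. The key structural observation is that, because $v_1, v_2, \ldots, v_n$ are sorted so that $d_1 \leq d_2 \leq \cdots \leq d_n$, on any full decision $X$ with $V_X \neq \emptyset$ the value $\hat{f}_{\mathrm{ms}}(X; i, h) = \min_{v_X \in V_X} h(i, v_X)$ equals $d_j$, where $j$ is the \emph{smallest} index with $v_j \in V_X$. Indeed, $v_j \in V_X$ contributes the score $d_j$, while every other chosen node $v_{j'} \in V_X$ has $j' > j$ and hence $d_{j'} \geq d_j$; so $d_j$ is the minimum over the chosen scores.

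First, I would define for each $j \in [n]$ the event $A_j \coloneqq \Set{v_j \in V_X \text{ and } v_l \notin V_X \text{ for all } l < j}$, i.e., $v_j$ is the first chosen node in the sorted order. These $n$ events are pairwise disjoint, and their union is exactly $\Set{V_X \neq \emptyset}$. Next, using that the $p_v$'s are independent Bernoulli variables acting on distinct coordinates, I would compute $\Pr_{X \sim p}[A_j] = \left(\prod_{l=1}^{j-1}(1 - p_{v_l})\right) p_{v_j}$, since $A_j$ requires $v_1, \ldots, v_{j-1}$ all unchosen (probability $\prod_{l<j}(1-p_{v_l})$) and $v_j$ chosen (probability $p_{v_j}$), and these are independent. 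Combining with the structural observation, since $\hat{f}_{\mathrm{ms}}(X) = d_j$ on $A_j$ and the events partition $\Set{V_X \neq \emptyset}$, I would conclude $\bbE_{X \sim p}\hat{f}_{\mathrm{ms}}(X; i, h) = \sum_{j=1}^n d_j \Pr_{X \sim p}[A_j] = \sum_{j=1}^n \left(\prod_{l=1}^{j-1}(1-p_{v_l})\right) p_{v_j}\, d_j$, which is exactly the claimed sum.

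The main subtlety, rather than a genuine obstacle, is the empty-set case. The stated right-hand side has probability weights summing to $1 - \prod_{l=1}^n (1-p_{v_l})$, with the missing mass $\prod_{l=1}^n (1-p_{v_l})$ being precisely $\Pr_{X \sim p}[V_X = \emptyset]$; the formula therefore silently assigns the objective value $0$ on the empty decision. I would make this explicit by adopting the convention $\min_{\emptyset} = 0$ (or by noting that the relevant feasibility constraints exclude $V_X = \emptyset$), so that the empty-set term contributes nothing and the disjoint-event expansion above yields exactly the stated identity. Everything else reduces to the disjoint-event decomposition and the independence of the coordinates, so no further machinery is needed.
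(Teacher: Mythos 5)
Your proof is correct and follows essentially the same route as the paper: both decompose the expectation over which node is the first chosen one in the sorted order (equivalently, which value the minimum attains) and compute each event's probability as $\bigl(\prod_{l<j}(1-p_{v_l})\bigr)p_{v_j}$ using independence. If anything, your explicit handling of the $V_X=\emptyset$ case (and the implicit convention that the objective is $0$ there) is slightly more careful than the paper's proof, which passes over this point silently.
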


\smallsection{(S2).}
We derive the IDs of $\Tilde{f}_{\mathrm{ms}}$, \revise{by analyzing which sub-terms are changed after one step of local derandomization.}

\begin{lemma}[IDs of $\Tilde{f}_{\mathrm{ms}}$]\label{lem:opt_incre}
    For any $p \in [0, 1]^n$ 
    and $j \in [n]$,     
    let $q_j \coloneqq (\prod_{k = 1}^{j-1} (1-p_{v_{k}}))p_{v_{j}}$, the coefficient of $d_j$ in $\tilde{f}_{\mathrm{ms}}$. Then
    \begin{center}        
    $
    \begin{cases} 
    \Delta \Tilde{f}_{\mathrm{ms}}(v_j, 0, p; i, h) = - q_j d_j 
     + \frac{p_{v_j}}{1-p_{v_j}}\sum_{j' > j} q_{j'}d_{j'},\\
     \Delta \Tilde{f}_{\mathrm{ms}}(v_j, 1, p; i, h) = 
    \sum_{j' > j} q_{j'}(d_j - d_{j'}).
    \end{cases}
    $
    \end{center}
\end{lemma}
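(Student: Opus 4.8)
The plan is to start from the closed form in Lemma~\ref{lem:optim_exp} and write $\Tilde{f}_{\mathrm{ms}}(p; i, h) = \sum_{j=1}^n q_j d_j$, where $q_j = \big(\prod_{k=1}^{j-1}(1-p_{v_k})\big) p_{v_j}$ is exactly the probability that $v_j$ is the smallest-score chosen node (the first chosen node in the sorted order $d_1 \le \cdots \le d_n$). The structural observation that drives the whole computation is that, among the coefficients $q_{j'}$, the local derandomization $\der(v_j, x; p)$ affects only those with $j' \ge j$: for $j' < j$ the coefficient $q_{j'}$ depends only on $p_{v_1}, \ldots, p_{v_{j'}}$, hence not on $p_{v_j}$. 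So each incremental difference is the sum over $j' \ge j$ of (the change in $q_{j'}$)$\,\times d_{j'}$, and I only need to track how the single entry $p_{v_j}$ enters $q_j$ (as a multiplicative factor $p_{v_j}$) and each $q_{j'}$ with $j' > j$ (as a multiplicative factor $(1-p_{v_j})$).

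For $x = 0$, setting $p_{v_j} = 0$ removes the factor $p_{v_j}$ in $q_j$, so $q_j \mapsto 0$ and this term contributes $-q_j d_j$. For each $j' > j$ the factor $(1-p_{v_j})$ in $q_{j'}$ becomes $1$, so $q_{j'} \mapsto q_{j'}/(1-p_{v_j})$, a net change of $q_{j'}\,p_{v_j}/(1-p_{v_j})$; summing over $j' > j$ and pulling out the common factor $p_{v_j}/(1-p_{v_j})$ yields the first formula. This case is a routine factor adjustment and goes through cleanly, with $p_{v_j}\ne 1$ guaranteed by Remark~\ref{rem:num_stab}.

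The case $x = 1$ is the crux. Setting $p_{v_j} = 1$ makes $v_j$ surely chosen: the factor $p_{v_j}$ in $q_j$ becomes $1$, so $q_j \mapsto \prod_{k=1}^{j-1}(1-p_{v_k})$, while every $q_{j'}$ with $j' > j$ carries the now-vanishing factor $(1-p_{v_j}) = 0$ and drops out. The raw difference is thus $\big(\prod_{k=1}^{j-1}(1-p_{v_k}) - q_j\big) d_j - \sum_{j'>j} q_{j'} d_{j'}$, and since $\prod_{k=1}^{j-1}(1-p_{v_k}) - q_j = \prod_{k=1}^{j}(1-p_{v_k})$ the coefficient of $d_j$ collapses to the prefix product $\prod_{k=1}^{j}(1-p_{v_k})$. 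To reach the compact form $\sum_{j'>j} q_{j'}(d_j - d_{j'})$ I would invoke the telescoping identity $\sum_{j'>j} q_{j'} = \prod_{k=1}^{j}(1-p_{v_k}) - \prod_{k=1}^{n}(1-p_{v_k})$. Here lies the main obstacle: the coefficient of $d_j$ equals $\sum_{j'>j} q_{j'}$ only after discarding the residual all-unchosen mass $\prod_{k=1}^n (1-p_{v_k})$, which is precisely the event carrying no $d$-term in Lemma~\ref{lem:optim_exp}. Pinning down this empty-set convention and checking it is harmless (e.g.\ under cardinality/covering constraints that force the chosen set to be nonempty, or in the $\epsilon$-bounded regime of Remark~\ref{rem:num_stab}) is the one genuinely delicate step, in contrast to the purely mechanical $x=0$ case.
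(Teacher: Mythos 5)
Your proposal takes essentially the same route as the paper's proof: expand $\Tilde{f}_{\mathrm{ms}}$ via Lemma~\ref{lem:optim_exp} as $\sum_{j'} q_{j'} d_{j'}$, observe that $\der(v_j, x; p)$ touches only the coefficients $q_{j'}$ with $j' \ge j$, and track the multiplicative factors entry by entry. Your $x = 0$ computation coincides with the paper's and is exact.

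The point you call the crux deserves explicit confirmation: the ``obstacle'' you found in the $x=1$ case is real, and the paper's own proof does not escape it. Your raw difference $\left(\prod_{k=1}^{j}(1-p_{v_k})\right) d_j - \sum_{j'>j} q_{j'} d_{j'}$ is correct, and since $\sum_{j'>j} q_{j'} = \prod_{k=1}^{j}(1-p_{v_k}) - \prod_{k=1}^{n}(1-p_{v_k})$, it agrees with the stated formula $\sum_{j'>j} q_{j'}(d_j - d_{j'})$ only up to the extra term $\left(\prod_{k=1}^{n}(1-p_{v_k})\right) d_j$, i.e., the all-unchosen mass times $d_j$. The paper's proof silently drops exactly this term: in its $x=1$ chain it rewrites $\prod_{k'=1}^{j-1}(1-p_{v_{k'}})\, d_j$ as $\sum_{j' \ge j} q_{j'}\, d_j$, an identity that holds only when $\prod_{k=1}^{n}(1-p_{v_k}) = 0$, i.e., when some $p_{v_k}=1$. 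So Lemma~\ref{lem:opt_incre} as stated is exact only modulo this residual; under Remark~\ref{rem:num_stab} the residual is at most $(1-\epsilon)^n d_j$, hence negligible in practice but not identically zero (and, since it scales with $d_j$, it can in principle perturb the greedy argmin across candidate pairs). Your account --- deriving the exact difference and isolating the empty-set convention as the delicate step --- is, if anything, more careful than the paper's own argument.
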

\begin{remark}    
When $p_{v_j} = 1$, we replace
$\frac{p_{v_j}}{1-p_{v_j}}\sum_{j' > j} q_{j'}d_{j'}$ by $\sum_{j' > j} (\prod_{1 \leq i' \leq i - 1, i' \neq j} (1-p_{v_{i'}}))p_{v_{i}} d_{j'}$.
Since we make sure each $p_i \neq 1$ (see Rem.~\ref{rem:num_stab}), this does not happen in practice.
\end{remark}

\subsection{Covering}\label{subsec:analy_conds:cover}
\smallsection{Definition.}
We consider conditions where
some $i \in V$ needs to be \textit{covered} (i.e., at least one neighbor of $i$ is chosen).
Formally, the constraints are $X \in \calC$ with
$\calC = \Set{X \colon \Set{v_X \in V_X \colon (v_X, i) \in E} \neq \emptyset}$.

\smallsection{(S1-1).}
We find $\hat{f}_{\mathrm{cv}}(X; i) \coloneqq \mathbb{1}(X \notin \calC)$, \revise{which is the indicative function of the original constraint.}
\begin{lemma}\label{lem:optim_tight_cover}
    $\hat{f}_{\mathrm{cv}}$ is a TUB of $\mathbb{1}(X \notin \calC)$.
\end{lemma}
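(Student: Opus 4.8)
The claim is an immediate instance of the trivial direction of Remark~\ref{rem:tight_ub}: since $\hat{f}_{\mathrm{cv}}(X; i)$ is \emph{defined} to be exactly the indicator $\mathbb{1}(X \notin \calC)$ that it is asked to upper-bound, we are merely asserting that a function is a TUB of itself. The plan is therefore to unwind Definition~\ref{def:tight_ub} and check its three conditions, each of which holds with equality.

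Concretely, I would proceed as follows. First, the upper-bound condition $\hat{f}_{\mathrm{cv}}(X; i) \geq \mathbb{1}(X \notin \calC)$ holds for every $X$ because the two sides are literally identical. Second and third, since $\hat{f}_{\mathrm{cv}}(\cdot; i)$ and $\mathbb{1}(\cdot \notin \calC)$ are the same function on $\Set{0,1}^n$, they share the same minimum value and the same set of minimizers, giving $\min_X \hat{f}_{\mathrm{cv}} = \min_X \mathbb{1}(X \notin \calC)$ and $\arg\min_X \hat{f}_{\mathrm{cv}} = \arg\min_X \mathbb{1}(X \notin \calC)$. Equivalently, invoking the indicator-specific reformulation in Remark~\ref{rem:tight_ub}, I would verify directly that $\hat{f}_{\mathrm{cv}}(X; i) = 1 \geq 1$ whenever $X \notin \calC$ and $\hat{f}_{\mathrm{cv}}(X; i) = 0$ whenever $X \in \calC$; both follow at once from the definition of the indicator and of $\calC$ (namely, $X \notin \calC$ exactly when no neighbor of $i$ is chosen).

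There is no genuine obstacle in the lemma itself; the content lies in what it enables downstream. Using the indicator \emph{as is} for the TUB, rather than a looser surrogate, is precisely what keeps the induced probabilistic objective clean: since $X \notin \calC$ holds exactly when none of $i$'s neighbors is chosen, the expectation factorizes as $\Tilde{f}_{\mathrm{cv}}(p; i) = \bbE_{X \sim p}\,\hat{f}_{\mathrm{cv}}(X; i) = \prod_{v \colon (v,i) \in E}(1 - p_v)$, a closed form whose incremental differences will be straightforward to derive in the subsequent step (S2). So I expect the proof to be a one-line verification against Definition~\ref{def:tight_ub}, with the real payoff deferred to (S1-2) and (S2).
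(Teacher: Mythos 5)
Your proposal is correct and takes essentially the same route as the paper: both observe that $\hat{f}_{\mathrm{cv}}$ is by definition the indicator $\mathbb{1}(X \notin \calC)$ itself, and then invoke Remark~\ref{rem:tight_ub} (any function is a TUB of itself) to conclude. Your additional explicit verification of the conditions in Definition~\ref{def:tight_ub} is harmless but not needed beyond that one-line observation.
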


\smallsection{(S1-2).}
We drive $\Tilde{f}_{\mathrm{cv}}(p; i) \coloneqq \bbE_{X \sim p} \hat{f}_{\mathrm{cv}}(X; i) = \Pr_{X \sim p}[\Set{v_X \in V_X \colon (v_X, i) \in E} \neq \emptyset]$, \revise{by decomposing the objective into sub-terms.}

\begin{lemma}\label{lem:cover_obj}
    For any $p \in [0, 1]^n$ and $i \in [n]$,    
    $\Pr_{X \sim p}[\Set{v_X \in V_X \colon (v_X, i) \in E} \neq \emptyset] = \prod_{v \in [n] \colon (v, i) \in E} (1 - p_v)$.
\end{lemma}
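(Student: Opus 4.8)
The plan is to evaluate the probability on the left-hand side directly under the independent-Bernoulli model, in which each $X_v$ is an independent Bernoulli variable with $\Pr[X_v = 1] = p_v$. Write $N_i \coloneqq \Set{v \in [n] \colon (v, i) \in E}$ for the neighborhood of $i$. The chosen-neighbor set $\Set{v_X \in V_X \colon (v_X, i) \in E}$ equals $\Set{v \in N_i \colon X_v = 1}$, so whether it is nonempty depends only on the decisions $(X_v)_{v \in N_i}$; decisions on the remaining nodes are irrelevant and marginalize out.

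First I would pass to the complementary event, which factorizes cleanly: the chosen-neighbor set is empty exactly when $X_v = 0$ for every $v \in N_i$. Since the $X_v$ are independent,
\begin{center}
$\Pr\nolimits_{X \sim p}[\Set{v \in N_i \colon X_v = 1} = \emptyset] = \prod\nolimits_{v \in N_i} \Pr[X_v = 0] = \prod\nolimits_{v \in N_i}(1 - p_v)$.
\end{center}
Taking complements then yields $\Pr\nolimits_{X \sim p}[\Set{v \in N_i \colon X_v = 1} \neq \emptyset] = 1 - \prod\nolimits_{v \in N_i}(1 - p_v)$, which is the value of the left-hand side exactly as worded.

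The main obstacle is reconciling this value with the claimed right-hand side $\prod_{v \in N_i}(1 - p_v)$: the computation shows the nonempty probability equals $1 - \prod_{v \in N_i}(1 - p_v)$, \emph{not} $\prod_{v \in N_i}(1 - p_v)$. The product $\prod_{v \in N_i}(1 - p_v)$ is instead the probability of the \emph{empty} event, i.e., that node $i$ is left uncovered, which is exactly $\Pr_{X \sim p}[X \notin \calC]$ for the covering constraint $\calC = \Set{X \colon \Set{v_X \in V_X \colon (v_X, i) \in E} \neq \emptyset}$. Since $\hat{f}_{\mathrm{cv}}(X; i) = \mathbb{1}(X \notin \calC)$, this gives $\Tilde{f}_{\mathrm{cv}}(p; i) = \bbE_{X \sim p}\mathbb{1}(X \notin \calC) = \Pr_{X \sim p}[X \notin \calC] = \prod_{v \in N_i}(1 - p_v)$, so the right-hand side is the correct formula for the objective $\Tilde{f}_{\mathrm{cv}}$. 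I would therefore carry out the proof for the uncovered event $\Set{\cdots} = \emptyset$, and flag that the printed relation ``$\neq \emptyset$'' should read ``$= \emptyset$'' (equivalently, a factor $1 -$ is missing on the right) for the lemma to be consistent with both its own proof and the definition of $\Tilde{f}_{\mathrm{cv}}$.
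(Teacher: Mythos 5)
Your proof is correct and follows essentially the same route as the paper's: both compute, via independence of the $X_v$'s, the probability that no neighbor of $i$ is chosen, which factorizes as $\prod_{v \colon (v,i) \in E}(1-p_v)$. Your diagnosis of the misprint is also right --- the paper's own proof writes ``$\neq \emptyset$'' but then decomposes that event as $\bigwedge_{v \colon (v,i) \in E}(v \notin V_X)$, which is precisely the ``$= \emptyset$'' (uncovered) event whose probability you compute, and this reading is the one consistent with $\hat{f}_{\mathrm{cv}} = \mathbb{1}(X \notin \calC)$, with the maximum-coverage objective, and with the identity $\Delta \Tilde{f}_{\mathrm{cv}}(j,1,p;i) = -\Tilde{f}_{\mathrm{cv}}(p;i)$ in Lemma~\ref{lem:cover_incre}.
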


\smallsection{(S2).}
We derive the IDs of $\Tilde{f}_{\mathrm{cv}}$, \revise{by analyzing which sub-terms are changed after one step of local derandomization.}
\begin{lemma}[IDs of $\Tilde{f}_{\mathrm{cv}}$]\label{lem:cover_incre}
    For any $p \in [0, 1]^n$ and $i \in [n]$,
    if $(i, j) \notin E$,
    then $\Delta \Tilde{f}_{\mathrm{cv}}(j, 0, p; i) = \Delta \Tilde{f}_{\mathrm{cv}}(j, 1, p; i) = 0$;    
    if $(i, j) \in E$,
    then
    \begin{center}        
    $
    \begin{cases}
    \Delta \Tilde{f}_{\mathrm{cv}}(j, 0, p; i) = p_j \prod_{v \in N_i, v \neq j}(p_v - 1),\\
    \Delta \Tilde{f}_{\mathrm{cv}}(j, 1, p; i) = -\Tilde{f}_{\mathrm{cv}}(p; i).
    \end{cases}
    $    
    \end{center}
\end{lemma}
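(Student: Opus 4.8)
The plan is to reduce everything to the closed form of $\Tilde{f}_{\mathrm{cv}}$ already established in Lemma~\ref{lem:cover_obj}, namely $\Tilde{f}_{\mathrm{cv}}(p; i) = \prod_{v \in N_i}(1 - p_v)$, where $N_i \coloneqq \Set{v \in [n] \colon (v, i) \in E}$ is the neighborhood of $i$. The crucial structural observation is that local derandomization $\der(j, x; p)$ modifies $p$ only in coordinate $j$, while $\Tilde{f}_{\mathrm{cv}}(p; i)$ is a product whose factors are indexed by $N_i$. Hence the coordinate $p_j$ enters $\Tilde{f}_{\mathrm{cv}}(p; i)$ at all if and only if $j \in N_i$, i.e., $(i, j) \in E$, and when it does enter, it does so through the single factor $(1 - p_j)$, in which the dependence is affine. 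This immediately suggests a case split on whether $(i, j) \in E$, followed by factoring out the lone $j$-indexed factor.

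First I would dispose of the case $(i, j) \notin E$. Then $j \notin N_i$, so $\Tilde{f}_{\mathrm{cv}}(p; i)$ does not depend on $p_j$ at all; since $\der(j, x; p)$ leaves every other coordinate fixed, we get $\Tilde{f}_{\mathrm{cv}}(\der(j, x; p); i) = \Tilde{f}_{\mathrm{cv}}(p; i)$ for both $x \in \Set{0, 1}$, whence both incremental differences vanish.

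For the main case $(i, j) \in E$, I would factor the product as $\Tilde{f}_{\mathrm{cv}}(p; i) = (1 - p_j)\, R$ with $R \coloneqq \prod_{v \in N_i,\, v \neq j}(1 - p_v)$, noting that $R$ is unaffected by $\der(j, x; \cdot)$. Substituting the two derandomized values and subtracting then finishes the computation: for $x = 1$ the factor $(1 - p_j)$ becomes $(1 - 1) = 0$, so $\Tilde{f}_{\mathrm{cv}}(\der(j, 1; p); i) = 0$ and the difference is $-\Tilde{f}_{\mathrm{cv}}(p; i)$; for $x = 0$ the factor becomes $(1 - 0) = 1$, so $\Tilde{f}_{\mathrm{cv}}(\der(j, 0; p); i) = R$ and the difference is $R - (1 - p_j)\, R = p_j R = p_j \prod_{v \in N_i,\, v \neq j}(1 - p_v)$.

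There is no deep obstacle here: once the closed form of Lemma~\ref{lem:cover_obj} is in hand, the entire argument is a single factorization plus two substitutions, and the affine (degree-one) dependence of $\Tilde{f}_{\mathrm{cv}}$ on each coordinate is exactly what makes local derandomization touch only one factor. The two points demanding care are (i) stating the case split on $(i, j) \in E$ cleanly so that the $j \notin N_i$ case is separated out, and (ii) keeping track of signs when recording $R$, since $\prod_{v \neq j}(1 - p_v)$ and the printed $\prod_{v \neq j}(p_v - 1)$ differ by $(-1)^{\Abs{N_i} - 1}$; I would verify the normalization against the $x = 1$ identity $\Delta \Tilde{f}_{\mathrm{cv}}(j, 1, p; i) = -\Tilde{f}_{\mathrm{cv}}(p; i)$, which collapses cleanly and serves as an internal consistency check.
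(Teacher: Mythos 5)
Your proof is correct and takes essentially the same route as the paper's: both start from the closed form $\Tilde{f}_{\mathrm{cv}}(p;i)=\prod_{v\in N_i}(1-p_v)$ of Lemma~\ref{lem:cover_obj}, split on whether $(i,j)\in E$, and exploit the affine dependence on $p_j$ by substituting $p_j\in\Set{0,1}$ into the lone factor $(1-p_j)$.

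One remark on the sign issue you raise: you are right to be suspicious. The correct increment for $x=0$ is the one you derive, $\Delta\Tilde{f}_{\mathrm{cv}}(j,0,p;i)=p_j\prod_{v\in N_i,\,v\neq j}(1-p_v)$, which is nonnegative, as it must be (derandomizing $j$ to $0$ can only increase the probability that $i$ is left uncovered); the printed $p_j\prod_{v\in N_i,\,v\neq j}(p_v-1)$ agrees with it only when $\Abs{N_i}$ is odd, so the statement carries a parity-dependent sign typo. The paper's own proof contains a matching slip: it writes $\prod_{v\in N_i,\,v\neq j}(1-p_v)=\Tilde{f}_{\mathrm{cv}}(p;i)-p_j\prod_{v\in N_i,\,v\neq j}(1-p_v)$, where the minus should be a plus. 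Your version, cross-checked as you suggest against the clean $x=1$ identity $\Delta\Tilde{f}_{\mathrm{cv}}(j,1,p;i)=-\Tilde{f}_{\mathrm{cv}}(p;i)$, is the one to trust.
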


\subsection{Cliques (or independent sets)}\label{subsec:analy_conds:clique}

\smallsection{Definition.}
We consider conditions where the chosen nodes $V_X$ should form a clique.\footnote{Equivalently, an independent set in the complement graph.}
Formally, the constraints are
$X \in \calC$ with
$\calC = \Set{X \colon \binom{V_X}{2} \subseteq E}$.

\smallsection{(S1-1).}
We find $\hat{f}_{\mathrm{cq}}(X) \coloneqq \Abs{\Set{(u, v) \in \binom{V_X}{2} \colon (u, v) \notin E}}$, \revise{the number of chosen node pairs violating the constraints.}
\begin{lemma}\label{lem:clique_tight_ub}
    $\hat{f}_{\mathrm{cq}}$ is a TUB of $\mathbb{1}[X \notin \calC]$.
\end{lemma}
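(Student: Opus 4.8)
The plan is to verify the two defining conditions of a tight upper bound (TUB) from Definition~\ref{def:tight_ub}, specialized via Remark~\ref{rem:tight_ub} to the indicator case. That is, since $\hat{f}_{\mathrm{cq}}$ is meant to be a TUB of $\mathbb{1}[X \notin \calC]$ where $\calC = \Set{X \colon \binom{V_X}{2} \subseteq E}$, it suffices to show that (i) $\hat{f}_{\mathrm{cq}}(X) = 0$ for every feasible $X \in \calC$, and (ii) $\hat{f}_{\mathrm{cq}}(X) \geq 1$ for every infeasible $X \notin \calC$. This is exactly the reformulated criterion given in Remark~\ref{rem:tight_ub}, so no separate argument about $\min$ or $\arg\min$ is needed.

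First I would handle the feasible case (i). By definition, $X \in \calC$ means $\binom{V_X}{2} \subseteq E$, i.e., every pair of chosen nodes is an edge. Then the set $\Set{(u,v) \in \binom{V_X}{2} \colon (u,v) \notin E}$ is empty, so its cardinality $\hat{f}_{\mathrm{cq}}(X)$ is exactly $0$. This direction is essentially immediate from unpacking the definitions.

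Next I would handle the infeasible case (ii). If $X \notin \calC$, then $\binom{V_X}{2} \not\subseteq E$, so there exists at least one pair $(u,v) \in \binom{V_X}{2}$ with $(u,v) \notin E$. Hence the counting set contains at least this one element, giving $\hat{f}_{\mathrm{cq}}(X) = \Abs{\Set{(u,v) \in \binom{V_X}{2} \colon (u,v) \notin E}} \geq 1$. Combining (i) and (ii) establishes precisely the two bullet conditions of Remark~\ref{rem:tight_ub}, so $\hat{f}_{\mathrm{cq}}$ is a TUB of $\mathbb{1}[X \notin \calC]$.

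I do not anticipate any genuine obstacle here: the result follows directly from the observation that $\hat{f}_{\mathrm{cq}}$ counts violated (non-edge) pairs among chosen nodes, which is zero exactly when the chosen set is a clique and positive (at least one) otherwise. The only point requiring a moment of care is making sure the cardinality-versus-indicator comparison is set up correctly, i.e., that a count of $\geq 1$ dominates the indicator value $1$ on infeasible inputs while matching $0$ on feasible ones; this is handled cleanly by invoking the indicator-specific form of the TUB condition from Remark~\ref{rem:tight_ub} rather than re-deriving the $\arg\min$ statement from scratch.
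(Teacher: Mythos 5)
Your proof is correct and takes essentially the same approach as the paper: it verifies the indicator-specific TUB criterion of Remark~\ref{rem:tight_ub} by observing that the violation count $\hat{f}_{\mathrm{cq}}(X)$ is exactly $0$ when $V_X$ is a clique and at least $1$ otherwise, which is precisely the argument pattern the paper uses for its counting-based TUB lemmas (cf.\ the proofs of Lemma~\ref{lem:card_tight_ub} and Lemma~\ref{lem:rc_tub}).
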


\smallsection{(S1-2).}
We derive $\tilde{f}_{\mathrm{cq}}(p) \coloneqq \bbE_{X \sim p} \hat{f}_{\mathrm{cq}}(X)$, \revise{by decomposing the objective into sub-terms.}
\begin{lemma}\label{lem:cliq_obj}
    For any $p \in [0, 1]^n$, 
    $\bbE_{X \sim p} \hat{f}_{\mathrm{cq}}(X)
    = \sum_{(u, v) \in \binom{V}{2} \setminus E} p_u p_v$.
\end{lemma}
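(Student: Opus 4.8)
The plan is to reduce the statement to a simple application of linearity of expectation together with the independence of the Bernoulli coordinates. The starting observation is that $\hat{f}_{\mathrm{cq}}(X) = \Abs{\Set{(u, v) \in \binom{V_X}{2} \colon (u, v) \notin E}}$ is nothing but a \emph{count}, so it can be rewritten as a sum of indicators over candidate pairs. Concretely, I would first argue that a pair $(u,v) \in \binom{V}{2}$ contributes to $\hat{f}_{\mathrm{cq}}(X)$ exactly when (i) both endpoints are chosen, i.e. $u, v \in V_X$, and (ii) $(u,v) \notin E$. Since $u \in V_X \iff X_u = 1$, this lets me write
\begin{center}
$\hat{f}_{\mathrm{cq}}(X) = \sum\nolimits_{(u,v) \in \binom{V}{2} \setminus E} \mathbb{1}(u \in V_X)\,\mathbb{1}(v \in V_X) = \sum\nolimits_{(u,v) \in \binom{V}{2} \setminus E} X_u X_v,$
\end{center}
where the restriction of the index set to $\binom{V}{2} \setminus E$ absorbs condition (ii) and the product of indicators encodes condition (i).

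The second step is to take the expectation of both sides. By linearity of expectation, $\bbE_{X \sim p} \hat{f}_{\mathrm{cq}}(X) = \sum_{(u,v) \in \binom{V}{2} \setminus E} \bbE_{X \sim p}[X_u X_v]$. For each fixed non-edge $(u,v)$ with $u \neq v$, I would invoke the modeling assumption from Sec.~\ref{subsubsec:EGN_pipeline} that the coordinates $X_u, X_v$ are \emph{independent} Bernoulli variables with means $p_u, p_v$; hence $\bbE[X_u X_v] = \bbE[X_u]\,\bbE[X_v] = p_u p_v$. Substituting this back yields $\bbE_{X \sim p} \hat{f}_{\mathrm{cq}}(X) = \sum_{(u,v) \in \binom{V}{2} \setminus E} p_u p_v$, which is the claim.

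There is essentially no serious obstacle in this argument; the only point deserving care is the bookkeeping of the index set. I would make sure the pair $(u,v)$ ranges over \emph{unordered} pairs (consistent with the $\binom{\cdot}{2}$ notation already fixed in the statement of $\hat{f}_{\mathrm{cq}}$), so that each violating pair is counted once and the indicator decomposition matches the cardinality exactly, with no factor of $2$ discrepancy. Since all pairs in $\binom{V}{2}$ satisfy $u \neq v$, the independence step applies to every term, and no diagonal ($u = v$) correction is needed. Thus the entire proof is a two-line identity followed by termwise use of independence.
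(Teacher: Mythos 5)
Your proposal is correct and follows essentially the same route as the paper's proof: both rewrite the violation count as a sum of indicators over non-edge pairs (the paper calls this step ``double counting''), then apply linearity of expectation and the independence of the Bernoulli coordinates to get $\bbE[X_u X_v] = p_u p_v$ for each term. The only cosmetic difference is that you phrase the per-pair term as $X_u X_v$ while the paper writes $\mathbb{1}[(u,v) \in \binom{V_X}{2}]$; these are identical quantities.
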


\smallsection{(S2).}
We derive the IDs of $\Tilde{f}_{\mathrm{cq}}$, \revise{by analyzing which sub-terms are changed after one step of local derandomization.}

\begin{lemma}[IDs of $\Tilde{f}_{\mathrm{cq}}$]
\label{lem:cliq_incre}
    For any $p \in [0, 1]^n$ and $i \in [n]$,
    \[
    \begin{cases}
     \Delta \tilde{f}_{\mathrm{cq}}(i, 0, p) =
    - p_i \sum_{j \in [n], j \neq i, (i, j) \notin E} p_j, \\
    \Delta \tilde{f}_{\mathrm{cq}}(i, 1, p) =
    (1 - p_i) \sum_{j \in [n], j \neq i, (i, j) \notin E} p_j.
    \end{cases}
    \]
\end{lemma}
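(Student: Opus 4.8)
The plan is to compute the incremental differences directly from the closed-form objective established in Lemma~\ref{lem:cliq_obj}, namely $\tilde{f}_{\mathrm{cq}}(p) = \sum_{(u,v) \in \binom{V}{2} \setminus E} p_u p_v$. The key structural observation is that this objective is \emph{multilinear}: it is a sum of monomials $p_u p_v$, each of which is degree one in $p_i$ for any fixed index $i$. Consequently, fixing $p_i$ to a value $x$ and leaving every other coordinate unchanged produces a function whose $i$-dependent part is linear in $p_i$, and the increment $\Delta \tilde{f}_{\mathrm{cq}}(i, x, p) = \tilde{f}_{\mathrm{cq}}(\der(i,x;p)) - \tilde{f}_{\mathrm{cq}}(p)$ affects only those monomials that actually contain the factor $p_i$.

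First I would separate the sum defining $\tilde{f}_{\mathrm{cq}}(p)$ into two parts: the terms that involve index $i$ and the terms that do not. The terms not involving $i$ are unchanged under $\der(i,x;\cdot)$ and cancel in the difference. The terms involving $i$ are exactly $\sum_{j \neq i,\, (i,j) \notin E} p_i p_j$, and I would introduce the shorthand $S_i \coloneqq \sum_{j \in [n],\, j \neq i,\, (i,j) \notin E} p_j$ so that the $i$-part of the objective is simply $p_i S_i$ (note $S_i$ does not depend on $p_i$). Then the two cases follow by substitution: setting $p_i \to 0$ gives an $i$-part of $0$, so $\Delta \tilde{f}_{\mathrm{cq}}(i,0,p) = 0 - p_i S_i = -p_i S_i$; setting $p_i \to 1$ gives an $i$-part of $S_i$, so $\Delta \tilde{f}_{\mathrm{cq}}(i,1,p) = S_i - p_i S_i = (1-p_i) S_i$. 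These match the two claimed formulae exactly.

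There is no genuine obstacle here, which is why the problem is a Lemma rather than a Theorem; the only thing to be careful about is the bookkeeping that guarantees the non-$i$ monomials truly cancel. The cleanest way to make that rigorous is to write $\tilde{f}_{\mathrm{cq}}(p) = p_i S_i + R_i$, where $R_i \coloneqq \sum_{(u,v) \in \binom{V}{2}\setminus E,\; u \neq i,\; v \neq i} p_u p_v$ collects all monomials disjoint from $i$. Since $\der(i,x;p)$ alters only coordinate $i$, both $S_i$ and $R_i$ are invariant under the operation, so $\tilde{f}_{\mathrm{cq}}(\der(i,x;p)) = x \cdot S_i + R_i$, and subtracting $\tilde{f}_{\mathrm{cq}}(p) = p_i S_i + R_i$ immediately yields $\Delta \tilde{f}_{\mathrm{cq}}(i,x,p) = (x - p_i) S_i$. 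Plugging in $x = 0$ and $x = 1$ recovers both cases in one stroke, and expanding $S_i$ back to its summation form gives the stated result.

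The mild care point worth flagging is symmetry in the index pair: each unordered pair $\{i,j\} \in \binom{V}{2} \setminus E$ contributes the single monomial $p_i p_j$, so when I extract the coefficient of $p_i$ I must sum over all $j \neq i$ with $(i,j) \notin E$ \emph{once} per neighbor in the complement graph, consistent with the condition $(i,j) \notin E$ in the stated formula. Once this indexing convention is fixed, the derivation is a one-line multilinear computation, and I would present it in exactly that compressed form rather than belaboring the cancellation.
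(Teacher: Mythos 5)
Your proposal is correct and follows essentially the same route as the paper's proof: both split $\tilde{f}_{\mathrm{cq}}$ into the monomials containing $p_i$ (which sum to $p_i S_i$) and those that do not, then observe that only the former change under $\der(i,x;\cdot)$, yielding the two formulae by substituting $x=0$ and $x=1$. Your unified form $\Delta \tilde{f}_{\mathrm{cq}}(i,x,p) = (x - p_i) S_i$ is a slightly tidier packaging of the same computation.
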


\revise{
\smallsection{Notes.}
\cite{karalias2020erdos} and \cite{min2022can} essentially considered the ``cliques'' conditions and derived similar formula of $\tilde{f}_{\mathrm{cq}}(p)$.
Our derivation of the IDs is novel, and 
we will also extend this to non-binary cases, which were not discussed in existing works.
Moreover, our high-level targets and templates provide insights into obtaining and interpreting the derivation in a principled way.
}

\subsection{Non-binary decisions}\label{subsec:analy_conds:coloring}

\smallsection{Definition.}
We consider \textit{non-binary decisions}, i.e., (potentially) more than two decisions ($\Abs{d} \geq 2$), e.g., problems with partition or coloring.
\revise{Our theoretical analysis can be extended to non-binary cases. See App.~\ref{subapp:theory:non_binary} for more details.}

\subsection{Uncertainty}\label{subsec:analy_conds:uncertainty}
We {also} consider \textit{uncertainty} in edge existence, i.e., edge probabilities
$P: E \to [0, 1]$.
Due to the generality of non-binary conditions and uncertainty, the details of objective construction and derandomization will be deferred to where each specific problem is analyzed in Sec.~\ref{sec:problems}.

\subsection{Notes and insights}\label{subsec:derive_notes}
Throughout the section, two commonly used ideas for constructing TUBs are:
(1) using a function itself (Lemmas~\ref{lem:optim_tight_ub} \& \ref{lem:optim_tight_cover}), and
(2) relaxing the binary ``a constraint is violated'' to ``the number of violations'' (Lemmas~\ref{lem:card_tight_ub} \& \ref{lem:clique_tight_ub}).

\revise{Moreover, techniques commonly used in our derivations include (1) decomposing objectives into sub-terms, and (2) analyzing which sub-terms are changed after one step of local derandomization.}
\revise{Such ``local decomposibility'' allows us to use linearity of expectation. See, e.g., similar ideas by~\citet{ahn2020learning} and \citet{Jo2023robust}. See App.~\ref{subapp:discussions:local_decomp} for more discussions.}

\revise{We acknowledge that we have not covered all conditions involved in CO, but we expect that similar ideas would be applicable to some other conditions.
See App.~\ref{subapp:cycles_trees} for discussions on some conditions not fully covered in this work, e.g., cycles and trees.}

\section{Applying the Derivations to CO Problems}\label{sec:problems}
In this section, we apply \ours to different CO problems (facility location, maximum coverage, and robust coloring) with both theoretical values, NP-hardness~\citep{mihelic2004facility,yanez2003robust}, and real-world implications.
{See App.~\ref{app:more_problems} for the applications to four more problems (robust $k$-clique, robust dominating set, clique cover, and minimum spanning tree).}
{Specifically, for each specific problem, we shall 
(1) check what conditions are involved and
(2) construct the probabilistic objective and derandomization process by combining the analyses in Sec.~\ref{sec:analy_conds}.}
\begin{tcolorbox}[boxsep=0pt]
\small
\begin{itemize}[leftmargin=*]
    \item (\textbf{1}) Find the conditions involved in the optimization objective ($f = \sum_i f_i$) and the constraints ($X \in \bigcap_i \calC_i$).
    \vspace{-0.7em}    
    \item (\textbf{2}) Construct the final objective: 
    $\sum_i \Tilde{f}_i + \beta \sum_{j} \Tilde{g}_j$ with constraint coefficient $\beta > 0$ by combining the probabilistic functions 
    $\Tilde{f}_i$'s and $\Tilde{g}_i$'s
    for the optimization objectives and constraints, respectively.
\end{itemize}
\end{tcolorbox}

\subsection{Facility location}\label{subsec:problems:facility_location}
The \textit{facility location} problem is abstracted from real-world scenarios, where the goal is to find some good locations among candidate locations~\citep{drezner2004facility}.

\smallsection{Definition.}
Given (1) a complete weighted graph $G = (V = [n], E = \binom{V}{2}, W)$,
where the distance between each pair $(u, v)$ of nodes is $W(u, v) > 0$,
and $W(v, v) = 0, \forall v \in V$,
and (2) the number $k$ of locations to choose,
we aim to find a subset $V_X \subseteq V$ such that
(c1) $\Abs{V_X} = k$,
and 
(c2) $\sum_{v \in V} \min_{v_X \in V_X} W(v, v_X)$ is minimized.

\setlength{\parskip}{.3 pc}
\smallsection{Involved conditions:}
(1) cardinality constraints and (2) minimum w.r.t. a subset (see Secs. \ref{subsec:analy_conds:card} \& \ref{subsec:analy_conds:opt_wrt_subset}).

\smallsection{Details.}
Given $p \in [0, 1]^n$ and $\beta > 0$, 
\begin{center}
\vspace{-1.5mm}
$\tilde{f}_{\mathrm{FL}}(p; G, k) = (\sum\nolimits_{v \in V} \Tilde{f}_{\mathrm{ms}}(p; v, W)) + \beta \Tilde{f}_{\mathrm{card}}(p; \Set{k})$.
\vspace{-1.5mm}
\end{center}
For $i \in [n]$ and $x \in \Set{0, 1}$, the ID is
$\Delta \tilde{f}_{\mathrm{FL}}(i, x, p; G, k) = \sum_{v \in V} \Delta \Tilde{f}_{\mathrm{ms}}(i, x, p; v, W) + \beta \Delta \Tilde{f}_{\mathrm{card}}(i, x, p; \Set{k})$.

\subsection{Maximum coverage}\label{subsec:problems:max_cover}
The \textit{maximum coverage} problem~\citep{khuller1999budgeted} is a classical CO problem with real-world applications, including public traffic management~\citep{ali2017coverage}, web management~\citep{saha2009maximum}, and scheduling~\citep{marchiori2000evolutionary}.

\smallsection{Definition.}
Given 
(1) $m$ items (WLOG, assume the items are $[m]$), each with weight $W_j, \forall j \in [m]$,
(2) a family of $n$ sets $\calS = \Set{S_1, S_2, \ldots, S_n}$ with each $S_i \subseteq [m]$ and 
(3) the number $k$ of sets to choose,
we aim to choose $\calS_X \subseteq \calS$ from the given sets such that
(c1) $\Abs{\calS_X} = k$
and
(c2) the total weights of the covered items
$\sum_{j \in T_X} W_j$ is maximized,
where $T_X \coloneqq \bigcup_{S_i \in \calS_X} S_i$ is the set of covered items.

\smallsection{Involved conditions:}
(1) cardinality constraints and (2) covering 
(see Secs. \ref{subsec:analy_conds:card} \& \ref{subsec:analy_conds:cover}).

\smallsection{Details.}
Construct a bipartite graph $G_{\calS} = (V = \calS \cup [m], E)$, where $(S_i, j) \in E$ iff $j \in S_i$.
For $p \in [0, 1]^n$ and $\beta > 0$,
\begin{center}
\vspace{-1mm}
$\tilde{f}_{\mathrm{MC}}(p; \calS, k) = 
\sum_{j \in [m]} W_j \Tilde{f}_{\mathrm{cv}}(p; j, G_{\calS}) + \beta \Tilde{f}_{\mathrm{card}}(p; \Set{k})$.
\vspace{-1mm}
\end{center}
For $i \in [n]$ and $x \in \Set{0, 1}$, the ID is
$\Delta \tilde{f}_{\mathrm{MC}}(i, x, p; \calS, k) = \sum_{j \in [m]} W_j \Delta \Tilde{f}_{\mathrm{cv}}(i, x, p; j, G_{\calS}) +
\beta 
\Delta \Tilde{f}_{\mathrm{card}}(i, x, p; \Set{k})$.

\subsection{Robust coloring}\label{subsec:problems:robust_coloring}
The \textit{robust coloring} problem~\citep{yanez2003robust} generalizes the coloring problem~\citep{jensen2011graph}. It is motivated by real-world scheduling problems where some conflicts can be uncertain, with notable applications to supply chain management~\citep{lim2005robust}.

\begin{figure*}[t!]
    \centering
    \includegraphics[width=0.75\textwidth]{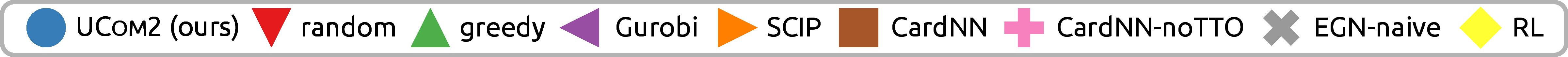}\\
    \vspace{-0.5mm}
    \begin{subfigure}[b]{0.48\linewidth}
        \includegraphics[scale=0.18]{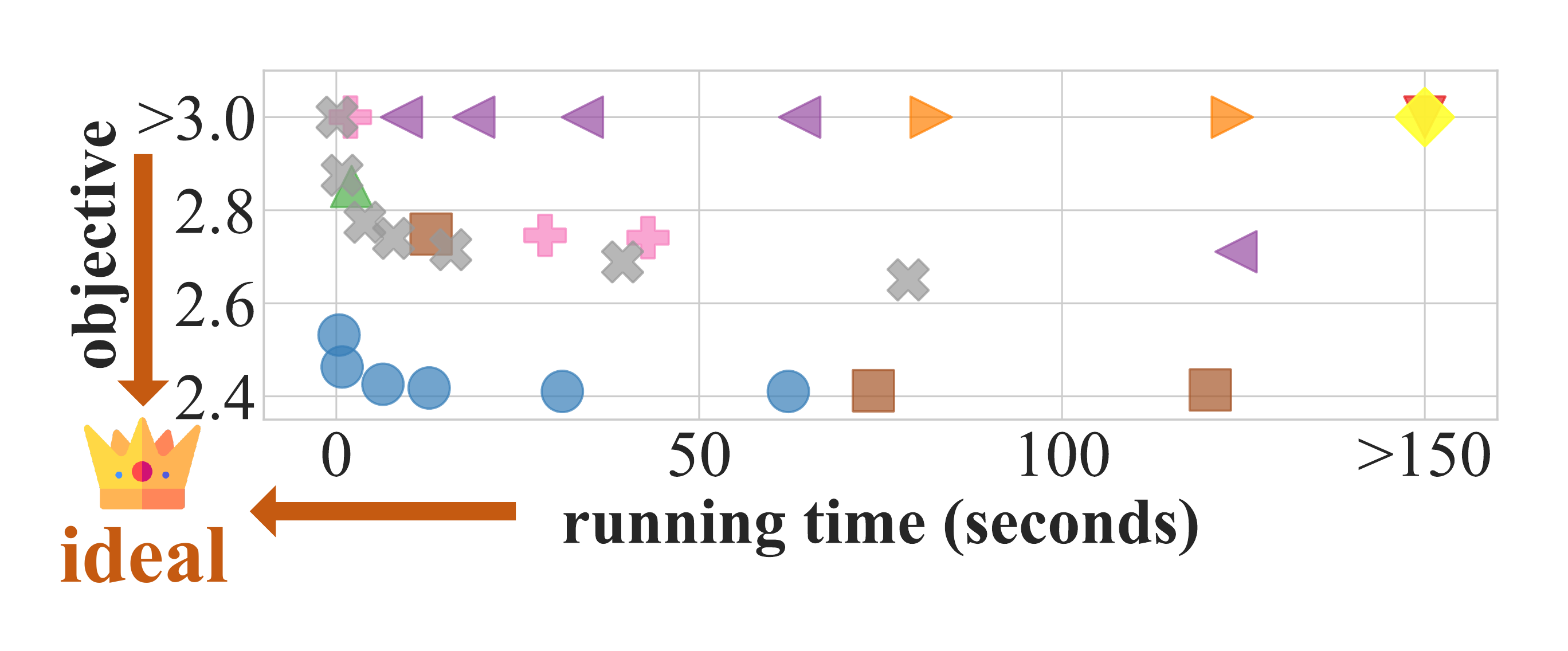}
        \vspace{-10mm}
        \caption{FL: rand500}
        \label{fig:fl_rand500_tradeoff}
    \end{subfigure}
    \begin{subfigure}[b]{0.48\linewidth}
        \includegraphics[scale=0.18]{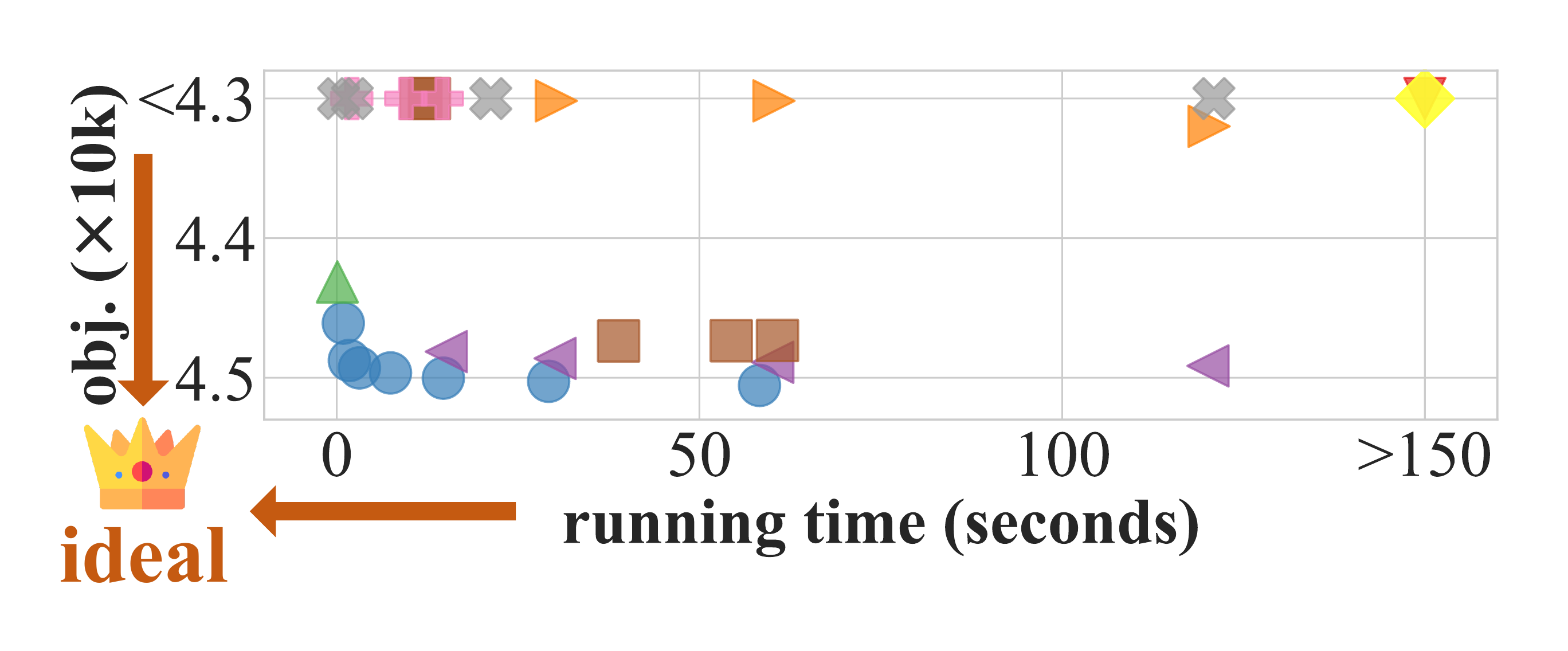}
        \vspace{-10mm}
        \caption{MC: rand500}
        \label{fig:mc_rand500_tradeoff}
    \end{subfigure}
    \begin{subfigure}[b]{0.48\linewidth}        
        \includegraphics[scale=0.18]{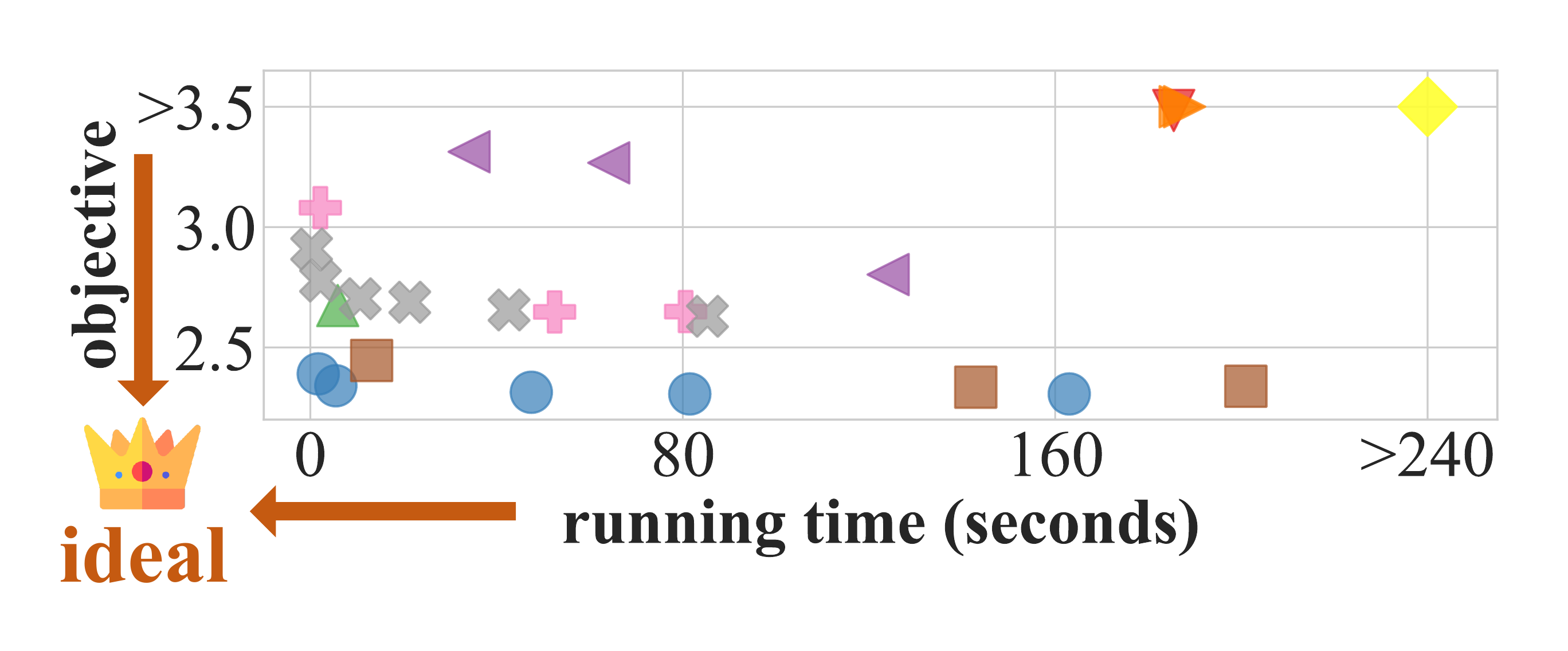}
        \vspace{-10mm}
        \caption{FL: rand800}
        \label{fig:fl_rand800_tradeoff}
    \end{subfigure}
    \begin{subfigure}[b]{0.48\linewidth}        
        \includegraphics[scale=0.18]{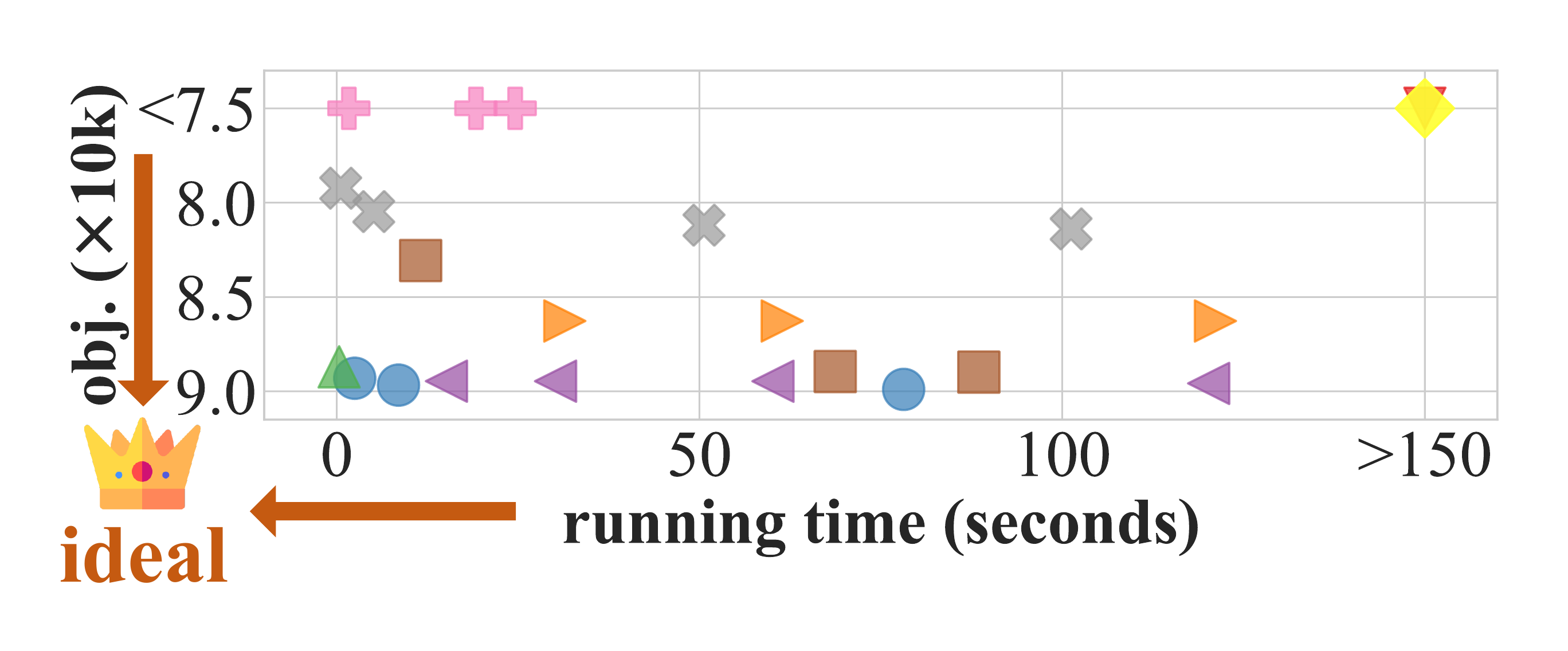}
        \vspace{-10mm}
        \caption{MC: rand1000}
        \label{fig:mc_rand1000_tradeoff}
    \end{subfigure}     
    \caption{The speed-quality trade-offs on facility location (FL) and maximum coverage (MC).
    Running time ($x$-axis): smaller the better.
    Objective ($y$-axis): for FL, the smaller the better; for MC, the larger the better.
    For MC, we reverse the $y$-axis so that the ideal point is always at the bottom left corner.
    }
    \label{fig:tradeoff_rand}
\end{figure*}

\smallsection{Definition.}
Given 
(1) an uncertain graph $G = (V, E, P)$ and
(2) the number $c$ of colors,
let $E_h \coloneqq \Set{e \in E \colon P(e) = 1}$ represent \textit{hard conflicts} which we \textit{must} avoid, and let $E_s \coloneqq \Set{e \in E \colon P(e) < 1}$ represent \textit{soft conflicts} which possibly happen,
we aim to find a $c$-coloring $X$ on $V$, where each node $v \in V$ has a color $X_v \in d \coloneqq \Set{0, 1, \ldots, c - 1}$,
such that 
\textbf{(c1)} no hard conflicts are violated
(i.e., $X_u \neq X_v, \forall (u, v) \in E_h$), and
\textbf{(c2)} the probability that no violated soft conflicts happen (i.e., $\prod_{e = (u, v) \in E_s \colon X_u = X_v} (1 - P(e))$) is maximized.

We fix $G$ and $c$ in the analysis below.

\smallsection{Involved conditions:}
(1) independent sets,\footnote{The nodes in each color group should be an independent set.} (2) uncertainty, and (3) non-binary decisions
(see Secs.~\ref{subsec:analy_conds:clique} to \ref{subsec:analy_conds:uncertainty}).

\smallsection{Details.}
Regarding (c1), we extend the derivations in Sec.~\ref{subsec:analy_conds:clique} to non-binary cases.
We use
\begin{center}
\vspace{-1mm}
    $\hat{g}_{1}(X) \coloneqq \Abs{\Set{(u, v) \in E_h \colon X_u = X_v}}$,
    \vspace{-1mm}
\end{center}
which is a TUB of $g_1(X) \coloneqq \mathbb{1}(X \notin \calC_1)$ 
with $\calC_1 = \Set{X \colon \text{(c1) is satisfied}}$,
and use $\Tilde{g}_{1}(p) \coloneqq \bbE_{X \sim p} \hat{g}_{1}(X)$.
Regarding (c2), maximizing $\prod_{e = (u, v) \in E_s \colon X_u = X_v} (1 - P(e))$ is equivalent to minimizing 
\begin{center}
\vspace{-1mm}
$f_{2}(X) 
= -\sum_{e = (u, v) \in E_s \colon X_u = X_v} \log (1 - P(e))$.    
\vspace{-1mm}
\end{center}
With $\hat{f}_{2}(X) \coloneqq f_{2}(X)$ and
$\Tilde{f}_{2}(p) \coloneqq \bbE_{X \sim p} \hat{f}_{2}(X)$,
the final objective is $\Tilde{f}_{\mathrm{RC}} = \Tilde{f}_{2} + \beta \Tilde{g}_{1}$ with constraint coefficient $\beta > 0$.
\begin{lemma}\label{lem:rc_tub}
    $\hat{g}_1$ is a TUB of $g_1$ and 
    $\hat{f}_2$ is a TUB of $f_2$.
\end{lemma}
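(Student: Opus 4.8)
The plan is to split the statement into its two independent claims and dispatch each by directly invoking the characterizations of tight upper bounds already established. The claim about $\hat{f}_2$ is immediate, while the claim about $\hat{g}_1$ requires unpacking the indicator-function form of a TUB from Remark~\ref{rem:tight_ub}. Throughout, I would note that although Definition~\ref{def:tight_ub} is stated over $\Set{0,1}^n$, for robust coloring the decision domain is $d^n$ with $d = \Set{0,1,\ldots,c-1}$, and the notion of a TUB transfers verbatim by replacing $\Set{0,1}^n$ with $d^n$ (consistent with the non-binary extension promised in App.~\ref{subapp:theory:non_binary}).

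First I would handle $\hat{f}_2$. By construction $\hat{f}_2(X) \coloneqq f_2(X)$, so $\hat{f}_2 = f_2$ identically. Remark~\ref{rem:tight_ub} records that $\hat{g} = g$ is always a TUB of $g$ (the upper-bound inequality holds with equality, and the minimum value and minimizer set coincide trivially). Hence $\hat{f}_2$ is a TUB of $f_2$ with no further work.

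Next I would treat $\hat{g}_1$. Since $g_1(X) = \mathbb{1}(X \notin \calC_1)$ is an indicator for constraint violation, Remark~\ref{rem:tight_ub} reduces the TUB condition to two checks: $\hat{g}_1(X) = 0$ for every $X \in \calC_1$, and $\hat{g}_1(X) \geq 1$ for every $X \notin \calC_1$. By definition $\hat{g}_1(X) = \Abs{\Set{(u,v) \in E_h \colon X_u = X_v}}$ counts the hard conflicts violated by the coloring $X$. The constraint $\calC_1$ is exactly ``(c1) holds,'' i.e. $X_u \neq X_v$ for all $(u,v) \in E_h$, which is equivalent to the counted set being empty. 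Thus $X \in \calC_1$ forces the count to be $0$, and $X \notin \calC_1$ means at least one hard conflict is violated, so the count is a positive integer and hence $\geq 1$. Both conditions are met, so $\hat{g}_1$ is a TUB of $g_1$.

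I do not anticipate a genuine obstacle: the argument is a matter of carefully matching definitions. The only point demanding care is verifying that the counting set $\Set{(u,v) \in E_h \colon X_u = X_v}$ is empty \emph{precisely} when $\calC_1$ holds (so that the cardinality jumps from $0$ to at least $1$ exactly at the boundary of feasibility), together with confirming that the TUB framework extends cleanly from binary to the $c$-ary decision domain $d^n$ used here.
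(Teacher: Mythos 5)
Your proposal is correct and matches the paper's own proof essentially step for step: the paper likewise verifies the indicator-TUB characterization for $\hat{g}_1$ (count is $0$ when $X \in \calC_1$, at least $1$ when $X \notin \calC_1$) and dispatches $\hat{f}_2$ by noting $\hat{f}_2 = f_2$ is a TUB of itself. Your added remark that the TUB notion transfers verbatim from $\Set{0,1}^n$ to $d^n$ is a point of care the paper leaves implicit, but it does not change the argument.
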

\begin{proof}[Proof sketch] 
    We extend the ideas for independent sets in Sec.~\ref{subsec:analy_conds:clique}, especially Lemma~\ref{lem:clique_tight_ub}.
\end{proof}
We derive each term in $\Tilde{f}_{\mathrm{RC}}$ as follows.
\begin{lemma}\label{lem:rc_obj}
    For any $p \in [0, 1]^{n \times c}$, $\tilde{f}_2(p) = \bbE_{X \sim p} \hat{f}_{2}(X)$ and
    $\tilde{g}_1(p) = \bbE_{X \sim p} \hat{g}_{1}(X)$ with
    \vspace{-1mm}
    \begin{center}
     $\bbE_{X \sim p} \hat{f}_{2}(X) = -\sum_{e = (u, v) \in E_s} \sum_{r = 0}^{c-1} p_{ur}p_{vr} \log (1 - P(e))$,
    \end{center}
    \begin{center}
        and $\bbE_{X \sim p} \hat{g}_{1}(X) = \sum_{(u, v) \in E_h} \sum_{r = 0}^{c-1} p_{ur} p_{vr}$.
    \end{center} 
    \vspace{-1mm}
\end{lemma}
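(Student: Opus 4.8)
The plan is to evaluate both expectations with a single two-step recipe: rewrite each discrete penalty as a sum over the relevant edge set of an indicator of the ``monochromatic'' event $\{X_u = X_v\}$, then apply linearity of expectation and reduce everything to the per-edge probability $\Pr_{X \sim p}[X_u = X_v]$. This is precisely the local-decomposability pattern noted in Sec.~\ref{subsec:derive_notes}, now instantiated for the non-binary coloring setting.

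First I would dispatch the key sub-computation, namely the probability that an edge is monochromatic. Under the non-binary model, the nodes draw their colors independently, with $\Pr_{X \sim p}[X_v = r] = p_{vr}$. Partitioning the event $\{X_u = X_v\}$ according to the common color $r$ yields a disjoint union, so $\Pr_{X \sim p}[X_u = X_v] = \sum_{r=0}^{c-1} \Pr[X_u = r,\, X_v = r] = \sum_{r=0}^{c-1} p_{ur} p_{vr}$, where the final equality uses independence of the colors of the distinct nodes $u$ and $v$.

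Next I would assemble the two formulas. For $\tilde{g}_1$, writing $\hat{g}_1(X) = \sum_{(u,v) \in E_h} \mathbb{1}(X_u = X_v)$ and applying linearity of expectation gives $\bbE_{X \sim p} \hat{g}_1(X) = \sum_{(u,v) \in E_h} \Pr_{X \sim p}[X_u = X_v] = \sum_{(u,v) \in E_h} \sum_{r=0}^{c-1} p_{ur} p_{vr}$, as claimed. For $\tilde{f}_2$, the coefficient $\log(1 - P(e))$ attached to each edge $e = (u,v)$ is a constant independent of $X$, so I pull it outside the expectation: from $\hat{f}_2(X) = -\sum_{(u,v) \in E_s} \mathbb{1}(X_u = X_v) \log(1 - P(e))$ and linearity, $\bbE_{X \sim p} \hat{f}_2(X) = -\sum_{(u,v) \in E_s} \log(1 - P(e)) \sum_{r=0}^{c-1} p_{ur} p_{vr}$, matching the stated expression.

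There is no genuine obstacle; the calculation is routine once the two enabling observations are in place. The only points requiring care are (i) rephrasing the summation constraint ``$X_u = X_v$'' as an explicit indicator so that linearity of expectation applies term by term, and (ii) being explicit that the per-edge factorization $\Pr[X_u = r,\, X_v = r] = p_{ur} p_{vr}$ rests on the independence of distinct nodes' colors, which is the modeling assumption defining the distribution $p$. The tight-upper-bound property that makes these expectations legitimate probabilistic objectives in the sense of Target~\ref{idea:exp_pr_good_obj} is supplied separately by Lemma~\ref{lem:rc_tub}.
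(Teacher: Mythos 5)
Your proof is correct and follows essentially the same route as the paper's: rewrite each penalty as a sum of indicators $\mathbb{1}(X_u = X_v)$ over the relevant edge set, apply linearity of expectation, and evaluate $\Pr_{X \sim p}[X_u = X_v] = \sum_{r=0}^{c-1} p_{ur} p_{vr}$ by partitioning on the common color and using independence. The only difference is presentational (you compute the per-edge probability up front, the paper does it last), which does not change the argument.
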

\begin{proof}[Proof sketch]
    We extend the ideas in Lemma~\ref{lem:cliq_obj}.
\end{proof}

We then derive the IDs of each term in $\Tilde{f}_{\mathrm{RC}}$.
\begin{lemma}[IDs of the terms in $\Tilde{f}_{\mathrm{RC}}$]\label{lem:rc_incre}
    For any $p \in [0, 1]^{n \times c}$, $i \in [n]$, and $x \in d$,
    \vspace{-1mm}
    \begin{center}    
        $\Delta \Tilde{g}_{1}(i, x; p) = 
    \sum_{x' \in d \setminus \Set{x}} p_{ix'} \sum_{(i, j) \in E_h} (p_{jx} - p_{jx'})$, and      \resizebox{0.49\textwidth}{!}{
    $\Delta \Tilde{f}_{2}(i, x; p) = 
    \sum_{x' \in d \setminus \Set{x}} p_{ix'} \sum_{(i, j) \in E_s} (p_{jx'} - p_{jx}) \log(1 - P(i, j))$.
    }
    \end{center}
    \vspace{-1mm}
\end{lemma}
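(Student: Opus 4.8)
The plan is to compute each incremental difference directly from the closed forms in Lemma~\ref{lem:rc_obj}, exploiting that both $\tilde{g}_1$ and $\tilde{f}_2$ decompose as sums over edges of the quadratic ``color-agreement'' quantities $\sum_{r} p_{ur} p_{vr}$. In the non-binary setting, local derandomization of node $i$ to color $x$ replaces node $i$'s color distribution by the one-hot vector of $x$ (i.e., $p_{ix} \to 1$ and $p_{ir} \to 0$ for $r \neq x$) while leaving every other node's row untouched. Consequently, the only agreement terms that change are those on edges incident to $i$, so each ID reduces to a sum of per-edge changes over $(i,j) \in E_h$ (for $\tilde{g}_1$) or $(i,j) \in E_s$ (for $\tilde{f}_2$), and the task collapses to analyzing a single agreement term.

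First I would fix an edge $(i,j)$ and track its agreement term $\sum_{r} p_{ir} p_{jr}$. Before derandomization it equals $p_{ix} p_{jx} + \sum_{r \neq x} p_{ir} p_{jr}$; afterward it equals $p_{jx}$, since node $i$'s row has become the indicator of $x$ and node $j$'s row is unchanged. The per-edge change is therefore $(1 - p_{ix}) p_{jx} - \sum_{r \neq x} p_{ir} p_{jr}$.

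The key step is to eliminate $p_{ix}$ via the per-node normalization $\sum_{r \in d} p_{ir} = 1$, which gives $1 - p_{ix} = \sum_{r \neq x} p_{ir}$. Substituting turns the per-edge change into $\sum_{r \neq x} p_{ir}(p_{jx} - p_{jr})$, a form free of $p_{ix}$. Summing over all incident hard edges and relabeling $r$ as $x'$ yields the claimed formula for $\Delta \tilde{g}_1$. The derivation for $\Delta \tilde{f}_2$ is identical, except that the sum ranges over $E_s$ and each per-edge change carries the weight $-\log(1 - P(i,j))$; propagating this weight flips the sign of the bracket and produces $(p_{jx'} - p_{jx})$, matching the statement.

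I do not anticipate a genuine obstacle: once local decomposability restricts attention to the edges at $i$, the rest is a careful but routine expansion. The two points that demand care are invoking the per-node normalization to cancel $p_{ix}$ (this is precisely what converts the raw difference into the compact ``sum over $x' \neq x$'' shape) and tracking the sign introduced by the $-\log(1 - P(e))$ weight so that the bracketed difference comes out correctly oriented for $\tilde{f}_2$.
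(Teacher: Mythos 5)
Your proposal is correct and follows essentially the same route as the paper's own proof: both start from the closed forms in Lemma~\ref{lem:rc_obj}, isolate the agreement terms on edges incident to $i$ (which become $p_{jx}$ after derandomization), and use the normalization $1 - p_{ix} = \sum_{x' \neq x} p_{ix'}$ to reach the compact form, with the $-\log(1-P(e))$ weight flipping the bracket for $\tilde{f}_2$. No gaps to report.
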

\begin{proof}[Proof sketch]
    We extend the ideas in 
    Lemma~\ref{lem:cliq_incre}.
    Changing $p_i$ only affects $j$ with $(i, j) \in E_{*}$, and we compute the differences for each $j$.
\end{proof}
We finally get the overall IDs
$\Delta \Tilde{f}_{\mathrm{RC}} = 
\Delta \Tilde{f}_{2} + 
\beta \Delta \Tilde{g}_{1}$.

\revise{
\smallsection{Notes.}
When practitioners encounter new problems that involve the conditions covered in this work, they can simply combine our derivations for the involved conditions, just as we did in this section.
Indeed, we believe many other CO problems involve the conditions considered in this work.}

\section{Experiments}\label{sec:experiments}
Through \revise{extensive experiments on various problems}, we shall show the effectiveness of \ours. 

\subsection{Facility location and maximum coverage}
We conduct experiments on the facility location problem
and the maximum coverage problem
(Secs.~\ref{subsec:problems:facility_location} \& \ref{subsec:problems:max_cover}).
For the experimental settings,
we mainly follow an existing work~\citep{wang2022towards}, with additional datasets and baselines.
For fair comparisons, we consider inductive settings (training and test sets are different) and use the same GNN architectures as \citet{wang2022towards}.
{See App.~\ref{subapp:discussions:inductive_and_transductive} for discussions on transductive settings.}
{See App.~\ref{subapp:exp_settings} for the detailed experimental settings.}

\bgroup
\begin{table*}[t!]
\caption{Results on facility location.
Running time (time; normalized): the smaller the better.
Objective (obj; normalized): the smaller the better.
In each column, 
\textcolor{Red!50}{$\blacksquare$} indicates ranking the 1st, \textcolor{Blue!50}{$\blacksquare$} the 2nd, and
\textcolor{Green!50}{$\blacksquare$} the 3rd.} \label{tab:results_facility_location}
\small
\centering
\scalebox{0.78}{

\newcommand{\rkone}[1]{{\setlength{\fboxsep}{1.5pt}\colorbox{Red!50}{#1}}}
\newcommand{\rktwo}[1]{{\setlength{\fboxsep}{1.5pt}\colorbox{Blue!50}{#1}}}
\newcommand{\rkthree}[1]{{\setlength{\fboxsep}{1.5pt}\colorbox{Green!50}{#1}}}

\begin{tabular}{l||c|c||c|c||c|c||c|c||c|c||c|c||c|c|c}
\hline
\multirow{2}[3]{*}{Method} & \multicolumn{2}{c||}{rand500} & \multicolumn{2}{c||}{rand800} & \multicolumn{2}{c||}{starbucks} & \multicolumn{2}{c||}{mcd} & \multicolumn{2}{c||}{subway} & \multicolumn{2}{c||}{average} & \multicolumn{3}{c}{average rank} \bigstrut\\
\cline{2-16}      & obj$\downarrow$   & time$\downarrow$  & obj$\downarrow$   & time$\downarrow$  & obj$\downarrow$   & time$\downarrow$  & obj$\downarrow$   & time$\downarrow$  & obj$\downarrow$   & time$\downarrow$  & obj$\downarrow$   & time$\downarrow$  & obj$\downarrow$   & time$\downarrow$  & sum$\downarrow$ \bigstrut\\
\hline
\hline
random & 1.43  & 263.29 & 1.51  & 125.65 & 1.86  & 461.54 & 1.64  & 122.45 & 1.52  & 119.40 & 1.61  & 240.34 & 11.6  & 13.6  & 25.2 \bigstrut[t]\\
greedy & 1.19  & \rkthree{2.30} & 1.16  & \rkthree{3.08} & 1.21  & 12.52  & 1.19  & 5.87  & 1.11  & 12.94 & 1.17  & \rkthree{7.34} & 8.6  & \rkthree{4.0}   & 12.6 \\
Gurobi & 1.07  & 133.68 & 1.26  & 65.47 & 1.07  & 197.08 & 1.51  & 63.88 & 2.63  & 69.08 & 1.51  & 105.84 & 9.0   & 11.8  & 20.8 \\
SCIP  & 1.73  & 103.55 & 2.35  & 100.34 & 19.76 & 154.83 & 55.10 & 247.91 & 55.01 & 366.47 & 26.79 & 194.62 & 15.8  & 12.8  & 28.6 \\
CardNN-S & 1.14  & 15.29 & 1.06  & 8.45  & 1.62  & 36.98 & 1.16  & 11.99 & 1.08  & 10.14 & 1.21  & 16.57 & 7.4   & 5.4   & 12.8 \\
CardNN-GS & \rkthree{1.00}  & 78.38 & \rkthree{1.01}  & 74.22 & 1.07  & 76.69 & 1.15  & 21.60 & \rkthree{1.03}  & 14.99 & 1.05  & 53.18 & 4.0   & 9.0   & 13.0 \\
CardNN-HGS & 1.00  & 110.14 & 1.01  & 95.11 & 1.07  & 174.87 & 1.15  & 49.20 & 1.02  & 28.48 & 1.05  & 91.56 & \rkthree{3.4}   & 11.2  & 14.6 \\
CardNN-noTTO-S & 1.43  & \rktwo{2.23}  & 1.55  & \rktwo{1.05}  & 3.34  & \rktwo{3.90}  & 3.90  & \rkone{1.00}  & 3.54  & \rkone{1.00}  & 2.75  & \rkone{1.84}  & 14.0  & \rktwo{1.6}   & 15.6 \\
CardNN-noTTO-GS & 1.14  & 31.39 & 1.15  & 27.18 & 1.52  & 15.73 & 1.26  & 8.03  & 1.23  & \rkthree{2.21}  & 1.26  & 16.91 & 8.8   & 4.8   & 13.6 \\
CardNN-noTTO-HGS & 1.14  & 40.97 & 1.15  & 32.30 & 1.45  & 29.98 & 1.27  & 14.44 & 1.21  & 3.67  & 1.24  & 24.27 & 8.4   & 6.6   & 15.0 \\
EGN-na\"ive & 1.10  & 86.45 & 1.14  & 44.66 & 1.14  & 232.44 & 1.66  & 24.53 & 1.47  & 60.13 & 1.30  & 89.64 & 8.6   & 10.6  & 19.2 \\
RL-transductive & 2.32  & 329.11 & 2.24  & 157.07 & 10.24 & 3461.54 & 2.77  & 918.37 & 2.51  & 895.52 & 4.02  & 1152.32 & 14.6  & 15.6  & 30.2 \\
RL-inductive & 1.70  & 329.17 & 1.85  & 157.35 & 2.72  & 577.00 & 2.55  & 153.08 & 2.36  & 149.28 & 2.24  & 273.18 & 13.2  & 15.0  & 28.2 \bigstrut[b] \\
\hline
\hline
\ours-short & 1.05  & \rkone{1.00}  & 1.03  & \rkone{1.00}  & \rkthree{1.03}  & \rkone{1.00}  & \rkthree{1.05}  & \rktwo{1.31}  & 1.04  & \rktwo{1.77}  & \rkthree{1.04}  & \rktwo{1.89}  & 4.2   & \rkone{1.4}   & \rkone{5.6} \bigstrut[t] \\
\ours-middle & \rktwo{1.00}  & 32.56 & \rktwo{1.00}  & 15.65 & \rktwo{1.03}  & \rkthree{4.35}  & \rktwo{1.01}  & \rkthree{4.47}  & \rktwo{1.01}  & 13.05 & \rktwo{1.01}  & 14.02 & \rktwo{2.0}   & 4.8   & \rktwo{6.8} \\
\ours-long & \rkone{1.00}  & 81.03 & \rkone{1.00}  & 31.12 & \rkone{1.00}  & 20.27 & \rkone{1.00}  & 19.41 & \rkone{1.00}  & 22.88 & \rkone{1.00}  & 34.94 & \rkone{1.0}   & 7.8   & \rkthree{8.8} \bigstrut[b] \\
\hline
\end{tabular}%
}
\end{table*}
\egroup

\newcommand{\rkone}[1]{{\setlength{\fboxsep}{1.5pt}\colorbox{Red!50}{#1}}}
\newcommand{\rktwo}[1]{{\setlength{\fboxsep}{1.5pt}\colorbox{Blue!50}{#1}}}
\newcommand{\rkthree}[1]{{\setlength{\fboxsep}{1.5pt}\colorbox{Green!50}{#1}}}

\begin{table*}[t!]
\caption{Results on {maximum coverage}.
Running time (time; normalized): smaller the better.
Objective (obj; normalized): the larger the better.
In each column, 
\textcolor{Red!50}{$\blacksquare$} indicates ranking the 1st, \textcolor{Blue!50}{$\blacksquare$} the 2nd, and
\textcolor{Green!50}{$\blacksquare$} the 3rd.
} \label{tab:results_max_cover}
\small
\centering
\scalebox{0.78}{
\begin{tabular}{l||c|c||c|c||c|c||c|c||c|c||c|c|c}
\hline
\multirow{2}[3]{*}{Method} & \multicolumn{2}{c||}{rand500} & \multicolumn{2}{c||}{rand1000} & \multicolumn{2}{c||}{twitch} & \multicolumn{2}{c||}{railway} & \multicolumn{2}{c||}{average} & \multicolumn{3}{c}{average rank} \bigstrut \\
\cline{2-14}      
& obj$\uparrow$   & time$\downarrow$ 
& obj$\uparrow$   & time$\downarrow$ 
& obj$\uparrow$   & time$\downarrow$ 
& obj$\uparrow$   & time$\downarrow$ 
& obj$\uparrow$   & time$\downarrow$ 
& obj$\uparrow$   & time$\downarrow$ 
& sum$\downarrow$ \bigstrut\\
\hline
\hline
random & 0.82  & 2666.67 & 0.79  & 727.27 & 0.52  & 369.23 & 0.96  & 315.79 & 0.77  & 1019.74 & 12.8  & 14.0  & 26.8 \bigstrut[t]\\
greedy & 0.98  & \rkone{1.00} & 0.99  & \rkone{1.00} & 1.00  & \rktwo{1.06} & 1.00  & \rkone{1.00} & 0.99  & \rkone{1.02} & 7.3   & \rkone{1.3} & \rkone{8.5} \\
Gurobi & \rkthree{1.00} & 1333.89 & \rktwo{1.00} & 363.94 & \rkone{1.00} & \rkone{1.00} & 1.00  & 158.87 & 1.00  & 464.42 & \rktwo{3.3} & 8.8   & 12.0 \\
SCIP  & 0.97  & 1334.11 & 0.96  & 362.09 & \rkone{1.00} & 5.05  & 1.00  & 159.84 & 0.98  & 465.27 & 6.3   & 10.8  & 17.0 \\
CardNN-S & 0.93  & 130.33 & 0.93  & 35.94 & 1.00  & 12.25 & 0.97  & 3.71  & 0.96  & 45.56 & 8.0   & 5.5   & 13.5 \\
CardNN-GS & 0.99  & 448.11 & 1.00  & 169.55 & 1.00  & 25.38 & \rkthree{1.00} & 23.21 & 1.00  & 166.56 & 4.3   & 9.0   & 13.3 \\
CardNN-HGS & 0.99  & 618.22 & 1.00  & 248.33 & \rkthree{1.00} & 47.28 & \rktwo{1.00} & 35.86 & 1.00  & 237.42 & \rktwo{3.3} & 10.5  & 13.8 \\
CardNN-noTTO-S & 0.70  & \rkthree{20.33} & 0.69  & \rkthree{6.18} & 0.01  & \rkthree{1.43} & 0.94  & \rktwo{1.54} & 0.58  & \rkthree{7.37} & 16.0  & \rktwo{2.8} & 18.8 \\
CardNN-noTTO-GS & 0.82  & 115.56 & 0.79  & 61.18 & 0.02  & 2.97  & 0.96  & 7.53  & 0.65  & 46.81 & 13.5  & 5.0   & 18.5 \\
CardNN-noTTO-HGS & 0.82  & 132.56 & 0.79  & 75.15 & 0.19  & 3.62  & 0.96  & 12.14 & 0.69  & 55.87 & 12.3  & 6.5   & 18.8 \\
EGN-naive & 0.92  & 1334.56 & 0.91  & 364.45 & 0.13  & 185.22 & 0.97  & 159.13 & 0.73  & 510.84 & 11.3  & 12.8  & 24.0 \\
RL-transductive & 0.92  & 3333.33 & 0.82  & 909.09 & 0.95  & 2769.23 & 0.96  & 2368.42 & 0.91  & 2345.02 & 11.5  & 15.5  & 27.0 \\
RL-inductive & 0.77  & 3334.00 & 0.77  & 909.64 & 0.59  & 464.48 & 0.96  & 397.08 & 0.77  & 1276.30 & 13.8  & 15.5  & 29.3 \bigstrut[b]\\
\hline
\hline
\ours-short & 0.99  & \rktwo{10.67} & 0.99  & \rktwo{5.55} & 1.00  & 2.80  & 1.00  & \rkthree{2.63} & \rkthree{1.00} & \rktwo{5.41} & 5.8   & \rktwo{2.8} & \rkone{8.5} \bigstrut[t] \\
\ours-middle & \rktwo{1.00} & 168.44 & \rkthree{1.00} & 23.76 & 1.00  & 17.58 & 1.00  & 10.75 & \rktwo{1.00} & 55.13 & 3.8   & 6.5   & \rktwo{10.3} \\
\ours-long & \rkone{1.00} & 333.56 & \rkone{1.00} & 222.85 & 1.00  & 29.77 & \rkone{1.00} & 21.11 & \rkone{1.00} & 151.82 & \rkone{2.3} & 9.0   & 11.3 \bigstrut[b]\\
\hline
\end{tabular}%
}
\end{table*}

\smallsection{Methods.}
We compare \ours with:
(1) \textbf{random}: $k$ locations or sets are picked uniformly at random;
(2) \textbf{greedy}: deterministic greedy algorithms;
(3-4) \textbf{Gurobi}~\citep{gurobi} and \textbf{SCIP}~\citep{bestuzheva2021scip,ortools}: the problems are formulated as MIPs and the two solvers are used;
(5) \textbf{{CardNN}}~\citep{wang2022towards}: a SOTA UL4CO method with three variants;
(6) \textbf{{CardNN}-noTTO}: 
{CardNN} directly optimizes on each test graph in test time, and these are variants of {CardNN} without test-time optimization;
(7) \textbf{EGN-naive}: EGN~\citep{karalias2020erdos} with a naive probabilistic objective construction and iterative rounding;
(8) \textbf{RL}: a reinforcement-learning method~\citep{kool2018attention}.
{See App.~\ref{subapp:discussions:rl_and_egn} for discussions on reinforcement learning.}

\begingroup
\setlength{\tabcolsep}{1.7pt}
\begin{table*}[t!]
\caption{Results on robust coloring.
Running time (time; in seconds): the smaller the better.
Objective (obj): the smaller the better.
In each column, 
\textcolor{Red!50}{$\blacksquare$} indicates ranking the 1st, and \textcolor{Blue!50}{$\blacksquare$} the 2nd.
} \label{tab:results_robust_coloring}

\small
\centering
\scalebox{0.75}{
\begin{tabular}{l||c|c||c|c||c|c||c|c||c|c||c|c||c|c||c|c||c|c|c}
\hline
      \multirow{2}[3]{*}{Method} & \multicolumn{2}{c||}{collins, 18 colors} & \multicolumn{2}{c||}{collins, 25 colors} & \multicolumn{2}{c||}{gavin, 8 colors} & \multicolumn{2}{c||}{gavin, 15 colors} & \multicolumn{2}{c||}{krogan, 8 colors} & \multicolumn{2}{c||}{krogan, 15 colors} & \multicolumn{2}{c||}{ppi, 47 colors} & \multicolumn{2}{c||}{ppi, 50 colors} & \multicolumn{3}{c}{average rank} \bigstrut \\
\cline{2-20}
& obj$\downarrow$   & time$\downarrow$  & obj$\downarrow$   & time$\downarrow$  & obj$\downarrow$   & time$\downarrow$  & obj$\downarrow$   & time$\downarrow$  & obj$\downarrow$   & time$\downarrow$  & obj$\downarrow$   & time$\downarrow$  & obj$\downarrow$   & time$\downarrow$  & obj$\downarrow$   & time$\downarrow$  & obj$\downarrow$   & time$\downarrow$  & sum$\downarrow$ \bigstrut\\
\hline
\hline
greedy-RD & 115.33 & 300.34 & 23.42 & 300.79 & 66.51 & 300.53 & 7.36  & 301.46 & 117.47 & 300.06 & {\setlength{\fboxsep}{1.5pt}\colorbox{Red!50}{0.87}} & 301.24 & 4.16  & 301.31 & {\setlength{\fboxsep}{1.5pt}\colorbox{Blue!50}{1.23}} & 301.24 & 2.88  & 3.25  & {\setlength{\fboxsep}{1.5pt}\colorbox{Blue!50}{6.13}} \bigstrut[t]\\
greedy-GA & 114.36 & {\setlength{\fboxsep}{1.5pt}\colorbox{Blue!50}{188.21}} & 22.20 & {\setlength{\fboxsep}{1.5pt}\colorbox{Blue!50}{243.93}} & 66.51 & 398.90 & 7.36  & 540.62 & 117.47 & 941.35 & {\setlength{\fboxsep}{1.5pt}\colorbox{Red!50}{0.87}} & 1256.66 & {\setlength{\fboxsep}{1.5pt}\colorbox{Blue!50}{3.66}} & 1416.38 & {\setlength{\fboxsep}{1.5pt}\colorbox{Blue!50}{1.23}} & 1484.27 & {\setlength{\fboxsep}{1.5pt}\colorbox{Blue!50}{2.50}} & 4.25  & 6.75 \\
DC    & 586.56 & 300.28 & 159.15 & 300.38 & 311.91 & {\setlength{\fboxsep}{1.5pt}\colorbox{Blue!50}{300.11}} & 58.10 & {\setlength{\fboxsep}{1.5pt}\colorbox{Blue!50}{300.12}} & 1065.52 & {\setlength{\fboxsep}{1.5pt}\colorbox{Blue!50}{300.07}} & 1.76  & 300.46 & 43.35 & {\setlength{\fboxsep}{1.5pt}\colorbox{Blue!50}{300.13}} & 6.72  & {\setlength{\fboxsep}{1.5pt}\colorbox{Blue!50}{300.76}} & 5.00  & {\setlength{\fboxsep}{1.5pt}\colorbox{Blue!50}{2.50}} & 7.50 \\
Gurobi & {\setlength{\fboxsep}{1.5pt}\colorbox{Blue!50}{87.28}} & 301.71 & {\setlength{\fboxsep}{1.5pt}\colorbox{Blue!50}{16.23}} & 306.10 & {\setlength{\fboxsep}{1.5pt}\colorbox{Red!50}{42.41}} & 300.80 & {\setlength{\fboxsep}{1.5pt}\colorbox{Blue!50}{7.28}} & 303.50 & {\setlength{\fboxsep}{1.5pt}\colorbox{Red!50}{46.78}} & 300.80 & {\setlength{\fboxsep}{1.5pt}\colorbox{Red!50}{0.87}} & {\setlength{\fboxsep}{1.5pt}\colorbox{Blue!50}{51.70}} & 4.60  & 328.48 & 1.31  & 313.23 & {\setlength{\fboxsep}{1.5pt}\colorbox{Blue!50}{2.50}} & 4.00  & 6.50 \\
\hline
\hline
\ours (CPU) & 
\multirow{2}[3]{*}{{\setlength{\fboxsep}{1.5pt}\colorbox{Red!50}{82.26}}} & {\setlength{\fboxsep}{1.5pt}\colorbox{Red!50}{79.36}} & 
\multirow{2}[3]{*}{{\setlength{\fboxsep}{1.5pt}\colorbox{Red!50}{15.16}}} & 
{\setlength{\fboxsep}{1.5pt}\colorbox{Red!50}{54.37}} & 
\multirow{2}[3]{*}{{\setlength{\fboxsep}{1.5pt}\colorbox{Blue!50}{42.99}}} &
{\setlength{\fboxsep}{1.5pt}\colorbox{Red!50}{152.20}} & 
\multirow{2}[3]{*}{{\setlength{\fboxsep}{1.5pt}\colorbox{Red!50}{6.72}}} &
{\setlength{\fboxsep}{1.5pt}\colorbox{Red!50}{260.90}} & 
\multirow{2}[3]{*}{{\setlength{\fboxsep}{1.5pt}\colorbox{Blue!50}{53.44}}} & 
{\setlength{\fboxsep}{1.5pt}\colorbox{Red!50}{211.43}} & 
\multirow{2}[3]{*}{{\setlength{\fboxsep}{1.5pt}\colorbox{Red!50}{0.87}}} & 
{\setlength{\fboxsep}{1.5pt}\colorbox{Red!50}{8.55}} & 
\multirow{2}[3]{*}{{\setlength{\fboxsep}{1.5pt}\colorbox{Red!50}{2.93}}} & 
{\setlength{\fboxsep}{1.5pt}\colorbox{Red!50}{116.54}} &
\multirow{2}[3]{*}{{\setlength{\fboxsep}{1.5pt}\colorbox{Red!50}{1.01}}} & 
{\setlength{\fboxsep}{1.5pt}\colorbox{Red!50}{120.56}} &
\multirow{2}[3]{*}{{\setlength{\fboxsep}{1.5pt}\colorbox{Red!50}{1.50}}} & 
\multirow{2}[3]{*}{{\setlength{\fboxsep}{1.5pt}\colorbox{Red!50}{1.00}}} & 
\multirow{2}[3]{*}{{\setlength{\fboxsep}{1.5pt}\colorbox{Red!50}{2.50}}} \bigstrut[t] \\
\ours (GPU) &  & {\setlength{\fboxsep}{1.5pt}\colorbox{Red!50}{7.09}} & & {\setlength{\fboxsep}{1.5pt}\colorbox{Red!50}{8.03}} & & {\setlength{\fboxsep}{1.5pt}\colorbox{Red!50}{13.28}} & & {\setlength{\fboxsep}{1.5pt}\colorbox{Red!50}{17.25}} & & {\setlength{\fboxsep}{1.5pt}\colorbox{Red!50}{13.73}} & & {\setlength{\fboxsep}{1.5pt}\colorbox{Red!50}{1.91}} & & {\setlength{\fboxsep}{1.5pt}\colorbox{Red!50}{5.24}} & & {\setlength{\fboxsep}{1.5pt}\colorbox{Red!50}{5.48}} & & & \bigstrut[b]\\
\hline
\end{tabular}%
}
\vspace{-5mm}
\end{table*}
\endgroup

\smallsection{Datasets.}
We consider both synthetic and real-world graphs.
\begin{itemize}[leftmargin=*]
\setlength\itemsep{0em}
    \item \textbf{Random graphs}: 
    The number after ``rand'' represents the size of the random graphs. 
    Each group of random graphs contains 100 graphs from the same distribution.
    \item \textbf{Real-world graphs}: For facility location, each graph contains real-world entities with locations (\textit{starbucks}, \textit{mcd}, \textit{subway}). For maximum coverage, each graph contains real-world sets (\textit{twitch}, \textit{railway}). Each group of real-world graphs contains multiple graphs from the same source.
\end{itemize}

\smallsection{Speed-quality trade-offs.}
For several methods (including \ours),
we can grant more running time to obtain better optimization quality.
For \ours, we use test-time augmentation~\citep{jin2022empowering} on the test graphs by adding perturbations into graph topology and features.
\revise{\ours-short does not test-time augmentation, while the other two variants} use different numbers of augmented data.

\smallsection{{Evaluation.}}
For each group of datasets and each method, we report the average optimization objective and running time over five trials.
We also report the overall objective, time, and ranks, averaged over all the groups of datasets.
The average rank ``sum'' (ARS) is the summation of the average ranks w.r.t. objective and time.
See App.~\ref{app:additional_results} for the full results with standard deviations and ablation studies.

\smallsection{{Results.}}
On both problems, \ours achieves the best trade-offs overall (Tables~\ref{tab:results_facility_location} \& \ref{tab:results_max_cover}).
On facility location, the top-$3$ methods w.r.t. ARS are the three variants of \ours.
On maximum coverage, the three variants rank 1, 3, and 4 w.r.t. ARS, respectively.
In Figure~\ref{fig:tradeoff_rand}, we report the detailed trade-offs of different methods on the random graphs,
visually illustrating the best trade-off overall by \ours.

\subsection{Robust coloring}
{We conduct experiments on the robust coloring problem (see Sec.~\ref{subsec:problems:robust_coloring})
under transductive settings directly optimizing probabilistic decisions $p$.
See App.~\ref{subapp:exp_settings} for more details.}

\smallsection{Methods.}
We compare \ours with four baseline methods.
(1-2) \textbf{Greedy-RD} and \textbf{greedy-GA}: both methods decide the colors following an enumeration of nodes, where greedy-RD follows a random (RD) permutation of the nodes while greedy-GA uses a genetic algorithm (GA) to learn the permutation;\footnote{Greedy-GA is the method proposed by~\cite{yanez2003robust} in the original paper of robust coloring.}
(3) \textbf{Deterministic coloring (DC)}: 
a deterministic greedy coloring algorithm~\citep{kosowski2004classical} is used to avoid all the hard conflicts, and it tries to avoid as many soft conflicts as possible;
(4) \textbf{Gurobi}: the problem is formulated as an MIP and the solver is used.

\smallsection{Datasets.}
We use four real-world uncertain graphs:
(1) \textbf{collins},
(2) \textbf{gavin},
(3) \textbf{krogan}, and
(4) \textbf{PPI}.

\smallsection{Speed-quality trade-offs.}
We record the running time of \ours using only CPUs and using GPUs.
For \ours, we use multiple initial probabilities.
We make sure that even with only CPUs, \ours uses less time than each baseline.

\smallsection{Evaluation.}
For each group of datasets and each method,
we report the average optimization objective and running time over five trials.
The average ranks are computed in the same way as for facility location and maximum coverage.

\smallsection{Results.}
As shown in Table~\ref{tab:results_robust_coloring}, with the least running time, \ours consistently achieves (1) better optimization quality than the two \textbf{greedy} baselines and \textbf{DC}
and (2) better optimization quality than \textbf{Gurobi}
in most cases.
This superiority holds even when we only use CPUs for \ours.
When using GPUs, \ours is even faster.

\revise{
\subsection{Ablation studies}
We analyze different components in \ours and show that
(1) good probabilistic objectives are helpful,
(2) greedy derandomization is more effective than iterative rounding, and
(3) incremental derandomization improves the speed.
See App.~\ref{subapp:ablation_study} for more details.
}

\section{General Related Work: Learning for CO}

We shall discuss more general related works, including other learning-based methods for CO problems.

\smallsection{Reinforcement learning for CO.}
Typical techniques include reinforcement learning (RL).
The pioneers who applied RL to CO  include~\cite{bello2016neural} and \cite{khalil2017learning}.
Most reinforcement-learning-for-combinatorial-optimization methods focus on routing problems such as the traveling salesman problem (TSP) and the vehicle routing problem (VRP)~\citep{berto2023rl4co,kool2018attention,kim2021learning,delarue2020reinforcement,qiu2022dimes,nazari2018reinforcement,ye2023deepaco,chalumeau2023copa,luo2023heavydecoder,grinsztajn2023winner,ye2024glop,xiao2024distilling}, as well as maximum independent sets (MIP)~\citep{ahn2020learning,qiu2022dimes,sun2023difusco,li2023distribution}.
See also some recent surveys on RL4CO~\citep{mazyavkina2021reinforcement,bengio2021machine,cappart2023combinatorial,munikoti2023challenges} for more details.
The existing RL-based methods still suffer from efficiency issues.
See the discussions by~\cite{wang2022unsupervised} and \cite{wang2022towards}.
See also App.~\ref{subapp:discussions:rl_and_egn} for more discussions.

\smallsection{Other machine-learning techniques for CO.}
Some other machine-learning techniques have been proposed for CO.
There is recent progress based on
search~\citep{choo2022simulation,son2023meta,li2023distribution},
sampling~\citep{sun2023revisiting},
graph-based diffusion~\citep{sun2023difusco},
generative flow networks~\citep{zhang2023let},
meta-learning~\citep{qiu2022dimes,wang2023unsupervised},
and quantum machine learning~\citep{ye2023towards}.
Physics-inspired machine learning has also been considered by researchers~\citep{schuetz2022combinatorial,aramon2019physics,schuetz2022graph}.
There is also a line of research on perturbation-based methods for CO~\citep{poganvcic2019differentiation,berthet2020learning,paulus2021comboptnet,ferber2023surco}.

\section{Conclusion and Discussion}
In this work, we study and propose 
\ours (\bus{U}supervised \bus{Com}binatorial Optimization \bus{U}nder \revise{\bus{Com}monly-involved} Conditions).
Specifically, we concretize the targets 
for probabilistic objective construction and derandomization 
(Sec.~\ref{sec:proposed_method}) with theoretical justification (Theorems~\ref{thm:concave_exp_prob} and \ref{thm:greedy_like_der_good}),
derive non-trivial objectives and derandomization for various conditions (e.g., cardinality constraints and minimum) to meet the targets (Sec.~\ref{sec:analy_conds}; Lemmas~\ref{lem:card_tight_ub} to \ref{lem:cliq_incre}),
apply the derivations to different problems involving such conditions (Sec.~\ref{sec:problems}),
and finally show the empirical superiority of our method via extensive experiments (Sec.~\ref{sec:experiments}).
For reproducibility, we share the code and datasets online \citep{appendix}.

\revise{
As discussed in Sec.~\ref{subsec:derive_notes}, we have not covered all conditions involved in CO in this work, while we believe that our high-level ideas are applicable to other conditions and problems.
The performance of \ours and general UL4CO on other conditions~\citep{min2023unsupervised,lachapelle2019gradient} and hard instances~\citep{xu2007random,li2023hardsatgen} are interesting topics for future exploration.
}

\clearpage
\newpage

\section*{Acknowledgments}
The authors thank all the anonymous reviewers for their helpful comments.

The authors thank Dr. Runzhong Wang @ MIT, Federico Berto @ KAIST, Chuanbo Hua @ KAIST, and Dr. Junyoung Park @ Qualcomm for constructive discussions.

Fanchen Bu gives special thanks to Prof. Jaehoon Kim @ KAIST, Dr. Hong Liu @ IBS, and Yuhao Yao @ Huawei from whom Fanchen Bu studied the probabilistic method.

This work was supported by Institute of Information \& Communications Technology Planning \& Evaluation (IITP) grant funded by the Korea government (MSIT) (No. 2022-0-00871, Development of AI Autonomy and Knowledge Enhancement for AI Agent Collaboration) (No. 2019-0-00075, Artificial Intelligence Graduate School Program (KAIST)) (No. 2019-0-01906, Artificial Intelligence Graduate School Program (POSTECH)).

\section*{Impact Statement}
This paper presents work whose goal is to advance the field of Machine Learning, especially Machine Learning for Combinatorial Optimization.
There are many potential societal consequences of our work, none of which we feel must be specifically highlighted here.

\normalem
\bibliography{ref}
\bibliographystyle{icml2024}

\clearpage
\appendix   
\onecolumn

\section{Proofs}\label{app:proofs}
Here, we provide proof for each theoretical statement in the main text.

\begin{reptheorem}{thm:concave_exp_prob}[Expectations are all you need]
    For any $g: \Set{0, 1}^n \to \bbR$,
    $\tilde{g}: [0, 1]^n \to \bbR$ with $\Tilde{g}(p) = \bbE_{X \sim p} g(X)$
    is differentiable and entry-wise concave w.r.t. $p$.
\end{reptheorem}
\begin{proof}
    For any $p$ and $i$, we have     
    \begin{align*}
     \Tilde{g}(p)
    &=\bbE_{X \sim p} g(X) \\
    &=\sum_{X \in \Set{0,1}^n} \Pr\nolimits_{p}[X] g(X) \\
    &=\sum_{X} \prod_{v \in V_X} p_v \prod_{u \in [n] \setminus V_X} (1 - p_u) g(X) \\
    &=\sum_{X \colon i \in V_X}\left(\prod_{v\in V_X,v\neq i}p_v\prod_{u\in[n]\setminus V_X}(1-p_u)\right) p_i g(X) 
    + \sum_{X \colon i \notin V_X}\left(\prod_{v\in V_X}p_v\prod_{u\in[n]\setminus V_X,u\neq i}(1-p_u)\right) (1 - p_i) g(X) \\
    &=p_i \sum_{X \colon i \in V_X}\prod_{v\in V_X,v\neq i}p_v\prod_{u\in[n]\setminus V_X}(1-p_u) g(X) + (1 - p_i) \sum_{X \colon i \notin V_X}\prod_{v\in V_X}p_v\prod_{u\in[n]\setminus V_X,u\neq i}(1-p_u) g(X)\\
    &=p_i \tilde{g}(\der(i, 1; p)) + (1 - p_i) \tilde{g}(\der(i, 0; p))\\
    &\geq p_i \tilde{g}(\der(i, 1; p)) + (1 - p_i) \tilde{g}(\der(i, 0; p)),
    \end{align*}    
    completing the proof on entry-wise concavity.
    Regarding differentiability, 
    since 
    \[
    \bbE_{X \sim p} g(X) = 
    \sum_{X \in \Set{0,1}^n} \Pr\nolimits_{p}[X] g(X),
    \]    
    it suffices to show that
    \[
    \Pr\nolimits_{p}[X] g(X) = \prod_{v \in V_X} p_v \prod_{u \in [n] \setminus V_X} (1 - p_u) g(X)
    \]
    is differentiable w.r.t $p$ for each $X \in \Set{0,1}^n$.
    Indeed, fix any $X$, $\prod_{v \in V_X} p_v \prod_{u \in [n] \setminus V_X} (1 - p_u) g(X)$ is a polynomial of $p_i$'s,
    and is thus differentiable w.r.t. $p$.    
\end{proof}

\begin{reptheorem}{thm:greedy_like_der_good}[Goodness of greedy derandomization]
    For any entry-wise concave $\Tilde{f}$ and any $p_{\mathrm{o}} \in [0, 1]^n$, the above process of greedy derandomization can always reach a point where the final $p_{\mathrm{final}}$ is 
    \textbf{(G1)} discrete (i.e., $p_{\mathrm{final}} \in \Set{0, 1}^n$),
    \textbf{(G2)} no worse than $p_{\mathrm{o}}$ (i.e., $\Tilde{f}(p_{\mathrm{final}}) \leq \Tilde{f}(p_{\mathrm{o}})$),
    and \textbf{(G3)} a local minimum (i.e., $\Tilde{f}(p_{\mathrm{final}}) \leq \min_{(i, x) \in [n] \times \Set{0, 1}} \Tilde{f}(\operatorname{der}(i, x; p_{\mathrm{final}}))$).
\end{reptheorem}
\begin{proof}[Proof of Theorem~\ref{thm:greedy_like_der_good}]
    First, we claim that for any non-discrete $p_{\mathrm{cur}} \notin \Set{0, 1}^n$, we can always derandomize it through a series of local derandomization while the value of $\tilde{f}$ does not increase.
    This is guaranteed by the entry-wise concavity of $\Tilde{f}$.
    Specifically, since 
    \[
    p_i \Tilde{f}(\operatorname{der}(i, 1; p)) + (1 - p_i) \Tilde{f}(\operatorname{der}(i, 0; p)) \leq \Tilde{f}(p), \forall p, i,
    \]
    we have 
    \[
    \min(\operatorname{der}(i, 1; p), \operatorname{der}(i, 0; p)) \leq \Tilde{f}(p), \forall p, i,
    \]
    which implies that we can always derandomize a non-discrete entry without increasing the value of $\tilde{f}$.
    Therefore, if we greedily improve $\Tilde{f}$ via local derandomization, we can always terminate at a discrete point, completing the proof for point (G1).
    Point (G2) holds since at each step we make sure that the value of $\Tilde{f}$ does not increase.
    Point (G3) holds from the way we conduct local derandomization.
    Specifically, if the current $p_{\mathrm{cur}}$ is not a local minimum, we can always find a possible local derandomization step to proceed with the process while strictly decreasing the value of $\Tilde{f}$.
\end{proof}

\begin{replemma}{lem:card_tight_ub}
    $\hat{f}_{\mathrm{card}}$ is a tight upper bound of $\mathbb{1}[\Abs{V_X} \notin C_c]$.
\end{replemma}
\begin{proof}
    When $X \notin C_c$, 
    $\Abs{V_X} \neq k, \forall k \in C_c$, and thus
    $\hat{f}_{\mathrm{card}}(X; C_c) = \min_{k \in C_c} \Abs{\Abs{V_X} - k} > 0$ and thus $\hat{f}_{\mathrm{card}}(X; C_c) \geq 1$ since $\Abs{V_X} \neq k$ is an integer.
    When $X \in C_c$,
    $\exists k \in C_c, \Abs{V_X} = k$ and thus
    $\hat{f}_{\mathrm{card}}(X; C_c) = \min_{k \in C_c} \Abs{\Abs{V_X} - k} = 0$.
\end{proof}

\begin{replemma}{lem:card_incre}[IDs of $\Tilde{f}_{\mathrm{card}}$]
    For any $p, i, t$,
    let $q_s \coloneqq \Pr_{X \sim p}[\Abs{V_X} = s]$
    and $q'_s \coloneqq \Pr_{X \sim p}[\Abs{V_X \setminus \Set{i}} = s], \forall s$,    
    \setlength{\jot}{0pt}
    \begin{align*}
    q'_t &= (1 - p_i)^{-1} \sum_{s = 0}^t q_s \left(\frac{p_i}{p_i-1}\right)^{t-s} \text{~(if~ $p_i \neq 1$)} \\
    &= (p_i)^{-1} \sum_{s = 0}^{n - t - 1} q_{t+s+1} \left(\frac{p_i-1}{p_i}\right)^{s} \text{~(if~ $p_i \neq 0$)}.
    \end{align*}
    Based on that,
    \[
    \begin{cases} 
    \Delta \Tilde{f}_{\mathrm{card}}(i, 0, p; C_c) = 
    \sum_{t \in [n] \setminus C_c} (q'_t - q_t) \min_{k \in C_c} \Abs{t - k}, \\
    \Delta \Tilde{f}_{\mathrm{card}}(i, 1, p; C_c) = 
    \sum_{t \in [n] \setminus C_c} (q'_{t-1} - q_t) \min_{k \in C_c} \Abs{t - k}.
    \end{cases}
    \]        
\end{replemma}
\begin{proof}
    Fix any $i \in [n]$ and any $p \in [0, 1]^n$, we have
    \begin{equation}\label{eq:poi_bin_rmv_node}
    \begin{cases}
        \Pr_{X \sim p}[\Abs{V_X} = 0] = \Pr_{X \sim p}[\Abs{V_X \setminus \Set{i}} = 0] (1 - p_i) \\
        \Pr_{X \sim p}[\Abs{V_X} = t] = 
        \Pr_{X \sim p}[\Abs{V_X \setminus \Set{i}} = t] (1 - p_i) + 
        \Pr_{X \sim p}[\Abs{V_X \setminus \Set{i}} = t - 1] p_i,
        \forall t \\
        \Pr_{X \sim p}[\Abs{V_X} = n] = \Pr_{X \sim p}[\Abs{V_X \setminus \Set{i}} = n - 1] p_i
    \end{cases}.
    \end{equation}
    Let $q_s$ denote $\Pr_{X \sim p}[\Abs{V_X} = s]$ for each $s$ as in the statement,
    and also let $\tilde{q}_s$ denote $\Pr_{X \sim p}[\Abs{V_X \setminus \Set{i}} = s]$.
    By Equation~\eqref{eq:poi_bin_rmv_node}, if we start from 
    $q_0 = \tilde{q}_0 (1 - p_i)$,
    we have
    \[
    \tilde{q}_0 = \frac{q_0}{1 - p_i},    
    \tilde{q}_1 = \frac{q_1 - p_i \tilde{q}_0}{1 - p_i} = \frac{q_1 (1 - p_i) - q_0 p_i}{(1 - p_i)^2}, \cdots,
    \]
    which satisfies $\tilde{q}_t
    = (1 - p_i)^{-1} \sum_{s = 0}^t q_s \left(\frac{p_i}{p_i-1}\right)^{t-s}$.
    Now, if 
    \[
    \tilde{q}_t
    = (1 - p_i)^{-1} \sum_{s = 0}^t q_s \left(\frac{p_i}{p_i-1}\right)^{t-s}
    \]
    holds for all $t \leq T - 1$, we aim to show that it also holds for $t = T$, which shall prove the statement by mathematical induction.
    Indeed, we have 
    \[
    \tilde{q}_T = \frac{q_T - p_i \tilde{q}_{T - 1}}{1 - p_i} = \frac{q_T - p_i (1 - p_i)^{-1} \sum_{s = 0}^{T - 1} q_s \left(\frac{p_i}{p_i-1}\right)^{T - 1 -s}}{1 - p_i} = (1 - p_i)^{-1} \sum_{s = 0}^T q_s \left(\frac{p_i}{p_i-1}\right)^{T-s},
    \]
    completing the proof.
    If we start from $q_n = \Tilde{q}_{n-1} p_i$, we can obtain the another term (i.e., $(p_i)^{-1} \sum_{s = 0}^{n - t - 1} q_{t+s+1} \left(\frac{p_i-1}{p_i}\right)^{s}$) in the statement in a similar way.
    
    In practice, we use $(1 - p_i)^{-1} \sum_{s = 0}^t q_{t-s} \left(\frac{p_i}{p_i-1}\right)^{t - s}$ for $0 \leq p_i \leq 0.5$ and $(p_i)^{-1} \sum_{s = 0}^{n - t - 1} q_{t+s+1} \left(\frac{p_i-1}{p_i}\right)^{s}$ for $0.5 < p_i \leq 1$, which results in higher numerical stability.
\end{proof}

\begin{replemma}{lem:optim_tight_ub}
    $\hat{f}_{\mathrm{ms}}$ is a tight upper bound of ${f}_{\mathrm{ms}}$.
\end{replemma}
\begin{proof}
    As mentioned in Rem.~\ref{rem:tight_ub}, since $\hat{f}_{\mathrm{ms}} = {f}_{\mathrm{ms}}$, $\hat{f}$ is a tight upper bound of ${f}_{\mathrm{ms}}$.
\end{proof}

\begin{replemma}{lem:optim_exp}
    For any $p \in [0, 1]^n$, 
    $\bbE_{X \sim p} \hat{f}_{\mathrm{ms}}(X; i, h) = p_{v_1} d_1 + (1 - p_{v_1})p_{v_2} d_2 + \cdots + (\prod_{j = 1}^{n-1} (1-p_{v_j}))p_{v_{n}} d_n$.
\end{replemma}
\begin{proof}
    By the definition of expectation,
    \[
    \bbE_{X \sim p} \hat{f}_{\mathrm{ms}}(X; i, h) = \sum_t \Pr[\min_{v_X \in V_X} h(i, v_X) = d_t] d_t,
    \]
    where
    $\min_{v_X \in V_X} h(i, v_X) = d_t$ if and only if $v_{t'} \notin V_x$ for each $t' < t$ and $v_t \in V_x$, which has probability $(\prod_{j = 1}^{t-1} (1-p_{v_j}))p_{v_{t}}$.
    Hence,
    \[
    \bbE_{X \sim p} \hat{f}_{\mathrm{ms}}(X; i, h) = p_{v_1} d_1 + (1 - p_{v_1})p_{v_2} d_2 + \cdots + (\prod_{j = 1}^{n-1} (1-p_{v_j}))p_{v_{n}} d_n.
    \]    
\end{proof}

\begin{replemma}{lem:opt_incre}[IDs of $\Tilde{f}_{\mathrm{ms}}$]
    For any $p \in [0, 1]^n$ 
    and $j \in [n]$,     
    let $q_j \coloneqq (\prod_{k = 1}^{j-1} (1-p_{v_{k}}))p_{v_{j}}$, the coefficient of $d_j$ in $\tilde{f}_{\mathrm{ms}}$. Then
    \[
    \begin{cases} 
    \Delta \Tilde{f}_{\mathrm{ms}}(v_j, 0, p; i, h) = - q_j d_j 
     + \frac{p_{v_j}}{1-p_{v_j}}\sum_{j' > j} q_{j'}d_{j'},\\
     \Delta \Tilde{f}_{\mathrm{ms}}(v_j, 1, p; i, h) = 
    \sum_{j' > j} q_{j'}(d_j - d_{j'}).
    \end{cases}
    \]    
\end{replemma}
\begin{proof}
    When $p' = \operatorname{der}(v_j, 0; p)$, we have
    \begin{align*}
    \Tilde{f}_{\mathrm{ms}}(p'; i,h) 
    &= p'_{v_1} d_1 + (1 - p'_{v_1})p'_{v_2} d_2 + \cdots + (\prod_{s = 1}^{n-1} (1-p'_{v_s}))p'_{v_{n}} d_n \\
    &= \sum_{s < j} \prod_{k = 1}^{s - 1} (1-p_{v_k})p_{v_{s}} d_s + 0 + \sum_{t > j} \prod_{1 \leq k \leq t - 1, k \neq j} (1-p_{v_{k}}) p_{v_{t}} d_{t}\\
    &= \sum_{s < j} q_s d_s + 0 + \sum_{j' > j} \frac{1}{1 - p_{v_j}} q_{j'} d_{j'}\\
    &= \sum_{s = 1}^{n} q_s d_s - q_jd_j + \sum_{j' > j} \frac{p_{v_{j}}}{1 - p_{v_{j}}} q_{j'} d_{j'}\\
    &= \Tilde{f}_{\mathrm{ms}}(p; i,h)
     - q_j d_j 
     + \frac{p_{v_j}}{1-p_{v_j}}\sum_{j' > j} q_{j'}d_{j'}.
    \end{align*}    
    When $p' = \operatorname{der}(v_j, 1; p)$, we have
    \begin{align*}
    \Tilde{f}_{\mathrm{ms}}(p'; i,h)
    &= p'_{v_1} d_1 + (1 - p'_{v_1})p'_{v_2} d_2 + \cdots + (\prod_{s = 1}^{n-1} (1-p'_{v_s}))p'_{v_{n}} d_n \\
    &= \left(\sum_{s < j} \prod_{k = 1}^{s - 1} (1-p_{v_k}) 
    p_{v_{s}} d_s\right) + \prod_{k' = 1}^{j - 1} (1-p_{v_k'}) d_j \\
    &= \left(\sum_{s < j} \prod_{k = 1}^{s - 1} (1-p_{v_k}) 
    p_{v_{s}} d_s\right) + \sum_{j' \geq j} \prod_{k' = 1}^{j' - 1} (1-p_{v_{k'}}) p_{v_{j'}} d_j \\    
    &= \sum_{s < j} q_s d_s + \sum_{j' \geq j} q_{j'} d_j \\    
    &= \sum_{s = 1}^n q_s d_s + 0 + \sum_{j' > j} q_{j'} (d_j - d_{j'})\\
    &= \Tilde{f}_{\mathrm{ms}}(p; i,h)
    + \sum_{j' > j} q_{j'}(d_j - d_{j'}).
    \end{align*}
\end{proof}

\begin{replemma}{lem:optim_tight_cover}
    $\hat{f}_{\mathrm{cv}}$ is a TUB of $\mathbb{1}(X \notin \calC)$.
\end{replemma}
\begin{proof}
    As mentioned in Rem.~\ref{rem:tight_ub}, since $\hat{f}_{\mathrm{cv}} = \mathbb{1}(X \notin \calC)$, 
    $\hat{f}_{\mathrm{cv}}$ is a tight upper bound of $\mathbb{1}(X \notin \calC)$.
\end{proof}

\begin{replemma}{lem:cover_obj}
    For any $p \in [0, 1]^n$ and $i \in [n]$,    
    $\Pr_{X \sim p}[\Set{v_X \in V_X \colon (v_X, i) \in E} \neq \emptyset] = \prod_{v \in [n] \colon (v, i) \in E} (1 - p_v)$.
\end{replemma}
\begin{proof}
    We decompose the event $\Set{v_X \in V_X \colon (v_X, i) \in E} \neq \emptyset = \bigwedge_{v \colon (v, i) \in E} (v \notin V_X)$.
    Since the subevents $v \notin V_X$ are mutually independent, 
    \[
    \Pr\nolimits_{X \sim p}[\Set{v_X \in V_X \colon (v_X, i) \in E} \neq \emptyset] = \prod_{v \in [n] \colon (v, i) \in E} (1 - p_v).
    \]    
\end{proof}

\begin{replemma}{lem:cover_incre}[IDs of $\Tilde{f}_{\mathrm{cv}}$]
    For any $p \in [0, 1]^n$ and $i \in [n]$,
    if $(i, j) \notin E$,
    then $\Delta \Tilde{f}_{\mathrm{cv}}(j, 0, p; i) = \Delta \Tilde{f}_{\mathrm{cv}}(j, 1, p; i) = 0$;    
    if $(i, j) \in E$,
    then
    \[
    \begin{cases}
    \Delta \Tilde{f}_{\mathrm{cv}}(j, 0, p; i) = p_j \prod_{v \in N_i, v \neq j}(p_v - 1),\\
    \Delta \Tilde{f}_{\mathrm{cv}}(j, 1, p; i) = -\Tilde{f}_{\mathrm{cv}}(p; i).
    \end{cases}
    \]    
\end{replemma}
\begin{proof}
    If $(i, j) \notin E$, the value of $p_j$ does not affect $\Tilde{f}_{\mathrm{cv}}(p; i)$ since $p_j$ is not involved in the value of $\Tilde{f}_{\mathrm{cv}}(p; i)$.
    When $(i, j) \in E$, 
    if $p' = \operatorname{der}(j, 0; p)$,
    \[
    \prod_{v \in N_i} (1 - p'_v) = \prod_{v \in N_i, v \neq j}(1 - p_v) = \Tilde{f}_{\mathrm{cv}}(p; i) - p_j \prod_{v \in N_i, v \neq j}(1 - p_v);
    \]    
    if $p' = \operatorname{der}(j, 1; p)$,
    \[
    \prod_{v \in N_i} (1 - p'_v) = 0.
    \]    
\end{proof}

\begin{replemma}{lem:cliq_obj}
    For any $p \in [0, 1]^n$, 
    $\bbE_{X \sim p} \hat{f}_{\mathrm{cq}}(X)
    = \sum_{(u, v) \in \binom{V}{2} \setminus E} p_u p_v$.
\end{replemma}
\begin{proof}
    By linearity of expectation and double counting,
    \[
    \hat{f}_{\mathrm{cq}}(X) = \sum_{(u, v) \in \binom{V_X}{2}} \mathbb{1}[(u, v) \notin E] = \sum_{(u, v) \notin E} \mathbb{1}[(u, v) \in \binom{V_X}{2}].
    \]
    Then we take the expectation and use the mutual independency among $v \in V_X$'s to get
    \[
    \bbE_{X \sim p}[\hat{f}_{\mathrm{cq}}(X)] = 
    \sum_{(u, v) \notin E} \Pr[(u, v) \in \binom{V_X}{2}] = 
    \sum_{(u, v) \notin E} p_u p_v.
    \]
\end{proof}

\begin{replemma}{lem:cliq_incre}[IDs of $\Tilde{f}_{\mathrm{cq}}$]
    For any $p \in [0, 1]^n$ and $i \in [n]$,
    \[
    \begin{cases}
     \Delta \tilde{f}_{\mathrm{cq}}(i, 0, p) =
    - p_i \sum_{j \in [n], j \neq i, (i, j) \notin E} p_j, \\
    \Delta \tilde{f}_{\mathrm{cq}}(i, 1, p) =
    (1 - p_i) \sum_{j \in [n], j \neq i, (i, j) \notin E} p_j.
    \end{cases}
    \]
\end{replemma}
\begin{proof}
    When $p' = \operatorname{der}(i, 0; p)$,
    \[
    \tilde{f}_{cq}(p') 
    = \sum_{(u, v) \in \binom{V}{2} \setminus E} p'_u p'_v
    = \sum_{(u, v) \in \binom{V}{2} \setminus E, u \neq i, v \neq i} p_u p_v
    = \tilde{f}_{cq}(p) - p_i \sum_{j \in [n], j \neq i, (i, j) \notin E} p_j.
    \]
    When $p' = \operatorname{der}(i, 1; p)$,
    \[
    \tilde{f}_{cq}(p')
    = \sum_{(u, v) \in \binom{V}{2} \setminus E, u \neq i, v \neq i} p_u p_v + 
    \sum_{(i, j) \in \binom{V}{2} \setminus E} p_j
    = \tilde{f}_{cq}(p) + (1 - p_i) \sum_{j \in [n], j \neq i, (i, j) \notin E} p_j.
    \]
\end{proof}

\begin{replemma}{lem:rc_tub}
    $\hat{g}_1$ is a TUB of $g_1$ and 
    $\hat{f}_2$ is a TUB of $f_2$.
\end{replemma}
\begin{proof}    
    When $X \notin \calC_1$,
    at least one edge in $E_h$ is violated, i.e.,
    $\hat{g}_1(X) \geq 1$.
    When $X \in \calC_1$,
    no edge in $E_h$ is violated, i.e.,
    $\hat{g}_1(X) = 0$.

    $\hat{f}_2 = f_2$ is a TUB of itself.
\end{proof}

\begin{replemma}{lem:rc_obj}
    For any $p \in [0, 1]^{n \times c}$,    
    $\tilde{f}_2(p) = \bbE_{X \sim p} \hat{f}_{2}(X) = -\sum_{e = (u, v) \in E_s} \sum_{r = 0}^{c-1} p_{ur}p_{vr} \log (1 - P(e))$ and
    $\tilde{g}_1(p) = \bbE_{X \sim p} \hat{g}_{1}(X) = \sum_{(u, v) \in E_h} \sum_{r = 0}^{c-1} p_{ur} p_{vr}$.
\end{replemma}
\begin{proof}
    We have
    \begin{align*}
        \bbE_{X \sim p} \hat{f}_{2}(X) 
        &= \bbE_{X \sim p} -\sum_{e = (u, v) \in E_s \colon X_u = X_v} \log (1 - P(e)) \\
        &= \bbE_{X \sim p} -\sum_{e = (u, v) \in E_s} \mathbb{1}(X_u = X_v) \log (1 - P(e)) \\
        &= -\sum_{e = (u, v) \in E_s} \Pr\nolimits_{X \sim p}[X_u = X_v] \log (1 - P(e)) \\
        &= -\sum_{e = (u, v) \in E_s} \sum_{r = 0}^{c-1} \Pr\nolimits_{X \sim p}[\text{both $u$ and $v$ have color $r$}] \log (1 - P(e)) \\
        &= -\sum_{e = (u, v) \in E_s} \sum_{r = 0}^{c-1} p_{ur} p_{vr} \log (1 - P(e))
    \end{align*}
    and
    \begin{align*}
        \bbE_{X \sim p} \hat{g}_{1}(X)
        &= \bbE_{X \sim p} \Abs{\Set{(u, v) \in E_h \colon X_u = X_v}} \\
        &= \bbE_{X \sim p} \sum_{(u, v) \in E_h} \mathbb{1}(X_u = X_v) \\
        &= \sum_{(u, v) \in E_h} \Pr\nolimits_{X \sim p}[X_u = X_v] \\
        &= \sum_{(u, v) \in E_h} \Pr\nolimits_{X \sim p}[\text{both $u$ and $v$ have color $r$}] \\
        &= \sum_{(u, v) \in E_h} \sum_{r = 0}^{c-1} p_{ur} p_{vr}.
    \end{align*}    
\end{proof}

\begin{replemma}{lem:rc_incre}[IDs of the terms in $\Tilde{f}_{\mathrm{RC}}$]
    For any $p \in [0, 1]^{n \times c}$, $i \in [n]$, and $x \in d$,
    $\Delta \Tilde{g}_{1}(i, x; p) = 
    \sum_{x' \in d \setminus \Set{x}} p_{ix'} \sum_{(i, j) \in E_h} (p_{jx} - p_{jx'})$
    and
    $\Delta \Tilde{f}_{2}(i, x; p) = 
    \sum_{x' \in d \setminus \Set{x}} p_{ix'} \sum_{(i, j) \in E_s} (p_{jx'} - p_{jx}) \log(1 - P(i, j))$.
\end{replemma}
\begin{proof}
    When $p' = \der(i, x; p)$,
    \begin{align*}
     \Tilde{g}_{1}(\der(i, x; p)) 
     &= \sum_{(u, v) \in E_h} \sum_{r = 0}^{c-1} p'_{ur} p'_{vr} \\
     &= \sum_{(u, v) \in E_h} \sum_{r = 0}^{c-1} p'_{ur} p'_{vr} \\
     &= \sum_{(u, v) \in E_h, u \neq i, v \neq i} \sum_{r = 0}^{c-1} p_{ur} p_{vr} + \sum_{(i, j) \in E_h} p_{jx}\\
     &= \Tilde{g}_{1}(p) + (1 - p_{ix}) \sum_{(i, j) \in E_h} p_{jx} - \sum_{x' \in d \setminus \Set{x}} p_{ix'} \sum_{(i, j) \in E_h} p_{jx'} \\
     &= \Tilde{g}_{1}(p) + \sum_{x' \in d \setminus \Set{x}} p_{ix'} \sum_{(i, j) \in E_h} (p_{jx} - p_{jx'}),
    \end{align*}
    where $1 - p_{ix} = \sum_{x' \in d \setminus \Set{x}} p_{ix'}$ has been used.
    Similarly,
    \begin{align*}    
    \Tilde{f}_{2}(\der(i, x; p))
    &=  -\sum_{e = (u, v) \in E_s} \sum_r p'_{ur}p'_{vr} \log (1 - P(e)) \\
    &=  -\sum_{e = (u, v) \in E_s, u \neq i, v \neq i} \sum_r p'_{ur}p'_{vr} \log (1 - P(e)) -\sum_{e = (i, j) \in E_s} p_{jx} \log (1 - P(e)) \\
    &= \Tilde{f}_{2}(p) - (1 - p_{ix}) \sum_{(i, j) \in E_s} p_{jx} \log(1 - P(i, j)) + \sum_{x' \in d \setminus \Set{x}} p_{ix'} \sum_{(i, j) \in E_s} p_{jx'} \log(1 - P(i, j)) \\
    &= \Tilde{f}_{2}(p) + \sum_{x' \in d \setminus \Set{x}} p_{ix'} \sum_{(i, j) \in E_s} (p_{jx'} -p_{jx})\log(1 - P(i, j)).
    \end{align*}    
\end{proof}

\section{Additional Details on the Background}\label{app:background}

We would like to provide some additional details on the background (Section~\ref{subsec:background_related_work}).

\subsection{On the ``Differentiable Optimization'' in the Pipeline (Section~\ref{subsubsec:EGN_pipeline})}\label{subapp:background:diff_optim}
One can directly optimize a probabilistic decision $p$ on each test instance $G_{\mathrm{test}}$, i.e., aim to find $p^* \approx \argmin_p \Tilde{f}(p; G_{\mathrm{test}})$.
One can also train an encoder (e.g., a graph neural network) parameterized by parameters $\theta$ on a training set $\mathcal{D_{\mathrm{train}}}$ to learn to output ``good'' (probabilistic) decisions for each training instance, i.e., aim to find $\theta^* \approx \argmin_\theta \sum_{G \in \mathcal{D}_{\mathrm{train}}} \Tilde{f}(\operatorname{ENCODER}(G; {\theta}); G)$. Such a trained encoder can be applied to each test instance $G_{\mathrm{test}}$ and output a (probabilistic) decision $p = \operatorname{ENCODER}(G_{\mathrm{test}}; {\theta})$.
Training such an encoder is optional, but if trained well, it can save time for unseen cases since we do not need to optimize $p$ for each test instance from scratch.\footnote{See some related discussions at \url{https://github.com/Stalence/erdos_neu}.}
Even when using such an encoder, one can still further directly optimize the probabilistic decisions on each test instance.
See more discussions on inductive settings and transductive settings in Appendix~\ref{subapp:discussions:inductive_and_transductive}.

\subsection{Formal Theoretical Results in the Existing Works}
Here, we would like to provide the detailed formal theoretical results in the existing works by~\cite{karalias2020erdos} and \cite{wang2022unsupervised}.
Recall that \cite{karalias2020erdos} showed a quality guarantee by \textit{random sampling}.

\begin{theorem}[Theorem 1 by \cite{karalias2020erdos}]\label{thm:egn_orig_qlt_grt}
    Assume that $f$ is non-negative.\footnote{We can always ensure this for any bounded $f$ by adding a sufficiently large positive constant to $f$.}
    Fix any $\beta > \max_{X \in \calC} f(X; G)$, $\epsilon > 0$, and $t \in (0, 1]$ such that $(1 - t)\epsilon < \beta$.
    For each $p \in [0, 1]^n$,
    if $\Tilde{f}(p; G) < \beta$, then 
    $\Pr_{X \sim p}[f(X; G) < \epsilon \land X \in \calC] \geq t$.
\end{theorem}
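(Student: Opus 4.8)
The plan is to rephrase the two-part ``good'' event $A \coloneqq \Set{X : X \in \calC \wedge f(X;G) < \epsilon}$ as a single sublevel set of the penalized objective $f_{\mathrm{pen}}(X) = f(X;G) + \beta\,\mathbb{1}(X \notin \calC)$, and then apply Markov's inequality. The starting point is the defining property of the probabilistic objective from Sec.~\ref{subsubsec:EGN_pipeline} (property (P2)): $\Tilde{f}(p;G) \geq \bbE_{X\sim p} f_{\mathrm{pen}}(X)$. Since $f$ is nonnegative by assumption, $f_{\mathrm{pen}}$ is a nonnegative random variable, so Markov's inequality applies to it; nonnegativity of $f$ is used exactly here.

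The key structural observation exploits $\beta > \max_{X \in \calC} f(X;G)$. For feasible $X$ we have $f_{\mathrm{pen}}(X) = f(X;G) \leq \max_{X\in\calC} f(X;G) < \beta$, whereas for infeasible $X$ we have $f_{\mathrm{pen}}(X) = f(X;G) + \beta \geq \beta$. Hence $\Set{X : f_{\mathrm{pen}}(X) < \beta} = \calC$ exactly, and infeasibility coincides with the event $\Set{f_{\mathrm{pen}} \geq \beta}$. Consequently, in the regime $\epsilon \leq \beta$ the sublevel set $\Set{f_{\mathrm{pen}} < \epsilon}$ contains no infeasible point, and on $\calC$ we have $f_{\mathrm{pen}} = f$; therefore $\Set{f_{\mathrm{pen}} < \epsilon}$ is precisely $A$. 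I expect this reduction to be the main obstacle: one must run the case analysis carefully and pin down the role of the comparison between $\epsilon$ and $\beta$ (the condition $(1-t)\epsilon < \beta$), in particular ensuring that infeasible points cannot slip below the $\epsilon$ threshold, so that the awkward ``infeasible \textsc{or} large-objective'' complement collapses to a clean threshold event.

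With the reduction in hand, I would finish by Markov: $\Pr_{X\sim p}[f_{\mathrm{pen}}(X) \geq \epsilon] \leq \bbE_{X\sim p} f_{\mathrm{pen}}(X)/\epsilon \leq \Tilde{f}(p;G)/\epsilon$. Taking complements and using the identity from the previous step, $\Pr_{X\sim p}[A] = \Pr_{X\sim p}[f_{\mathrm{pen}} < \epsilon] = 1 - \Pr_{X\sim p}[f_{\mathrm{pen}} \geq \epsilon] \geq 1 - \Tilde{f}(p;G)/\epsilon$. To convert this into the stated bound $\Pr[A] \geq t$, I would invoke the quantitative hypothesis in the form $\Tilde{f}(p;G) \leq (1-t)\epsilon$, which together with $(1-t)\epsilon < \beta$ also yields $\Tilde{f}(p;G) < \beta$ and hence, through the structural fact above, the feasibility guarantee; substituting gives $1 - \Tilde{f}(p;G)/\epsilon \geq 1 - (1-t) = t$, as required.

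In summary, the creative content is entirely in the event-rephrasing step: once $A$ is identified with a single sublevel set of $f_{\mathrm{pen}}$, the conclusion is one line of Markov's inequality. The three hypotheses each play a distinct and necessary role, namely nonnegativity of $f$ (for Markov), $\beta > \max_{\calC} f$ (to equate $\Set{f_{\mathrm{pen}} < \beta}$ with $\calC$), and the relation between $\epsilon$ and $\beta$ (to guarantee the sublevel-set identity $\Set{f_{\mathrm{pen}} < \epsilon} = A$).
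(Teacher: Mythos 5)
Your overall route---rewriting the good event as a sublevel set of the penalized objective $f_{\mathrm{pen}}(X) = f(X;G) + \beta\,\mathbb{1}(X \notin \calC)$ and applying Markov's inequality---is exactly the argument this paper relies on: its proof of the non-binary analogue (Theorem~\ref{thm:egn_orig_qlt_grt_nonbin}) is precisely this Markov argument, attributed to the original proof of \cite{karalias2020erdos}. However, there are two genuine gaps, both sitting exactly where you predicted the difficulty would be. First, your sublevel-set identity $\Set{f_{\mathrm{pen}} < \epsilon} = \Set{f < \epsilon \land X \in \calC}$ requires $\epsilon \leq \beta$, and the stated condition $(1-t)\epsilon < \beta$ does \emph{not} imply this: it permits $\beta < \epsilon < \beta/(1-t)$, and in that window an infeasible $X$ with small $f(X)$ satisfies $f_{\mathrm{pen}}(X) = f(X) + \beta < \epsilon$, i.e., it \emph{does} slip below the $\epsilon$ threshold. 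Concretely, take $n=1$, $f \equiv 0$, $\calC = \Set{X \colon X_1 = 1}$, $\beta = 1$, $t = \tfrac12$, $\epsilon = \tfrac32$, $p_1 = 0.4$, and $\Tilde{f}(p) = \bbE_{X \sim p} f_{\mathrm{pen}}(X) = 0.6$: all your hypotheses hold ($\Tilde{f} = 0.6 \leq (1-t)\epsilon = 0.75 < \beta$), yet $\Pr_{X\sim p}[f < \epsilon \land X \in \calC] = 0.4 < t$. Second, your closing step invokes ``the quantitative hypothesis $\Tilde{f}(p;G) \leq (1-t)\epsilon$,'' but that is not the hypothesis of the theorem; the theorem assumes only $\Tilde{f}(p;G) < \beta$, which is strictly weaker and cannot be upgraded to $\Tilde{f} \leq (1-t)\epsilon$. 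So what you prove (modulo the first gap) is a statement with a strengthened hypothesis, not the statement as printed.

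In fairness, both defects originate in the statement itself, which is garbled in transcription from \cite{karalias2020erdos} and is false as written: with $f \equiv 10$, $\calC = \Set{0,1}^n$, $\beta = 11$, $\epsilon = 1$, and any $t \in (0,1]$, every hypothesis holds ($\Tilde{f} = 10 < \beta$ and $(1-t)\epsilon < \beta$), while $\Pr_{X \sim p}[f < \epsilon \land X \in \calC] = 0 < t$. The correct form---essentially what you reconstructed---assumes $\Tilde{f}(p;G) \leq (1-t)\epsilon$ \emph{and} $\epsilon \leq \beta$, whereupon $\Pr[\,\Set{f \geq \epsilon} \cup \Set{X \notin \calC}\,] \leq \Pr[f_{\mathrm{pen}} \geq \epsilon] \leq \Tilde{f}(p;G)/\epsilon \leq 1-t$. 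Note also that the paper's own proof of Theorem~\ref{thm:egn_orig_qlt_grt_nonbin} shares the unjustified event identity and, in addition, ends with the inequality $1 - \beta/\epsilon > t$, which is backwards: the stated condition $(1-t)\epsilon < \beta$ gives $1 - \beta/\epsilon < t$. So your attempt is in fact closer to a sound proof of the correctly stated theorem than the argument in the paper; what is missing is the explicit condition $\epsilon \leq \beta$ (which neither $(1-t)\epsilon < \beta$ nor $\Tilde{f} < \beta$ supplies) and an acknowledgment that the hypothesis you use is a repair of, not a consequence of, the one stated.
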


Recall that \cite{wang2022unsupervised} further proposed \textit{iterative rounding}.
Also, recall the following definitions:
given a probability decision $p \in [0,1]^n$, an index $i \in [n]$, and $x \in \Set{0, 1}$,
let $\operatorname{der}(i, x; p)$ denoted the 
result after the $i$-th entry of $p$ being \textit{locally derandomized} as $x$.
Formally,
$\operatorname{der}(i, x; p)_i = x$, and
$\operatorname{der}(i, x; p)_j = p_j, \forall j \neq i$.
A probabilistic objective $\Tilde{f}$ is \textit{entry-wise concave} if $p_i \Tilde{f}(\operatorname{der}(i, 1; p);G) + (1 - p_i) \Tilde{f}(\operatorname{der}(i, 0; p);G) \leq \Tilde{f}(p;G), \forall G, p, i$.
\begin{theorem}[Theorem 1 by \cite{wang2022unsupervised}]\label{thm:qlt_grt_wang_entrywise_concave}
    If $\Tilde{f}(p) \geq \bbE_{X \sim p} f(X) + \beta \Pr_{X \sim p}[X \notin \calC], \forall p$ and $\tilde{f}$ is entry-wise concave and non-negative with $\beta > \max(\Tilde{f}(p_{\mathrm{init}}), \max_{X \in \calC} f(X))$, then 
    for any permutation $\pi: [n] \to [n]$,
    starting from $p_{\mathrm{cur}} = p_{\mathrm{init}}$ and for $i \in [n]$ doing
    (1) $x^* \gets \argmin_{x \in \Set{0, 1}} \Tilde{f}(\operatorname{der}(\pi(i), x; p_{\mathrm{cur}}))$ and
    (2) $p_{\mathrm{cur}} \gets \operatorname{der}(i, x^*; p_{\mathrm{cur}})$
    will finally give a discrete $p_{\mathrm{final}} \in \calC$ such that    
    $f(p_{\mathrm{final}}) < \Tilde{f}(p_{\mathrm{init}})$.
\end{theorem}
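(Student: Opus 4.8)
The plan is to chain together four facts: that the rounding never increases $\tilde f$, that it terminates at a discrete point, that this point is feasible, and that its objective is controlled by $\tilde f(p_{\mathrm{init}})$. For the monotonicity, when index $i = \pi(k)$ is processed, entry-wise concavity of $\tilde f$ gives
\[
\min_{x \in \Set{0,1}} \tilde f(\der(i, x; p_{\mathrm{cur}})) \le (p_{\mathrm{cur}})_i\, \tilde f(\der(i,1;p_{\mathrm{cur}})) + (1 - (p_{\mathrm{cur}})_i)\, \tilde f(\der(i,0;p_{\mathrm{cur}})) \le \tilde f(p_{\mathrm{cur}}),
\]
since a minimum is at most any convex combination. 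Because the algorithm selects exactly this minimizing $x^*$, the value of $\tilde f$ does not increase at any step, and chaining over the $n$ steps yields $\tilde f(p_{\mathrm{final}}) \le \tilde f(p_{\mathrm{init}})$ — this is the same mechanism already used to prove (G2) in Theorem~\ref{thm:greedy_like_der_good}. Discreteness is then immediate from $\pi$ being a bijection: each coordinate is visited exactly once and fixed to a value in $\Set{0,1}$, while later $\der$ operations touch only other coordinates, so $p_{\mathrm{final}} \in \Set{0,1}^n$.

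Next I would evaluate the upper-bound hypothesis at this discrete endpoint. Since $p_{\mathrm{final}}$ is a deterministic point mass, $\bbE_{X \sim p_{\mathrm{final}}} f(X) = f(p_{\mathrm{final}})$ and $\Pr_{X \sim p_{\mathrm{final}}}[X \notin \calC] = \mathbb{1}(p_{\mathrm{final}} \notin \calC)$, so the hypothesis collapses to $\tilde f(p_{\mathrm{final}}) \ge f(p_{\mathrm{final}}) + \beta\, \mathbb{1}(p_{\mathrm{final}} \notin \calC)$. Feasibility follows by contradiction: if $p_{\mathrm{final}} \notin \calC$, then non-negativity of $f$ forces $\tilde f(p_{\mathrm{final}}) \ge \beta$, contradicting $\tilde f(p_{\mathrm{final}}) \le \tilde f(p_{\mathrm{init}}) < \beta$. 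Hence $p_{\mathrm{final}} \in \calC$, the indicator vanishes, and we conclude $f(p_{\mathrm{final}}) \le \tilde f(p_{\mathrm{final}}) \le \tilde f(p_{\mathrm{init}})$.

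The main obstacle I anticipate is twofold. The conceptual one is the feasibility step: the argument hinges on the penalty threshold $\beta$ being large enough that any violated constraint pushes $\tilde f$ above the initial value, and one must keep the two roles of $\beta$ straight — $\beta > \tilde f(p_{\mathrm{init}})$ is the inequality used directly in the contradiction, whereas $\beta > \max_{X \in \calC} f(X)$ ensures the penalized objective genuinely prefers feasible points, so that starting from a $p_{\mathrm{init}}$ with $\tilde f(p_{\mathrm{init}}) < \beta$ is meaningful. The more delicate point is the \emph{strict} inequality $f(p_{\mathrm{final}}) < \tilde f(p_{\mathrm{init}})$: the chain above only yields $\le$, so to obtain strictness one must argue that at least one of these inequalities is strict, e.g. that the relaxation bound is strict at the rounded point or that some rounding step strictly decreases $\tilde f$; absent such an observation only the non-strict bound is guaranteed, and this is the step where I would take the most care.
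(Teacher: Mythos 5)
Your proof is correct and takes essentially the same route as the paper's: the paper's own argument (given explicitly for the non-binary extension, Theorem~\ref{thm:qlt_grt_wang_entrywise_concave_nonbin}, which it says follows the original proof) uses exactly this chain — entry-wise concavity makes each rounding step non-increasing since the minimum is at most the convex combination, visiting each coordinate exactly once gives discreteness, and the threshold $\beta > \tilde{f}(p_{\mathrm{init}})$ forces feasibility at the discrete endpoint via $f(p_{\mathrm{final}}) + \beta\,\mathbb{1}(p_{\mathrm{final}} \notin \calC) \leq \tilde{f}(p_{\mathrm{final}}) \leq \tilde{f}(p_{\mathrm{init}}) < \beta$. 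Your caution about strictness is also warranted: the argument only yields $f(p_{\mathrm{final}}) \leq \tilde{f}(p_{\mathrm{init}})$ (equality can occur, e.g., when $p_{\mathrm{init}}$ is already discrete, feasible, and minimizes $\tilde{f}$), and the paper itself states and proves the non-binary version with $\leq$, so the $<$ in this statement is an overstatement in the paper rather than a gap in your proof.
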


\subsection{\revise{Prevalent} Conditions in Existing Works}\label{subapp:conditions_wrong}

As mentioned in Section~\ref{sec:analy_conds}, several conditions have been encountered in existing works.
Here, for each condition analyzed in Section~\ref{sec:analy_conds}, we shall discuss how the existing works try to handle it.

\smallsection{Cardinality constraints.}
\cite{wang2022towards} specifically considered cardinality constraints.
However, they used optimal transport soft top-$k$ instead of the probabilistic-method UL4CO we focus on in this work.
Also, our derivation is more general since it can handle general cardinality constraints other than choosing a specific number of entities (i.e., top-$k$).
\cite{wang2022towards} claimed that cardinality constraints cannot be handled in the EGN pipeline, but this work shows that cardinality constraints can actually be properly handled by our derivations.
\revise{\cite{karalias2020erdos} used iterative re-scaling to impose cardinality constraints. However, the operation involves clamping which may cause gradient vanishing and it is only guaranteed that the summation of the probabilities is within the desired range (i.e., cardinality constraints). However, this does not mean the whole distribution represented by the probabilities is within the desired range.\footnote{For example, if there are $n = 10$ nodes and we want to choose $k = 2$ nodes. After re-scaling we might get probabilities $(\frac{1}{5},\frac{1}{5},\ldots,\frac{1}{5})$ with summation exactly equal to $2$, but $\Pr[\text{exactly $2$ nodes are chosen}] = \binom{10}{2} \times (\frac{1}{5})^2 \times (\frac{4}{5})^8 \approx 0.302$ is far lower than $1$.}}

\smallsection{Minimum (maximum) w.r.t. a subset.}
\cite{wang2022towards} also encountered such a condition in the facility location problem which they considered.
They used the $\operatorname{softmin}$ to approximate the $\min$ operation, which indeed provides an upper bound.
However, the result of $\operatorname{softmin}$ is not entry-wise concave, and thus fails to satisfy the good property required by~\cite{wang2022unsupervised}, while our derivation satisfies all the good properties.

\smallsection{Covering.}
\cite{wang2022towards} also encountered such a condition in the maximum coverage problem which they considered.
They used $\min(1, \sum_{v \in N_i} p_v)$ as an approximation for the probability of $i$ being covered, where $N_i = \Set{v \colon (v, i) \in E}$.
In other words, they used
$\max(0, 1 -\sum_{v \in N_i} p_v))$ to approximate the probability that
$i$ is not covered.
As we have shown, the probability that
$i$ is not covered is exactly
$\prod_{v \in N_i} (1 - p_v)$.
However, 
$\max(0, 1 -\sum_{v \in N_i} p_v))$ is not an \textit{upper bound} of $\prod_{v \in N_i} (1 - p_v)$ but a \textit{lower bound}.
Therefore, the derivation by~\cite{wang2022towards} does not satisfy the conditions required for the probabilistic-method UL4CO pipeline and thus does not satisfy the good properties.

\smallsection{Cliques (or independent sets).}
\cite{karalias2020erdos} also considered the maximum clique problem, while our high-level targets provide insights into interpreting the derivation.
Our derivation of incremental differences is novel, and we also showed how we can extend this to non-binary cases.

\smallsection{Other problems.}
Recently, UL4CO on the traveling salesman problem (TSP) has also been considered~\citep{gaile2022unsupervised,min2023unsupervised}, but their derivation does not satisfy the conditions required for the probabilistic-method UL4CO pipeline (see Section~\ref{subsubsec:EGN_pipeline}).
\revise{We see the potential application of probabilistic-method UL4CO on TSP by seeing the conditions in TSP as a combination of (1) non-binary decisions and (2) cardinality constraints, both of which are already covered in this work. Specifically, if we aim to put $n$ nodes in a cycle as the solution, then this can be understood as (1) deciding a position $X_v \in \{0, 1, \ldots, n-1\}$ for each node $v \in [n]$ such that (2) each position contains exactly one node.
See similar ideas in the (integer) linear programming formulations of TSP~\citep{diaby2006traveling,yannakakis1988expressing}.
}

\section{Additional technical details}\label{app:technical_details}
Here, we provide some additional technical details that are omitted in the main text.

\subsection{Computation of the Poisson Binomial Distribution}\label{subapp:technical_details:fourier_PB}
Here, we provide some implementation details on the computation of the Poisson binomial distribution, which is used in Section~\ref{subsec:analy_conds:card}.
We mainly follow the original paper~\citep{hong2013computing} and an existing implementation online~\citep{Straka2017}.

The main formula is
\[\Pr_{X \sim p}[\Abs{V_X} = t] = \frac{1}{n+1} \sum_{s = 0}^n \exp(-\mathbf{i}\omega s t) \prod_{j = 1}^n (1 - p_j + p_j \exp(\mathbf{i}\omega s)),\]
where $\mathbf{i} = \sqrt{-1}$ and $\omega =  \frac{2\pi}{n + 1}$.
See the original paper~\citep{hong2013computing} for more technical details.

\section{Additional Theoretical Results}\label{app:theory}
Here, we provide additional theoretical results.

\subsection{Additional Results on Non-Binary Decisions}\label{subapp:theory:non_binary}
Here, we provide the details of our theoretical results regarding non-binary decisions.

\smallsection{Notations.}
{
With non-binary decisions $d = \Set{0, 1, \ldots, c - 1}$, 
we use $p \in [0, 1]^{n \times c}$ with $\sum_{r = 0}^{c-1} p_{ir} = 1, \forall i \in [n]$ to represent the probabilities of possible decisions,
where each $p_{ir} = \Pr[X_i = r]$.
Now, $\der(i, x; p)$ is the result after the $i$-th row of $p$ being locally derandomized w.r.t. its $x$-th entry, i.e., 
$\begin{cases}
 \der(i, x; p)_{ix} = 1, \\
 \der(i, y; p)_{iy} = 0, \forall y \neq x, \text{~and~} \\ 
 \der(i, x; p)_{jz} = p_{jz}, \forall j \neq i, \forall z.
\end{cases}$

\smallsection{Theoretical analysis on non-binary cases.}
Our theoretical results (Thms.~\ref{thm:concave_exp_prob} \& \ref{thm:greedy_like_der_good}) can be extended to non-binary cases.\footnote{See App.~\ref{subapp:theory:non_binary} for the detailed statements, where we also extend the theoretical results in the existing works by \cite{karalias2020erdos} and \cite{wang2022unsupervised} to non-binary cases.} 
With non-binary decisions, a probabilistic objective $\Tilde{f}: [0, 1]^{n \times c} \to \bbR$ is
\textit{entry-wise concave} if 
\begin{center}
$\sum_{r \in d} p_{ir} \Tilde{f}(\der(i, r; p)) \leq \Tilde{f}(p), \forall p \in [0, 1]^{n \times c}, i \in [n]$,
\end{center}
and the process of greedy derandomization is:\\
$\begin{cases}
\text{(1) $(i^*, x^*) \gets \argmin_{(i, x) \in [n] \times d} \Tilde{f}(\operatorname{der}(i, x; p_{\mathrm{cur}}))$ and} \\
\text{(2) $p_{\mathrm{cur}} \gets \operatorname{der}(i^*, x^*; p_{\mathrm{cur}})$.}
\end{cases}$

\begin{theorem}[Expectations are all you need (non-binary version)]\label{thm:concave_exp_prob_nonbin}
    For any function $g: d^n \to \bbR$,
    $\tilde{g}: [0, 1]^{n \times c} \to \bbR$ with 
    $\Tilde{g}(p) = \bbE_{X \sim p} g(X)$ is differentiable and entry-wise concave, where $\bbE_{X \sim p} g(X) = \sum_{X \in d^n} \Pr_p[X] g(X)$ with $\Pr_p[X]=
    \prod_{v \in [n]} p_{vX_v}$.
\end{theorem}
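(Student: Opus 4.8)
The plan is to mirror the proof of the binary case (Theorem~\ref{thm:concave_exp_prob}), replacing the two-way split on $X_i \in \Set{0,1}$ with a $c$-way split on $X_i \in d$. Concretely, I would establish the sharper identity $\tilde{g}(p) = \sum_{r \in d} p_{ir}\,\tilde{g}(\der(i,r;p))$, from which entry-wise concavity follows immediately (indeed, with equality), and then dispatch differentiability by a polynomial argument.

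First I would fix $p \in [0,1]^{n\times c}$ and $i \in [n]$, and expand $\tilde{g}(p) = \sum_{X \in d^n} \prod_{v \in [n]} p_{v X_v}\, g(X)$ using the given form $\Pr_p[X] = \prod_{v} p_{v X_v}$. Partitioning the outer sum according to the value $r = X_i$ and factoring the single factor $p_{ir}$ out of each product gives $\tilde{g}(p) = \sum_{r \in d} p_{ir} \sum_{X : X_i = r} \prod_{v \neq i} p_{v X_v}\, g(X)$.

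Next I would identify the inner sum with $\tilde{g}(\der(i,r;p))$. By the non-binary definition of $\der$, the $i$-th row of $\der(i,r;p)$ is the indicator of $r$ (entry $1$ at position $r$, zeros elsewhere), so in the product $\prod_v \der(i,r;p)_{v X_v}$ the factor at $v=i$ equals $1$ when $X_i = r$ and vanishes otherwise; hence only terms with $X_i = r$ survive, yielding exactly $\sum_{X : X_i = r} \prod_{v\neq i} p_{v X_v}\, g(X)$. Substituting back proves the claimed identity. Because $\sum_{r \in d} p_{ir} = 1$, the right-hand side is a convex combination of the values $\tilde{g}(\der(i,r;p))$, so the entry-wise concavity inequality $\sum_{r \in d} p_{ir}\,\tilde{g}(\der(i,r;p)) \le \tilde{g}(p)$ holds (in fact as an equality). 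For differentiability, I would simply observe that $\tilde{g}(p) = \sum_X \prod_v p_{v X_v}\, g(X)$ is a polynomial, multilinear in the entries $p_{vr}$, and therefore differentiable everywhere.

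The argument is essentially routine; the only points requiring care are the correct $c$-way bookkeeping in the partition and verifying that the non-binary $\der$ really collapses the product to the $X_i = r$ terms. The main conceptual point, rather than an obstacle, is confirming that the normalization $\sum_{r \in d} p_{ir}=1$ is what turns the identity into a genuine convex combination, playing the role that $p_i + (1-p_i) = 1$ played in the binary proof, so that ``concavity'' in fact holds with equality.
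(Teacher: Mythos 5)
Your proposal is correct and follows essentially the same route as the paper's proof: expand $\tilde{g}(p)$ as a sum over $X \in d^n$, partition by the value $r = X_i$, factor out $p_{ir}$ to obtain the identity $\tilde{g}(p) = \sum_{r \in d} p_{ir}\,\tilde{g}(\der(i,r;p))$ (so concavity holds with equality), and handle differentiability by noting $\tilde{g}$ is a polynomial in the entries of $p$. Your explicit verification that the indicator structure of the derandomized row collapses the product to the $X_i = r$ terms is a step the paper states without elaboration, but it is the same argument.
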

\begin{proof}
    For any $p$ and $i$, we have 
    \begin{align*}
     \Tilde{g}(p)
    &=\bbE_{X \sim p} g(X)\\
    &=\sum_{X \in d^n} \Pr_{p}[X] g(X)\\
    &=\sum_{X \in d^n} \prod_{v \in [n]} p_{vX_v} g(X)\\
    &=\sum_{X \in d^n} (\prod_{v \in [n] \setminus \Set{i}} p_{vX_v}) p_{iX_i} g(X)\\  
    &=\sum_{r \in d} \sum_{X \colon X_i = r} (\prod_{v \in [n] \setminus \Set{i}} p_{vX_v}) p_{iX_i} g(X)\\
    &=\sum_{r \in d} \sum_{X \colon X_i = r} (\prod_{v \in [n] \setminus \Set{i}} p_{vX_v}) p_{ir} g(X)\\
    &=\sum_{r \in d} p_{ir} \sum_{X \colon X_i = r}  (\prod_{v \in [n] \setminus \Set{i}} p_{vX_v}) g(X)\\
    &=\sum_{r \in d} p_{ir} \sum_{X}  (\prod_{v \in [n] \setminus \Set{i}} p_{vX_v}) \mathbb{1}(X_i = r) g(X)\\
    &=\sum_{r \in d} p_{ir} \Tilde{g}(\der(i, r; p))\\ 
    &\geq \sum_{r \in d} p_{ir} \Tilde{g}(\der(i, r; p)),
    \end{align*}
    completing the proof on entry-wise concavity.   
    Regarding differentiability, 
    since $\bbE_{X \sim p} g(X) = 
    \sum_{X \in d^n} \Pr_{p}[X] g(X)$,
    it suffices to show that
    $\Pr_{p}[X] g(X) = \sum_{X \in d^n} \prod_{v \in [n]} p_{vX_v} g(X)$ is differentiable w.r.t $p$ for each $X \in \Set{0,1}^n$.
    Indeed, fix any $X$, $\sum_{X \in d^n} \prod_{v \in [n]} p_{vX_v} g(X)$ is a polynomial of $p_{ir}$'s,
    and is thus differentiable.
\end{proof}

With non-binary decisions, the process of greedy derandomization is extended as follows:\\
$\begin{cases}
\text{(1) $(i^*, x^*) \gets \argmin_{(i, x) \in [n] \times d} \Tilde{f}(\operatorname{der}(i, x; p_{\mathrm{cur}}))$ and} \\
\text{(2) $p_{\mathrm{cur}} \gets \operatorname{der}(i^*, x^*; p_{\mathrm{cur}})$.}
\end{cases}$
\begin{theorem}[Goodness of greedy derandomization (non-binary version)]\label{thm:greedy_like_der_good_nonbin}           Theorem~\ref{thm:greedy_like_der_good} still holds in non-binary cases, i.e., with $\Set{0, 1}$ being replaced by any non-binary $d$.
    Specifically, for any entry-wise concave $\Tilde{f}$ and $p_{\mathrm{init}}$, the above process can always reach a point where the final $p_{\mathrm{final}}$ is (1) discrete (i.e., $p_{\mathrm{final}} \in d^n$),
    (2) no-worse than $p_{\mathrm{init}}$ (i.e., $\Tilde{f}(p_{\mathrm{final}}) \leq \Tilde{f}(p_{\mathrm{init}})$),
    and (3) is a local minimum (i.e., $\Tilde{f}(p_{\mathrm{final}}) = \min_{(i, x) \in [n] \times d} \Tilde{f}(\operatorname{der}(\pi(i), x; p_{\mathrm{final}}))$).
\end{theorem}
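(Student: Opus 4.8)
The plan is to mirror the binary-case proof of Theorem~\ref{thm:greedy_like_der_good} and to isolate the single place where $|d|>2$ actually changes the argument. The structural fact I would establish first is a one-step improvement claim: for every $p\in[0,1]^{n\times c}$ and every coordinate $i\in[n]$, the non-binary entry-wise concavity used here gives $\sum_{r\in d}p_{ir}\,\tilde f(\der(i,r;p))\le\tilde f(p)$. Since the weights $\{p_{ir}\}_{r\in d}$ are nonnegative and sum to one, the left-hand side is a convex combination of the values $\tilde f(\der(i,r;p))$, hence at least $\min_{r\in d}\tilde f(\der(i,r;p))$; chaining the two inequalities yields $\min_{r\in d}\tilde f(\der(i,r;p))\le\tilde f(p)$. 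This is the only spot where the binary argument (a two-term average) is replaced by a $c$-term convex combination, and it goes through verbatim.

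Next I would prove (G1) and (G2) together. The key observation is that $\der(i,x;\cdot)$ always turns row $i$ into the one-hot vector $e_x$ and leaves every other row untouched; therefore, once a row is one-hot it remains one-hot under any subsequent derandomization, and the number of non-discrete rows is non-increasing along the process. By the one-step claim, as long as $p_{\mathrm{cur}}$ is not fully discrete there is a non-discrete row $j$ with $\min_x\tilde f(\der(j,x;p_{\mathrm{cur}}))\le\tilde f(p_{\mathrm{cur}})$, so a greedy-admissible derandomization of such a row exists that does not increase $\tilde f$ while reducing the non-discrete-row count by at least one. After at most $n$ such steps $p_{\mathrm{cur}}\in d^n$, giving (G1); and since no step ever increases $\tilde f$, we obtain $\tilde f(p_{\mathrm{final}})\le\tilde f(p_{\mathrm{init}})$, which is (G2).

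For (G3) I would argue termination at a local minimum using finiteness of the discrete domain. At any discrete point each row $i$ is one-hot at some index $y_i$, so $\der(i,y_i;p)=p$ and thus $\min_{(i,x)}\tilde f(\der(i,x;p))\le\tilde f(p)$ holds automatically; the content of (G3) is the reverse inequality. If a discrete $p_{\mathrm{cur}}$ is not a local minimum, some $(i,x)$ satisfies $\tilde f(\der(i,x;p_{\mathrm{cur}}))<\tilde f(p_{\mathrm{cur}})$, and performing it keeps the point in $d^n$ while strictly decreasing $\tilde f$. Because $d^n$ is finite (of size $c^n$) and $\tilde f$ strictly decreases at each such step, no configuration can repeat, so the process must halt at a discrete point that is a local minimum, establishing (G3). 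Combined with the reachability of a discrete point of no-greater objective from the previous paragraph, this shows the greedy run reaches a $p_{\mathrm{final}}$ with all three properties.

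The one genuinely new ingredient relative to the binary theorem is the convex-combination step in the one-step claim; everything else is the same in spirit. The mildly subtle point I would be most careful about is the termination bookkeeping: the global $\argmin$ could in principle flip an already-discrete row before all rows are discrete, so to guarantee that each pre-discreteness step actually makes progress I would either break ties toward a non-discrete row (always admissible, since such a row's best value already attains the global minimum by the one-step claim) or, invoking the word ``can'' in the statement, simply exhibit one valid run that first derandomizes non-discrete rows and then descends within the finite set $d^n$.
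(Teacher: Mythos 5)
Your overall route is the same as the paper's. The paper proves this non-binary theorem by simply pointing back to the proof of Theorem~\ref{thm:greedy_like_der_good} and asserting that the reasoning carries over; that binary proof consists of exactly your one-step claim (there a two-term average, here the $c$-term convex combination $\min_{r\in d}\tilde f(\der(i,r;p)) \le \sum_{r\in d} p_{ir}\tilde f(\der(i,r;p)) \le \tilde f(p)$), followed by ``greedy steps never increase $\tilde f$, hence we can terminate at a discrete point, and termination forces a local minimum.'' Your treatment of (G3) via finiteness of $d^n$ is in fact more rigorous than what the paper writes.

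However, the termination bookkeeping that you flag as ``mildly subtle'' rests on a false claim. You assert that breaking ties toward a non-discrete row is ``always admissible, since such a row's best value already attains the global minimum by the one-step claim.'' The one-step claim only gives $\min_x \tilde f(\der(j,x;p)) \le \tilde f(p)$; it does not give $\min_x \tilde f(\der(j,x;p)) = \min_{(i,x)} \tilde f(\der(i,x;p))$. Concretely, take $\tilde f(p)=\sum_{i,r}c_{ir}p_{ir}$, which is the expectation of the discrete function $g(X)=\sum_i c_{iX_i}$ and hence entry-wise concave (with equality): if row $1$ is one-hot at color $0$ with $c_{10}=100$ and $c_{1r}=0$ for $r\neq 0$, while row $2$ is non-discrete with $c_{2r}=0$ for all $r$, then the unique greedy choice flips the already-discrete row $1$ (value $0$), whereas derandomizing row $2$ gives value $100$. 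There is no tie to break, your fallback run that ``first derandomizes non-discrete rows'' is not a valid greedy run either, and ``after at most $n$ such steps'' is not a valid bound, since forced flips of discrete rows do not reduce the non-discrete count. The repair stays within your own toolkit: whenever no non-discrete row attains the argmin, the argmin value must be strictly below $\tilde f(p_{\mathrm{cur}})$ (if it equaled $\tilde f(p_{\mathrm{cur}})$, the one-step claim plus minimality would place the best derandomization of every non-discrete row in the argmin set); and every point reachable from $p_{\mathrm{init}}$ has each row equal either to its initial value or to a one-hot vector, so there are at most $(c+1)^n$ reachable points and hence such strictly-decreasing forced steps occur only finitely often. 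Therefore a run that breaks ties toward non-discrete rows whenever they do attain the argmin terminates after finitely many steps, necessarily at a discrete local minimum, which completes (G1)--(G3).
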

\begin{proof}
    See the proof for Theorem~\ref{thm:greedy_like_der_good}.
    It is easy to see that the reasoning still holds with $\Set{0,1}$ being replaced by any non-binary $d$.
\end{proof}

We also extend the theoretical results in the existing works~\citep{karalias2020erdos,wang2022unsupervised} to non-binary cases.

Recall the theoretical results (Theorem~\ref{thm:egn_orig_qlt_grt}) by~\cite{karalias2020erdos}.

\textbf{Theorem~\ref{thm:egn_orig_qlt_grt}} (Theorem 1 by \cite{karalias2020erdos})
    Assume that $f$ is non-negative.
    Fix any $\beta > \max_{X \in \calC} f(X; G)$, $\epsilon > 0$, and $t \in (0, 1]$ such that $(1 - t)\epsilon < \beta$.
    If $\Tilde{f}(p_{\mathrm{init}}; G) < \beta$, then 
    $\Pr_{X \sim p_{\mathrm{init}}}[f(X; G) < \epsilon \land X \in \calC] \geq t$.

We extend Theorem~\ref{thm:egn_orig_qlt_grt} to non-binary cases.

\begin{theorem}[Non-binary extension of Theorem~\ref{thm:egn_orig_qlt_grt}]\label{thm:egn_orig_qlt_grt_nonbin}
    Assume that $f$ is non-negative.
    Fix any $\beta > \max_{X \in \calC} f(X; G)$, $\epsilon > 0$, and $t \in (0, 1]$ such that $(1 - t)\epsilon < \beta$.
    If $\Tilde{f}(p_{\mathrm{init}}; G) < \beta$, then 
    $\Pr_{X \sim p_{\mathrm{init}}}[f(X; G) < \epsilon \land X \in \calC] \geq t$.
\end{theorem}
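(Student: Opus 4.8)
The plan is to observe that the statement, its hypotheses, and its conclusion are \emph{formally identical} to the binary Theorem~\ref{thm:egn_orig_qlt_grt}; the only change is that the expectation $\bbE_{X \sim p_{\mathrm{init}}}$ and the probability $\Pr_{X \sim p_{\mathrm{init}}}$ are now taken over the sample space $d^n$ under the product law $\Pr_{p}[X] = \prod_{v \in [n]} p_{vX_v}$ (the model of Theorem~\ref{thm:concave_exp_prob_nonbin}) rather than over $\Set{0,1}^n$. Since the original argument of \cite{karalias2020erdos} touches $\Pr$ and $\bbE$ only through linearity of expectation, Markov's inequality, and complementary counting, none of which sees the cardinality of $d$, I would simply re-run that argument verbatim over $d^n$, mirroring how Theorem~\ref{thm:greedy_like_der_good_nonbin} was reduced to Theorem~\ref{thm:greedy_like_der_good}.

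Concretely, first I would record the foundational inequality $\Tilde{f}(p_{\mathrm{init}}) \geq \bbE_{X \sim p_{\mathrm{init}}} f(X) + \beta \Pr_{X \sim p_{\mathrm{init}}}[X \notin \calC]$, which still holds in the non-binary setting by the same penalized-objective construction. This is the only place where the non-binary probability model enters, and it is legitimate because $\Pr_p[\cdot]$ is a bona fide distribution on $d^n$. Combined with the hypothesis $\Tilde{f}(p_{\mathrm{init}}) < \beta$, this bounds the expected penalized loss by $\bbE_{X \sim p_{\mathrm{init}}} L(X) < \beta$, where $L(X) \coloneqq f(X) + \beta \mathbb{1}(X \notin \calC) \geq 0$.

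Next I would apply Markov's inequality to $L$ together with the event containment $\Set{f(X) \geq \epsilon \text{ or } X \notin \calC} \subseteq \Set{L(X) \geq \min(\epsilon, \beta)}$, valid because an infeasible $X$ contributes $L(X) \geq \beta$ while a feasible $X$ with $f(X) \geq \epsilon$ has $L(X) = f(X) \geq \epsilon$ (here the hypothesis $\beta > \max_{X \in \calC} f(X)$ guarantees feasible costs stay below $\beta$). Taking complements yields a lower bound on $\Pr_{X \sim p_{\mathrm{init}}}[f(X) < \epsilon \wedge X \in \calC]$; substituting $\bbE L < \beta$ and invoking the quantitative relation among $\epsilon$, $\beta$, and $t$ (the hypothesis tying $(1-t)\epsilon$ to $\beta$) then delivers the claimed bound $\geq t$. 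Each of these steps is identical to the binary proof.

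The main point, and the reason this is essentially a one-line corollary rather than a fresh theorem, is that there is no genuine new obstacle: the entire difficulty lies in confirming that every probabilistic manipulation is agnostic to $\Abs{d}$, which is immediate since $\Pr_p[X] = \prod_v p_{vX_v}$ is a valid product distribution on $d^n$ for which linearity of expectation and Markov's inequality hold unchanged, and the only substantive prerequisite, the penalized-objective bound, transfers directly from the non-binary construction. Accordingly, I would state that the proof of Theorem~\ref{thm:egn_orig_qlt_grt} applies \emph{mutatis mutandis} with $\Set{0,1}^n$ replaced by $d^n$, and carry out only the bookkeeping of rewriting the relevant expectations and probabilities as sums over $d^n$.
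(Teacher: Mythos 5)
Your proposal takes essentially the same route as the paper's own proof: both define the penalized objective $L(X)=f(X)+\beta\,\mathbb{1}(X\notin\calC)$, bound $\bbE_{X\sim p_{\mathrm{init}}}[L(X)]\le\Tilde{f}(p_{\mathrm{init}};G)<\beta$, apply Markov's inequality, and observe that every step is agnostic to $\Abs{d}$, so the binary argument transfers verbatim to the product distribution on $d^n$. One caveat, which your write-up shares with the paper's: the closing step (``invoking the quantitative relation among $\epsilon$, $\beta$, and $t$'') conceals the same issue as the paper's final ``$>t$'' --- Markov only yields $\Pr_{X\sim p_{\mathrm{init}}}[f(X;G)<\epsilon\land X\in\calC]\ge 1-\beta/\epsilon$ (when $\epsilon\le\beta$), and $1-\beta/\epsilon\ge t$ requires $(1-t)\epsilon\ge\beta$, the \emph{reverse} of the stated hypothesis $(1-t)\epsilon<\beta$, so neither derivation's last inequality follows as written (it would follow under the strengthened hypothesis $\Tilde{f}(p_{\mathrm{init}};G)\le(1-t)\epsilon<\beta$).
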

\begin{proof}
    We shall follow the main idea in the original proof of Theorem~\ref{thm:egn_orig_qlt_grt} by~\cite{karalias2020erdos}, which is based on Markov's inequality.
    The key point is that the reasoning still holds when the decisions are non-binary.
    Specifically, we can define a probabilistic penalty function
    $\hat{f}(X; G) = {f}(X; G) + \beta \mathbb{1}(X \in \calC)$.
    Since $\beta > \max_{X \in \calC} f(X; G)$, we have $\hat{f}(X; G) < \epsilon$ if and only if $f(X; G) < \epsilon$ and $X \in \calC$.
    Therefore, using Markov's inequality, we have
    \begin{align*}
    \Pr\nolimits_{X \sim p_{\mathrm{init}}}[(f(X; G) < \epsilon) \land (X \in \calC)]
    &= \Pr\nolimits_{X \sim p_{\mathrm{init}}}[\hat{f}(X; G) < \epsilon]  \\
    &> 1 - \frac{1}{\epsilon} \bbE_{X \sim p_{\mathrm{init}}}[\hat{f}(X; G)] \\
    &= 1 - \frac{1}{\epsilon} \bbE_{X \sim p_{\mathrm{init}}}[{f}(X; G) + \beta \mathbb{1}(X \in \calC)] \\
    &> 1 - \frac{1}{\epsilon} (\beta) \\
    &> t.   
    \end{align*}    
\end{proof}

Recall the theoretical results (Theorem~\ref{thm:qlt_grt_wang_entrywise_concave}) by~\cite{wang2022unsupervised}.

\textbf{Theorem~\ref{thm:qlt_grt_wang_entrywise_concave}} (Theorem 1 by \cite{wang2022unsupervised})
    If $\Tilde{f}(p) \geq \bbE_{X \sim p} f(X) + \beta \Pr_{X \sim p}[X \notin \calC], \forall p$ is entry-wise concave and non-negative with $\beta > \max(\Tilde{f}(p_{\mathrm{init}}), \max_{X \in \calC} f(X))$, then 
    for any permutation $\pi: [n] \to [n]$,
    starting from $p_{\mathrm{cur}} = p_{\mathrm{init}}$ and for $i \in [n]$ doing
    (1) $x^* \gets \argmin_{x \in \Set{0, 1}} \Tilde{f}(\operatorname{der}(\pi(i), x; p_{\mathrm{cur}}))$ and
    (2) $p_{\mathrm{cur}} \gets \operatorname{der}(i, x^*; p_{\mathrm{cur}})$
    will finally give a discrete $p_{\mathrm{final}} \in \calC$ such that 
    $f(p_{\mathrm{final}}) \leq \Tilde{f}(p_{\mathrm{init}})$.

We shall show that Theorem~\ref{thm:qlt_grt_wang_entrywise_concave} can be extended to non-binary cases.

\begin{theorem}[Non-binary extension of Theorem~\ref{thm:qlt_grt_wang_entrywise_concave}]\label{thm:qlt_grt_wang_entrywise_concave_nonbin}
    If $\Tilde{f}(p) \geq \bbE_{X \sim p} f(X) + \beta \Pr_{X \sim p}[X \notin \calC], \forall p$ is entry-wise concave and non-negative with $\beta > \max(\Tilde{f}(p_{\mathrm{init}}), \max_{X \in \calC} f(X))$, then 
    for any permutation $\pi: [n] \to [n]$,
    starting from $p_{\mathrm{cur}} = p_{\mathrm{init}}$ and for $i \in [n]$ doing
    (1) $x^* \gets \argmin_{x \in d = \Set{0, 1, 2,\ldots, c - 1}} \Tilde{f}(\operatorname{der}(\pi(i), x; p_{\mathrm{cur}}))$ and
    (2) $p_{\mathrm{cur}} \gets \operatorname{der}(i, x^*; p_{\mathrm{cur}})$
    will finally give a discrete $p_{\mathrm{final}} \in \calC$ such that 
    $f(p_{\mathrm{final}}) \leq \Tilde{f}(p_{\mathrm{init}})$.
\end{theorem}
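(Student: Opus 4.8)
The plan is to mirror the proof of Theorem~\ref{thm:qlt_grt_wang_entrywise_concave} (the binary version), replacing the two-way split at each node by a $c$-way split and checking that every step of the argument survives. Concretely, I would first establish that each greedy step is non-increasing for $\Tilde{f}$. Fix any iteration with current point $p_{\mathrm{cur}}$ and processed index $i = \pi(j)$. By the non-binary entry-wise concavity of $\Tilde{f}$ (as defined in App.~\ref{subapp:theory:non_binary}), we have $\sum_{r \in d} (p_{\mathrm{cur}})_{ir}\, \Tilde{f}(\der(i, r; p_{\mathrm{cur}})) \leq \Tilde{f}(p_{\mathrm{cur}})$. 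Since the coefficients $(p_{\mathrm{cur}})_{ir}$ are nonnegative and sum to one (by the normalization $\sum_{r} p_{ir} = 1$), the left-hand side is a convex combination of the values $\Tilde{f}(\der(i, r; p_{\mathrm{cur}}))$; hence the minimum over $r \in d$ of these values is at most this convex combination, which is at most $\Tilde{f}(p_{\mathrm{cur}})$. Therefore, taking $x^* = \argmin_{x \in d} \Tilde{f}(\der(i, x; p_{\mathrm{cur}}))$ and updating $p_{\mathrm{cur}} \gets \der(i, x^*; p_{\mathrm{cur}})$ does not increase $\Tilde{f}$. This is exactly the non-binary analogue of the binary inequality $\min(\Tilde{f}(\der(i,1;p)), \Tilde{f}(\der(i,0;p))) \leq \Tilde{f}(p)$ used in the binary proof.

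Second, chaining this inequality over all $n$ iterations of the loop yields $\Tilde{f}(p_{\mathrm{final}}) \leq \Tilde{f}(p_{\mathrm{init}})$. Moreover, since each index $\pi(j)$ is visited exactly once and $\der$ sets its row to a one-hot vector, after the loop every row of $p_{\mathrm{final}}$ is one-hot, so $p_{\mathrm{final}}$ is discrete, i.e., $p_{\mathrm{final}} \in d^n$. (This discreteness and monotonicity also follow from points (G1)–(G2) of Theorem~\ref{thm:greedy_like_der_good_nonbin}.)

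Third, which is the crux, I would deduce feasibility from the choice of $\beta$. Since $p_{\mathrm{final}}$ is discrete, the defining upper-bound property specializes to $\Tilde{f}(p_{\mathrm{final}}) \geq f(p_{\mathrm{final}}) + \beta\, \mathbb{1}(p_{\mathrm{final}} \notin \calC)$, because for a point mass the expectation of $f$ equals $f(p_{\mathrm{final}})$ and the violation probability is $0$ or $1$. Suppose toward a contradiction that $p_{\mathrm{final}} \notin \calC$. Then, using $f \geq 0$, we get $\Tilde{f}(p_{\mathrm{final}}) \geq \beta$; but combining the previous step with $\beta > \Tilde{f}(p_{\mathrm{init}})$ gives $\Tilde{f}(p_{\mathrm{final}}) \leq \Tilde{f}(p_{\mathrm{init}}) < \beta$, a contradiction. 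Hence $p_{\mathrm{final}} \in \calC$. With feasibility in hand, the upper-bound property yields $f(p_{\mathrm{final}}) \leq \Tilde{f}(p_{\mathrm{final}}) \leq \Tilde{f}(p_{\mathrm{init}})$, which is the claim.

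The main obstacle is, honestly, not a deep one: the entire purpose of this extension is that the only structural ingredient—the convex-combination/minimum inequality coming from entry-wise concavity—goes through verbatim with $c$ summands instead of $2$. The one place demanding care is confirming that the minimum of a finite collection of reals is dominated by any convex combination of them, which relies precisely on the nonnegativity and normalization of the row $(p_{\mathrm{cur}})_{i\cdot}$ guaranteed by $\sum_{r} p_{ir} = 1$, and on the finiteness of $d$ so that the $\argmin$ is well-defined. Everything else—the telescoping monotonicity and the $\beta$-based contradiction—is identical to the binary case, so I expect the proof to reduce to invoking the binary argument of Theorem~\ref{thm:qlt_grt_wang_entrywise_concave} with these substitutions.
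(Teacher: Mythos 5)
Your proposal is correct and follows essentially the same route as the paper's proof: the convex-combination/minimum inequality from non-binary entry-wise concavity (using $\sum_{r} p_{ir} = 1$), discreteness because each index is derandomized exactly once, and the $\beta$-threshold argument showing $f(p_{\mathrm{final}}) + \beta\,\mathbb{1}(p_{\mathrm{final}} \notin \calC) \leq \Tilde{f}(p_{\mathrm{init}}) < \beta$ forces feasibility and the objective bound. Your version is in fact slightly more explicit than the paper's (spelling out the contradiction and where $f \geq 0$ is needed), but the substance is identical.
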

\begin{proof}
    We shall follow the main idea in the original proof of Theorem~\ref{thm:qlt_grt_wang_entrywise_concave} by~\cite{wang2022unsupervised}, where the key idea was that
    entry-wise concavity ensures that local derandomization does not increase the objective.
    This key idea still holds with non-binary decisions.
    First, since after the series of local derandomization, for each $i$, it is locally derandomized exactly once, the final derandomized result should be discrete.
    Regarding $p_{\mathrm{final}} \in \calC$ and
    $f(p_{\mathrm{final}}) \leq \Tilde{f}(p_{\mathrm{init}})$,
    we claim that ``local derandomization does not increase the objective''.
    Specifically,
    since $\Tilde{f}$ is entry-wise concave, i.e.,    
    \[\sum_{r \in d} p_{ir} \Tilde{f}(\der(i, r; p); G) \leq \Tilde{f}(p; G), \forall G, p, i,\]
    and $\sum_{r \in d} p_{ir} = 1$,
    we have
    \[\min_{r \in d} \Tilde{f}(\der(i, r; p); G) \leq \sum_{r \in d} p_{ir} \Tilde{f}(\der(i, r; p); G) \leq \Tilde{f}(p; G), \forall G, p, i.\]
    Hence, indeed, ``local derandomization does not increase the objective'', and the final
    \[{f}(X; G) + \beta \mathbb{1}(X \notin \calC) \leq \Tilde{f}(p_{\mathrm{init}}) < \beta,\]
    which implies that
    ${f}(X; G) \leq \Tilde{f}(p_{\mathrm{init}})$
    and $\mathbb{1}(X \notin \calC) = 0$, i.e.,
    $X \in \calC$, completing the proof.
\end{proof}

\section{Additional Problems}\label{app:more_problems}

The robust $k$-clique problem generalizes the maximum $k$-clique problem~\citep{bomze1999maximum} and it can be seen as an uncertain variant of the heaviest $k$-subgraph problem~\citep{feige2001dense,billionnet2005different}.

\subsection{Robust $k$-Clique}
\smallsection{Definition.}
Given 
(1) an uncertain graph $G = (V, E, P)$, and
(2) $k \in \bbN$,
we aim to find a subset of nodes $V_X \subseteq V$ such that
(c1) $\Abs{V_X} = k$,
(c2) $V_X$ forms a clique,
and 
(c3) $\Pr[\text{all the edges between nodes in $V_X$ exist}]$ is maximized.

\smallsection{Involved conditions:}
(1) cardinality constraints, (2) cliques,  and (3) uncertainty 
(see Sections \ref{subsec:analy_conds:card}, \ref{subsec:analy_conds:clique} \& \ref{subsec:analy_conds:uncertainty}).

\smallsection{Details.}
Regarding conditions (c1)-(c2), we can directly use 
the derivations for them.
Regarding condition (c3), fix any $V_X$, the probability that all the edges between nodes in $V_X$ exist is
\[\prod_{(u, v) \in \binom{V_c}{2} \cap E} P_{uv}.\]
Maximizing the probability is equivalent to minimizing 
\[f_1(X) \coloneqq -\sum_{(u, v) \in \binom{V_c}{2} \cap E} \log P_{uv}.\]
We let $\hat{f}_1(X) \coloneqq f_1(X)$
and let \[\Tilde{f}_1(p) \coloneqq \bbE_{X \sim p} \hat{f}_1(X) = -\sum_{(u, v) \in E} p_u p_v \log P_{uv}.\]
The final objective is
\[\Tilde{f_{\mathrm{RQ}}}(p) = \Tilde{f}_1(p) + \beta_1 \Tilde{f}_{\mathrm{cq}}(p) + \beta_2 \Tilde{f}_{\mathrm{card}}(p; \Set{k})\] with constraint coefficients $\beta_1, \beta_2 > 0$.

Regarding the incremental differences,
we only need to derive the incremental differences of $\Tilde{f}_1$, which is
\[\Delta \Tilde{f}_1(i,1,p) = 
(p_i - 1) \sum_{v \colon (i, v) \in E} p_v \log P_{iv},\]
and
\[\Delta \Tilde{f}_1(i,0,p) = 
- p_i \sum_{v \colon (i, v) \in E} p_v \log P_{iv}.\]

\subsection{Robust Dominating Set}
The robust dominating set problem generalizes the minimal dominating set problem~\cite{guha1998approximation} and can also be seen as an uncertain version of set covering~\cite{caprara2000algorithms}.

\smallsection{Definition.}
Given (1) an uncertain graph $G = (V, E, P)$, and
(2) $k \in \bbN$, we aim to find a subset of nodes $V_X \subseteq V$ such that
(c1) $\Abs{V_X} = k$,
(c2) $V_X$ is a dominating set in the underlying deterministic graph, that is, for each $v \in V$, either $v \in V_X$ or $v$ has a neighbor in $V_X$, and
(c3) the probability that $V_X$ is indeed a dominating set when considering the edge uncertainty, i.e. $\Pr[\bigwedge_{v \in V \setminus V_X} \bigvee_{u \in V_X} A_{uv} ]$ is maximized.
For each edge $(u, v) \in E$, $A_uv$ is the event that $(u, v)$ exists under edge certainty, which happens with probability $P_{uv}$.

\smallsection{Involved conditions:}
(1) cardinality constraints,
(2) covering, and
(3) uncertainty
(see Sections~\ref{subsec:analy_conds:card}, \ref{subsec:analy_conds:cover}, \& \ref{subsec:analy_conds:uncertainty}).

\smallsection{Details.}
Regarding conditions (c1), we can directly use the derivations for it.
Specifically, $\Tilde{f}_1(p) = \Tilde{f}_{\mathrm{card}}(p; \Set{k})$.

Conditions (c2) and (c3) can be combined together.
We first add self-loops on each node $v \in V$ (so that each node $v$ can cover $v$ itself), and then consider the condition as $X \in \calC$ with 
\[\calC = \Set{X \colon \text{each node $v \in V$ is covered}}.\]
Then we define 
$\hat{f}_2(X)$ as the expected number of nodes that are not covered (when taking the edge uncertain into consideration).
It is easy to see that $\hat{f}_2(X) \geq \mathbb{1}(X \notin \calC), \forall X \in \Set{0,1}^n$.
Note that here the uncertainty comes from the edge probabilities while the decisions are discrete.
The formula of $\hat{f}_2$ is
\[\hat{f}_2(X) = \sum_{i \in V} \Pr[\text{$i$ is not covered}] = \sum_{i \in V \setminus V_X} \prod_{v \in N_i} (1 - P_{iv}),\]
where $N_i = \Set{v \in V \colon (i, v) \in E}$ is the neighborhood of $i$.
We then define $\Tilde{f}_2(p) = \bbE_{X \sim p} \hat{f}_2(X)$, and its formula is
\[\Tilde{f}_2(p) 
= \sum_{i \in V} \Pr[i \notin V_X] \prod_{v \in N_i} (1 - P_{iv}) 
= \sum_{i \in V} (1 - p_i) \prod_{v \in N_i} (1 - P_{iv}).\]
Combining all the conditions,
the final probabilistic objective is
\[\Tilde{f}_{\mathrm{RDS}}(p) = \Tilde{f}_2(p) + \beta \Tilde{f}_1(p)\] with constraint coefficient $\beta > 0$.

Regarding the incremental differences,
we only need to derive the incremental differences of $\Tilde{f}_2$, which is
\[\Delta \Tilde{f}_2(i, 1, p) = 
(p_i - 1) \prod_{v \in N_i} (1 - P_{iv})\]
and 
\[\Delta \Tilde{f}_2(i, 0, p) = 
- p_i \prod_{v \in N_i} (1 - P_{iv}).\]

\subsection{Clique Cover}
The clique cover problem~\citep{gramm2009data} is a classical NP-hard combinatorial problem.
We consider its decision version, which is NP-complete.

\smallsection{Definition.}
Given (1) a graph $G = (V, E)$ and 
(2) $c \in \bbN$, we aim to 
partition the nodes into $c$ groups, such that each group forms a clique.

\smallsection{Involved conditions:}
(1) cliques and (2) non-binary decisions (see Sections~\ref{subsec:analy_conds:clique} \& \ref{subsec:analy_conds:coloring}).

\smallsection{Details.}
This is basically the non-binary extension of the ``cliques'' condition.
For each $r \in d = \Set{0, 1, 2, \ldots, c - 1}$,
the condition holds for group-$r$ if the group is either empty or forms a clique.
The group-$r$ is empty with probability
$\prod_{i \in V} (1 - p_{ir})$,
and we can use 
\[\Tilde{f}_{\mathrm{cq}}(p_{\cdot, r}) \geq \Pr\nolimits_{X \sim p}[\text{group-$r$ does not form a clique}],\]
where ${p}_{\cdot, r} \in [0, 1]^n$ with 
$({p}_{\cdot, r})_j = p_{j, r}$.
Then the violation probability
\begin{align*}
 \Pr[\text{violation}] &= \Pr[\text{not empty} \land \text{does not form a clique}] \\
 &\leq \Pr[\text{not empty}] + \Pr[\text{does not form a clique}].   
\end{align*}
Therefore, we can have the final probabilistic objective
\[
\Tilde{f}_{\mathrm{cc}}(p) = \sum_{r = 0}^{c - 1} 1 - \prod_{i \in V} (1 - p_{ir}) + \Tilde{f}_{\mathrm{cq}}(p_{\cdot, r}).\]
If we create a complete graph $K_V$ with self-loops on $V$, then 
\[\prod_{i \in V} (1 - p_{ir})= \Tilde{f}_{\mathrm{cv}}(p_{\cdot, r}; v, K_V)\]
for any $v \in V$.
Hence, we have
\[\Tilde{f}_{\mathrm{CC}}(p) = \sum_{r = 0}^{c - 1} 1 - \Tilde{f}_{\mathrm{cv}}(p_{\cdot, r}; v, K_V) + \Tilde{f}_{\mathrm{cq}}(p_{\cdot, r}),\]
and the incremental differences can be handled by those of $\Tilde{f}_{\mathrm{cv}}$ and $\Tilde{f}_{\mathrm{cq}}$.

\subsection{Minimum Spanning Tree}
The minimum spanning tree problem~\citep{graham1985history} is a classical combinatorial problem.
Notably, it is not theoretically difficult and we have fast algorithms~\citep{pettie2002optimal,zhong2015fast} for the problem.
But it is still interesting to see that our method can be applied to such a problem.

\smallsection{Definition.}
Given
a graph $G = (V, E, W)$, 
we aim to find a subset of edges to form a connected tree (i.e., without cycles) containing all the nodes such that the total edge weights in the tree are minimized.
Instead of considering choosing edges, we consider the decisions on nodes.
Specifically, we put the nodes into different layers.
Let $c \leq n$ be the number of layers,
it is a non-binary problem, where each node $v$ is put into layer-$X_v$ with $X_v \in d = \Set{0, 1, 2, \ldots, c - 1}$.
For each node $v_\ell$ in layer $\ell > 0$,
it would be connected to a parent $v_{prev}$ in the previous layer-$(\ell - 1)$ so that the edge weight of $(v_\ell, v_{prev})$ is minimized.
The conditions are:
(c1) each node is either in layer-$0$, or it can find a parent in the previous layer, and
(c2) the total edge weights are minimized.

\smallsection{Involved conditions:}
(1) minimum (maximum) w.r.t. a subset, (2) covering and (3) non-binary decisions (see Sections~\ref{subsec:analy_conds:opt_wrt_subset}, \ref{subsec:analy_conds:cover}, and \ref{subsec:analy_conds:coloring}).

\smallsection{Details.}
Regarding (c1), we let $\hat{f}_1$ be the number of nodes for which (c1) is violated.
For each node $i$, it is in layer-$0$ with probability $p_{v0}$ and it can find at least one parent with probability
\begin{align*}
  \sum_{\ell = 1}^{c - 1} \Pr[\text{$i$ is in layer-$\ell$}] \Pr[\text{at least one of $i$'s neighbors is in layer-$(\ell - 1)$}]
&= \sum_{\ell = 1}^{c - 1} p_{i\ell} (1 - \prod_{v \in N_i} (1 - p_{v, \ell - 1})) \\
&= \sum_{\ell = 1}^{c - 1} p_{i\ell} (1 - \Tilde{f}_{\mathrm{cv}}(p{\cdot, \ell - 1}; i)),  
\end{align*}
where ${p}_{\cdot, \ell - 1} \in [0, 1]^n$ with 
$({p}_{\cdot, \ell - 1})_j = p_{j, \ell - 1}$.
Again, $N_i = \Set{v \in V \colon (i, v) \in E}$ is the neighborhood of $i$.
Note how the idea of ``covering'' is used here.
Therefore, the probability that (c1) is violated for the node $i$ is
\[1 - p_{i0} - \sum_{\ell = 1}^{c - 1} p_{i\ell} (1 - \Tilde{f}_{\mathrm{cv}}(p{\cdot, \ell - 1}; i)).\]
Now we are ready to compute
\[\tilde{f}_1(p) = \bbE_{X \sim p} \hat{f}_1(X) = \sum_{i \in V} (1 - p_{v0} - \sum_{\ell = 1}^{c - 1} p_{v\ell} (1 - \Tilde{f}_{\mathrm{cv}}(p{\cdot, \ell - 1}; i))).\]

Regarding (c2), we use the idea of ``minimum (maximum) w.r.t. a subset''.
For a spanning tree, the total edge weights are
\[\sum_{i \in V \colon \text{$i$ not the root}} W(i, \text{the parent of $i$}).\]
Note that in a minimum spanning tree, each non-root node should have a single parent.
For each node $i$, the expected $W(i, \text{the parent of $i$})$ is
\[\sum_{\ell = 1}^{c - 1} p_{i\ell} \Tilde{f}_{\mathrm{ms}}({p}_{\dot, \ell - 1}; i, W),\]
where ${p}_{\cdot, \ell - 1} \in [0, 1]^n$ with 
$({p}_{\cdot, \ell - 1})_j = p_{j, \ell - 1}$.
The idea of ``minimum (maximum) w.r.t. a subset'' has been used, where we consider the nodes being chosen into layer-$(\ell - 1)$.
Therefore, we have
\[\Tilde{f}_2(p) = \sum_{i \in V} \sum_{\ell = 1}^{c - 1} p_{i\ell} \Tilde{f}_{\mathrm{ms}}({p}_{\cdot, \ell - 1}; i, W).\]
Combining the conditions, the final probabilistic objective is
\[\Tilde{f}_{\mathrm{MST}}(p) = \Tilde{f}_2(p) + \beta \Tilde{f}_1(p)\]
with constraint coefficient $\beta > 0$.
The incremental differences can be handled by those of
$\Tilde{f}_{\mathrm{cv}}$ and 
$\Tilde{f}_{\mathrm{ms}}$.

\subsection{On cycles and trees}\label{subapp:cycles_trees}

\revise{\smallsection{Cycles.}
As discussed in Appendix~\ref{subapp:conditions_wrong}, CO problems involving cycles can be handled as follows.
The conditions that nodes should form a cycle can be seen as a combination of (1) non-binary decisions and (2) cardinality constraints. 
Specifically, if we aim to put $n$ nodes in a cycle, then this can be understood as (1) deciding a position $X_v \in \{0, 1, \ldots, n-1\}$ for each node $v \in [n]$ such that (2) each position contains exactly one node.}

\revise{
\smallsection{Trees.}
In MST (and other CO problems involving trees), an implicit condition is \textit{acyclicity}~\citep{lachapelle2019gradient}. In our way of organizing nodes into sequences of layers, acyclicity is naturally satisfied.
However, this might be tricky if we also need to decide how each node chooses its parent(s) and child(ren). For MST, this is deterministic in the sense that each non-root node should always choose the closest node in the above layer as its only parent, so that the total distance is minimized. In general, we may need additional decisions (parameters) for the choice of edges.
}

\revise{We acknowledge that we do not have in-depth empirical results for problems on cycles and trees (e.g., TSP and MST) in this work. However, many advanced heuristics are available for TSP, and there are fast exact algorithms for MST. Based on our preliminary experiments, we suspect that a general framework like probabilistic-method-based UL4CO (at least in its current stage) cannot be empirically comparable to them, even with our proposed schemes. Hence, from a practical standpoint, we found it less prioritized to develop new methods for such problems, which was also why we focused on the conditions and problems in this work. Note that we do not intend to imply that constraints for TSP and MST are less important. Instead, we suspect that addressing TSP and MST effectively enough to be practical requires sophisticated and potentially complex designs tailored specifically for such problems, which is beyond the scope of this work.
The further exploration on problems involving cycles and trees (and other conditions that cannot be trivially covered using the derivations in this work) is one of our future directions.}

\section{Complete Experimental Settings and Results}\label{app:additional_results}
Here, we provide detailed experimental settings and some additional experimental results.

\subsection{Detailed Experimental Settings}\label{subapp:exp_settings}
Here, we provide some details of the experimental settings.

\subsubsection{Hardware}
All the experiments are run on a machine with 
two Intel Xeon\textsuperscript{\textregistered}   Silver 4210R (10 cores, 20 threads) processors,
a 256GB RAM,
and RTX2080Ti (11GB) GPUs.
For the methods using GPUs, a single GPU is used.

\subsubsection{Facility Location}\label{subsubapp:additional_results:settings:FL}
Here, we provide more details about the settings of the experiments on the facility location problem.
For the experiments on facility location and maximum coverage, we mainly follow the settings by~\cite{wang2022towards} and use their open-source implementation.\footnote{\url{https://github.com/Thinklab-SJTU/One-Shot-Cardinality-NN-Solver}}

\smallsection{Datasets.}
We consider both random synthetic graphs and real-world graphs:
\begin{itemize}[leftmargin=*]
    \item \textbf{Rand500:} We follow the way of generating random graphs by~\cite{wang2022towards}.
    We generate 100 random graphs, where each graph contains 500 nodes.
    Each node $v$ has a two-dimensional location $(x_v, y_v)$, where $x_v$ and $y_v$ are sampled in $[0, 1]$, independently, uniformly at random. 
    \item \textbf{Rand800:} The rand800 graphs are generated in a similar way. The only difference is that each rand800 graph contains 800 nodes.
    \item \textbf{Starbucks:} The Starbucks datasets were used by~\cite{wang2022towards}.
    We quote their descriptions as follows:
    \say{The datasets are built based on the project named Starbucks Location Worldwide 2021 version,\footnote{\url{https://www.kaggle.com/datasets/kukuroo3/starbucks-locations-worldwide-2021-version}} which is scraped from the open-accessible Starbucks store locator webpage.\footnote{\url{https://www.starbucks.com/store-locator}}
    We analyze and select 4 cities with more than 100 Starbucks stores, which are
    London (166 stores), New York City (260 stores), Shanghai (510 stores), and Seoul (569 stores).
    The locations considered are the real locations represented as latitude and longitude.}
    \item \textbf{MCD:} The MCD (McDonald's) dataset is available online.\footnote{\url{https://www.kaggle.com/datasets/mdmdata/mcdonalds-locations-united-states}}. The dataset contains the locations of MCD branches in the United States. We divide the dataset into multiple sub-datasets by state, where each sub-dataset contains branches in the same state.
    We use the data from 8 states with the most ranches:
    CA (1248 branches), TX (1155 branches), FL (889 branches), NY (597 branches), PA (483 branches), IL (650 branches), OH (578 branches), and GA (442 branches).
    \item \textbf{Subway:} The Subway dataset is available online.\footnote{\url{https://www.kaggle.com/datasets/thedevastator/subway-the-fastest-growing-franchise-in-the-worl}} Similar to the MCD dataset, it contains the locations of subway branches in the United States.
    We also divide the dataset into multiple sub-datasets by state, where each sub-dataset contains branches in the same state.
    We use the data from 8 states with the most ranches:
    CA (2590 branches), TX (21994 branches), FL (1490 branches), NY (1066 branches), PA (865 branches), IL (1110 branches), OH (1171 branches), and GA (852 branches).
    \item For the real-world datasets, we use min-max normalization to make sure that each coordinate of each node (location) is also in $[0, 1]$ as in the random graphs.
\end{itemize}

\smallsection{Inductive settings.}
We follow the settings by~\cite{wang2022towards}.
For random graphs, the model is trained and tested on random graphs from the same distribution, but the training set and the test set are disjoint.
For real-world graphs, the model is trained on the \textit{rand500} graphs.

\smallsection{Methods.}
We consider both traditional methods and machine-learning methods:
\begin{itemize}[leftmargin=*]
    \item \textbf{Random:} Among all the locations, $k$ locations are picked uniformly at random; 240 seconds are given on each test graph.
    \item \textbf{Greedy:} deterministic greedy algorithms. We use the implementation of~\cite{wang2022towards}.
    \item \textbf{Gurobi}~\citep{gurobi} and \textbf{SCIP}~\citep{bestuzheva2021scip,ortools}: The problems are formulated as MIPs and the two solvers are used; the time budget is set as 120 seconds, but the programs sometimes do not terminate until more time is used.
    \item \textbf{{CardNN}}~\citep{wang2022towards}: Three variants proposed in the original paper. We use the implementation of the original authors.
    \item \textbf{{CardNN}-noTTO}: In addition to training, {CardNN} also directly optimizes on each test graph in test time, and this is a variant of {CardNN} without test-time optimization. We use the implementation of the original authors. 
    \item \textbf{EGN-naive}: EGN~\citep{karalias2020erdos} with a naive objective construction and iterative rounding, which was used by~\cite{wang2022towards} as a baseline method. We use the derivation and implementation by~\cite{wang2022towards}.
    \item \textbf{RL}: A reinforcement-learning method~\citep{kool2018attention}. We adapt the implementation by~\cite{berto2023rl4co}.\footnote{\url{https://github.com/kaist-silab/rl4co}}
\end{itemize}

\smallsection{Speed-quality trade-offs.}
For the proposed method \ours, we use test-time augmentation~\citep{jin2022empowering} on the test graphs by adding perturbations into both graph topology and features to obtain additional data.
Specifically, we use edge dropout~\citep{papp2021dropgnn,shu2022understanding} and add Gaussian noise into features.
The noise scale and the edge dropout ratios are both 0.2, which we do not fine-tune.
The three variants of \ours are obtained by using different numbers of additional augmented data and taking the best objective.
Specifically, 
the ``short'' version uses only the original test graphs,
the ``middle'' version uses less time than CardNN-GS,
and the ``long'' version uses less time than CardNN-HGS.

\smallsection{Evaluation.}
Given locations $(x_v, y_v)$'s for the nodes $v \in V$, if the final selected $k$ nodes are $v_1, v_2, \ldots, v_k$, the final objective is
$\sum_{v \in V} \min_{i \in [k]} \operatorname{dist}(v_i, v)$, where the distance metric $\operatorname{dist}$ is the Euclidean squared distance used by~\cite{wang2022towards}.
We choose $k = 30$ locations in each graph, except for the rand800 graphs where we choose $k = 50$ locations.

\smallsection{Hyperparameter fine-tuning.}
For the proposed method \ours and the method CardNN by~\cite{wang2022towards}, we conduct hyperparameter fine-tuning.
For \ours, we fine-tune the learning rate (LR) and constraint coefficient (CC).
For CardNN, we fine-tune the training learning rate (LR)\footnote{CardNN uses (possibly) different learning rates for training and test-time optimization.} and the Gumbel noise scale $\sigma$.
For random graphs, we choose the best hyperparameter setting w.r.t. the objective on the training set, because the distribution of the training set and the distribution of the test set are the same.
For real-world graphs, we choose the smallest graph in each group of datasets as the validation graph, and we choose the best hyperparameter setting w.r.t. the objective on the validation graph.
There is no specific reason to choose the smallest, and we just want to have a deterministic way to choose validation graphs.

We make sure that the number of candidate combinations (which is 15) is the same for both methods.
Our hyperparameter search space is as follows:
\begin{itemize}[leftmargin=*]
    \item For \ours: 
    $\text{LR} \in \Set{1e-1, 1e-2, 1e-3, 1e-4, 1e-5}$ and
    $\text{CC} \in \Set{1e-1, 1e-2, 1e-3}$
    \item For CardNN:
    $\text{LR} \in \Set{1e-1, 1e-2, 1e-3, 1e-4, 1e-5}$ and $\sigma \in \Set{0.01, 0.15, 0.25}$    
\end{itemize}
Notably, after our fine-tuning, the performance of CardNN is at least the same and usually better than the performance using the hyperparameter settings in the open-source code of CardNN provided by the original authors.
The best hyperparameter settings for each dataset are:
\begin{itemize}[leftmargin=*]
    \item Rand500: 
    \begin{itemize}
        \item \ours: $\text{LR} = 1e-1$, $\text{CC} = 1e-1$
        \item CardNN: $\text{LR} = 1e-4$, $\sigma = 0.25$
    \end{itemize}
    \item Rand800: 
    \begin{itemize}
        \item \ours: $\text{LR} = 1e-2$, $\text{CC} = 1e-2$
        \item CardNN: $\text{LR} = 1e-4$, $\sigma = 0.25$
    \end{itemize}
    \item Starbucks: 
    \begin{itemize}
        \item \ours: $\text{LR} = 1e-1$, $\text{CC} = 1e-1$
        \item CardNN: $\text{LR} = 1e-4$, $\sigma = 0.15$
    \end{itemize}
    \item MCD:
    \begin{itemize}
        \item \ours: $\text{LR} = 1e-3$, $\text{CC} = 1e-1$
        \item CardNN: $\text{LR} = 1e-5$, $\sigma = 0.25$
    \end{itemize}
    \item Subway:
    \begin{itemize}
        \item \ours: $\text{LR} = 1e-1$, $\text{CC} = 1e-1$
        \item CardNN: $\text{LR} = 1e-5$, $\sigma = 0.01$
    \end{itemize}
\end{itemize}

\subsubsection{Maximum Coverage}\label{subsubapp:additional_results:settings:MC}
Here, we provide more details about the settings of the experiments on the maximum coverage problem.

\smallsection{Datasets.}
We consider both random synthetic graphs and real-world graphs:
\begin{itemize}[leftmargin=*]
    \item \textbf{Rand500:} We follow the way of generating random graphs by~\cite{wang2022towards}.
    Each item has a random weight chosen uniformly at random between $1$ and $100$.
    Each set contains a random number of items, and the number of items is chosen uniformly at random between $10$ and $30$.
    Each rand500 dataset contains 500 sets and 1000 items.
    \item \textbf{Rand1000:} The rand1000 graphs are generated in a similar way. The only difference is that each rand1000 dataset contains 1000 sets and 2000 items.
    \item \textbf{Twitch:} The Twitch datasets were used by~\cite{wang2022towards}. We quote their descriptions as follows:
    \say{This social network dataset is collected by~\cite{rozemberczki2021multi} and the edges represent the mutual friendships between streamers. The streamers are categorized by their streaming language, resulting in 6 social networks for 6 languages. The social networks are DE (9498 nodes), ENGB (7126 nodes), ES (4648 nodes), FR (6549 nodes), PTBR (1912 nodes), and RU (4385 nodes). The objective is to cover more viewers, measured by the sum of the logarithmic number of viewers. We took the logarithm to enforce diversity because those top streamers usually have the dominant number of viewers.}
    \item \textbf{Railway:} The railway datasets~\citep{ceria1998lagrangian} are available online.\footnote{\url{https://plato.asu.edu/ftp/lptestset/rail}.} The data were collected from real-world crew membership in Italian railways.
    We have three datasets:        
    (1) rail507 with 507 sets and 63009 items,
    (2) rail516 with 516 sets and 47311 items, and
    (3) rail582 with 582 sets and 55515 items.
\end{itemize}

\smallsection{Inductive settings.}
We follow the settings by~\cite{wang2022towards}.
For random graphs, the model is trained and tested on random graphs from the same distribution, but the training set and the test set are disjoint.
For real-world graphs, the model is trained on the \textit{rand500} graphs.

\smallsection{Methods.}
See the method descriptions above for the facility location problem in Appendix~\ref{subsubapp:additional_results:settings:FL}.

\smallsection{Speed-quality trade-offs.}
See the descriptions above for the facility location problem in Appendix~\ref{subsubapp:additional_results:settings:FL}.

\smallsection{Evaluation.}
Let $w_j$'s denote the weights of the items.
The final objective is the summation of the weights of the covered items.
An item $j$ is covered if at least one set containing $j$ is chosen.
This is the term $\sum_{j \in T_X} W_j$ in Section~\ref{subsec:problems:max_cover}.

\smallsection{Hyperparameter fine-tuning.}
The overall fine-tuning principles are the same as in the experiments on the facility location problem. See Appendix~\ref{subsubapp:additional_results:settings:FL}.

Our hyperparameter search space is as follows:
\begin{itemize}[leftmargin=*]
    \item For \ours: 
    $\text{LR} \in \Set{1e-1, 1e-2, 1e-3, 1e-4, 1e-5}$ and
    $\text{CC} \in \Set{10, 100, 500}$
    \item For CardNN:
    $\text{LR} \in \Set{1e-1, 1e-2, 1e-3, 1e-4, 1e-5}$ and $\sigma \in \Set{0.01, 0.15, 0.25}$    
\end{itemize}

The best hyperparameter settings for each dataset are:
\begin{itemize}[leftmargin=*]
    \item Rand500: 
    \begin{itemize}
        \item \ours: $\text{LR} = 1e-5$, $\text{CC} = 500$
        \item CardNN: $\text{LR} = 1e-5$, $\sigma = 0.15$
    \end{itemize}
    \item Rand1000: 
    \begin{itemize}
        \item \ours: $\text{LR} = 1e-5$, $\text{CC} = 500$
        \item CardNN: $\text{LR} = 1e-5$, $\sigma = 0.15$
    \end{itemize}
    \item Twitch: 
    \begin{itemize}
        \item \ours: $\text{LR} = 1e-1$, $\text{CC} = 10$
        \item CardNN: $\text{LR} = 1e-4$, $\sigma = 0.01$
    \end{itemize}    
    \item Railway:
    \begin{itemize}
        \item \ours: $\text{LR} = 1e-5$, $\text{CC} = 10$
        \item CardNN: $\text{LR} = 1e-5$, $\sigma = 0.15$
    \end{itemize}
\end{itemize}

\subsubsection{Robust Coloring}
Here, we provide more details about the settings of the experiments on the robust coloring problem.

\smallsection{Datasets.}
We use four real-world uncertain graphs~\citep{hu2017embedding,ceccarello2017clustering,chen2019embedding}.
They are available online.\footnote{\url{https://github.com/Cecca/ugraph/tree/master/Reproducibility/Data}; \url{https://github.com/stasl0217/UKGE/tree/master/data}}
Some basic statistics of the datasets are as follows:
\begin{itemize}[leftmargin=*]
    \item \textbf{Collins:} $n = 1004$ nodes and $m = 8323$ edges; a deterministic greedy coloring algorithm uses 18 colors for the hard conflicts, and 36 colors for all the conflicts.
    \item \textbf{Gavin:} $n = 1727$ nodes and $m = 7534$ edges; a deterministic greedy coloring algorithm uses 7 colors for the hard conflicts, and 16 for all the conflicts.
    \item \textbf{Krogan:} $n = 2559$ nodes $m = 7031$ edges; 
    a deterministic greedy coloring algorithm uses 8 colors for the hard conflicts, and 25 for all the conflicts.
    \item \textbf{PPI:} $n = 1912$ nodes $m = 22749$ edges;
    a deterministic greedy coloring algorithm uses 47 colors for the hard conflicts, and 53 for all the conflicts.
\end{itemize}
We take the largest connected component of each dataset.
For each dataset, the $20\%$ edges with the highest edge weights are chosen as the hard conflicts.

\smallsection{Methods.}
We consider four baseline methods:
\begin{itemize}[leftmargin=*]
    \item \textbf{Greedy-RD:} The method first samples a random permutation of nodes, and then following the permutation, for each node, greedily chooses the best coloring to (1) avoid all the hard conflicts and 
    (2) optimizes the objective; 300 seconds are given on each test graph.
    \item \textbf{Greedy-GA:} This is the method proposed by~\cite{yanez2003robust} in the original paper of robust coloring. The difference between greedy-RD and greedy-GA is that greedy-GA uses a genetic algorithm (GA) to learn a good permutation instead of randomly sampling permutations; in the GA algorithm, the number of iterations is 20, the population size is 20, the crossover probability is 0.6, the mutation probability is 0.1, the elite ratio is 0.01, the parents proportion is 0.3.
    \item \textbf{Deterministic coloring (DC):} a deterministic greedy coloring algorithm~\citep{kosowski2004classical} is used to satisfy all the hard conflicts,
    and the soft conflicts are included in different random orders until no more soft conflicts can be satisfied. The maximum possible number of soft conflicts that can be included is found by binary search; 300 seconds are given on each test graph.    
    \item \textbf{Gurobi}: the problem is formulated as an MIP and the solver is used; 300 seconds are given on each test graph.
\end{itemize}

\smallsection{Hyperparameters.}
For \ours, we do not fine-tune hyperparameters. We consistently use learning rate $\eta = 0.1$ and the constraint coefficient $\beta$ is set as the highest penalty on soft conflicts, i.e., $\max_{e = (u, v) \in E_s} \log (1 - P(e))$.

\smallsection{Speed-quality trade-offs.}
We record the running time of our method using only CPUs and using GPUs.
For our method, we start from multiple random initial probabilities (each entry is sampled uniformly at random in $[0, 1]$), while
making sure that even with only CPUs, our method uses less time than each baseline.

\smallsection{Evaluation.}
The recorded objective is the negative log-likelihood of no soft conflicts being violated, i.e., the function $f_2$ in Section~\ref{subsec:problems:robust_coloring}.

\subsection{Full Results}\label{subapp:full_res}
Here, we provide the full raw results on each problem, together with the standard deviations of the results obtained by five random independent trials.

In Table~\ref{tab:results_facility_location_full}, we provide the full raw results with standard deviations on the facility location problem.

In Table~\ref{tab:results_max_cover_full}, we provide the full raw results with standard deviations on the maximum coverage problem.

\begin{table}[t!]
\caption{Full raw results on facility location with the standard deviations.
Running time (time): smaller the better.
Objective (obj): smaller the better.
} \label{tab:results_facility_location_full}
\small
\centering
\scalebox{0.9}{
\begin{tabular}{l|r|r|r|r|r|r|r|r|r|r}
\hline
\multirow{2}[3]{*}{method} & \multicolumn{2}{c|}{rand500} & \multicolumn{2}{c|}{rand800} & \multicolumn{2}{c|}{starbucks} & \multicolumn{2}{c|}{mcd} & \multicolumn{2}{c}{subway} \bigstrut\\
\cline{2-11}      
& obj$\downarrow$   & time$\downarrow$
& obj$\downarrow$   & time$\downarrow$
& obj$\downarrow$   & time$\downarrow$
& obj$\downarrow$   & time$\downarrow$
& obj$\downarrow$   & time$\downarrow$
\bigstrut\\
\hline
random & 3.43  & 240.00 & 3.48  & 240.00 & 0.54  & 240.00 & 1.54  & 240.00 & 2.72  & 240.00 \bigstrut[t] \\
(std) & 0.006 & 0.000 & 0.011 & 0.000 & 0.014 & 0.000 & 0.029 & 0.000 & 0.025 & 0.000 \\
greedy & 2.85  & 2.10  & 2.67  & 5.88  & 0.35  & 6.51  & 1.12  & 11.51 & 1.99  & 26.00 \\
(std) & 0.000 & 0.012 & 0.000 & 0.025 & 0.000 & 0.032 & 0.000 & 0.054 & 0.000 & 0.115 \\
Gurobi & 2.56  & 121.86 & 2.92  & 125.04 & 0.31  & 102.48 & 1.42  & 125.20 & 4.71  & 138.85 \\
(std) & 0.009 & 0.028 & 0.019 & 0.197 & 0.013 & 2.328 & 0.110 & 0.044 & 0.633 & 0.248 \\
SCIP  & 4.16  & 94.39 & 5.43  & 191.64 & 5.73  & 80.51 & 51.79 & 485.91 & 98.47 & 736.60 \\
(std) & 0.012 & 0.289 & 0.000 & 1.234 & 2.722 & 2.511 & 0.000 & 2.755 & 0.000 & 15.188 \\
CardNN-S & 2.74  & 13.94 & 2.46  & 16.13 & 0.47  & 19.23 & 1.09  & 23.50 & 1.93  & 20.38 \\
(std) & 0.006 & 0.320 & 0.003 & 0.758 & 0.020 & 4.722 & 0.008 & 1.979 & 0.015 & 0.580 \\
CardNN-GS & 2.41  & 71.45 & 2.34  & 141.76 & 0.31  & 39.88 & 1.08  & 42.34 & 1.85  & 30.12 \\
(std) & 0.002 & 0.906 & 0.002 & 0.713 & 0.004 & 1.153 & 0.014 & 4.045 & 0.018 & 0.788 \\
CardNN-HGS & 2.41  & 100.40 & 2.34  & 181.66 & 0.31  & 90.93 & 1.08  & 96.44 & 1.83  & 57.25 \\
(std) & 0.001 & 1.474 & 0.001 & 0.849 & 0.005 & 4.547 & 0.024 & 4.519 & 0.015 & 3.947 \\
CardNN-noTTO-S & 3.44  & 2.03  & 3.57  & 2.01  & 0.97  & 2.03  & 3.67  & 1.96  & 6.33  & 2.01 \\
(std) & 0.067 & 0.020 & 0.041 & 0.032 & 0.168 & 0.217 & 0.227 & 0.254 & 0.310 & 0.016 \\
CardNN-noTTO-GS & 2.74  & 28.61 & 2.66  & 51.92 & 0.44  & 8.18  & 1.18  & 15.73 & 2.20  & 4.45 \\
(std) & 0.009 & 0.332 & 0.008 & 1.038 & 0.018 & 0.093 & 0.035 & 1.171 & 0.077 & 0.381 \\
CardNN-noTTO-HGS & 2.74  & 37.35 & 2.65  & 61.69 & 0.42  & 15.59 & 1.19  & 28.31 & 2.17  & 7.37 \\
(std) & 0.012 & 2.305 & 0.009 & 0.350 & 0.010 & 0.042 & 0.024 & 0.363 & 0.069 & 0.039 \\
EGN-naive & 2.65  & 78.80 & 2.63  & 85.30 & 0.33  & 120.87 & 1.56  & 48.08 & 2.63  & 120.87 \\
(std) & 0.254 & 9.846 & 0.134 & 0.139 & 0.020 & 8.679 & 0.128 & 8.655 & 0.777 & 1.182 \\
RL-transductive & 5.57  & 300.00 & 5.18  & 300.00 & 2.97  & 1800.00 & 2.60  & 1800.00 & 4.50  & 1800.00 \\
(std) & 0.356 & 0.000 & 0.362 & 0.000 & 0.245 & 0.000 & 0.261 & 0.000 & 0.415 & 0.000 \\
RL-inductive & 4.07  & 300.06 & 4.27  & 300.54 & 0.79  & 300.04 & 2.40  & 300.04 & 4.23  & 300.05 \\
(std) & 0.227 & 0.019 & 0.143 & 0.157 & 0.148 & 0.014 & 0.241 & 0.016 & 0.395 & 0.015 \bigstrut[b]\\
\hline
\hline
\ours-short & 2.51  & 0.91  & 2.38  & 1.91  & 0.30  & 0.52  & 0.99  & 2.56  & 1.86  & 3.56 \bigstrut[t] \\
(std) & 0.076 & 0.084 & 0.005 & 0.034 & 0.003 & 0.365 & 0.020 & 0.154 & 0.063 & 1.600 \\
\ours-middle & 2.41  & 29.68 & 2.31  & 29.90 & 0.30  & 2.26  & 0.95  & 8.77  & 1.80  & 26.23 \\
(std) & 0.065 & 1.755 & 0.002 & 1.388 & 0.004 & 1.371 & 0.009 & 0.207 & 0.068 & 3.913 \\
\ours-long & 2.40  & 73.86 & 2.31  & 59.43 & 0.29  & 10.54 & 0.94  & 38.04 & 1.79  & 45.99 \\
(std) & 0.065 & 4.383 & 0.002 & 3.894 & 0.005 & 6.114 & 0.004 & 0.985 & 0.078 & 6.808 \bigstrut[b]\\
\hline
\end{tabular}%
}
\end{table}

\begin{table}[t!]
\caption{Full raw results on maximum coverage with the standard deviations.
Running time (time): smaller the better.
Objective (obj): larger the better.
} \label{tab:results_max_cover_full}
\small
\centering
\scalebox{0.9}{
\begin{tabular}{l|r|r|r|r|r|r|r|r}
\hline
\multirow{2}[3]{*}{method} & \multicolumn{2}{c|}{rand500} & \multicolumn{2}{c|}{rand1000} & \multicolumn{2}{c|}{twitch} & \multicolumn{2}{c}{railway} \bigstrut \\
\cline{2-9}      
& obj$\uparrow$   & time$\downarrow$ 
& obj$\uparrow$   & time$\downarrow$ 
& obj$\uparrow$   & time$\downarrow$ 
& obj$\uparrow$   & time$\downarrow$
\bigstrut \\
\hline
random & 36874.94 & 240.00 & 70756.03 & 240.00 & 17756.52 & 240.00 & 7333.90 & 240.00 \bigstrut[t] \\
(std) & 22.534 & 0.000 & 24.965 & 0.000 & 213.655 & 0.000 & 3.774 & 0.000 \\
greedy & 44312.81 & 0.09  & 88698.89 & 0.33  & 33822.40 & 0.69  & 7603.00 & 0.76 \\
(std) & 0.000 & 0.005 & 0.000 & 0.006 & 0.000 & 0.012 & 0.000 & 0.015 \\
Gurobi & 44880.59 & 120.05 & 89636.32 & 120.10 & 33840.40 & 0.65  & 7586.40 & 120.74 \\
(std) & 7.132 & 0.001 & 22.677 & 0.004 & 0.000 & 0.012 & 3.878 & 0.017 \\
SCIP  & 43805.35 & 120.07 & 86274.66 & 119.49 & 33840.40 & 3.28  & 7585.50 & 121.48 \\
(std) & 4.420 & 0.002 & 0.000 & 0.052 & 0.000 & 0.004 & 0.000 & 0.025 \\
CardNN-S & 42037.90 & 11.73 & 83434.44 & 11.86 & 33836.16 & 7.96  & 7397.10 & 2.82 \\
(std) & 75.443 & 0.465 & 143.252 & 0.808 & 1.541 & 0.253 & 8.834 & 0.369 \\
CardNN-GS & 44737.28 & 40.33 & 89313.37 & 55.95 & 33840.08 & 16.50 & 7616.70 & 17.64 \\
(std) & 7.150 & 0.430 & 42.942 & 0.070 & 0.640 & 0.495 & 4.411 & 1.221 \\
CardNN-HGS & 44742.53 & 55.64 & 89330.76 & 81.95 & 33840.32 & 30.73 & 7619.90 & 27.25 \\
(std) & 5.967 & 0.972 & 36.536 & 0.142 & 0.160 & 1.513 & 4.923 & 4.501 \\
CardNN-noTTO-S & 31283.61 & 1.83  & 62120.08 & 2.04  & 246.12 & 0.93  & 7148.10 & 1.17 \\
(std) & 3431.866 & 0.149 & 6841.483 & 0.334 & 322.740 & 0.081 & 0.490 & 0.020 \\
CardNN-noTTO-GS & 37010.18 & 10.40 & 71171.64 & 20.19 & 844.52 & 1.93  & 7324.40 & 5.72 \\
(std) & 171.453 & 0.915 & 464.520 & 0.293 & 722.439 & 0.134 & 5.571 & 0.413 \\
CardNN-noTTO-HGS & 37012.94 & 11.93 & 71198.82 & 24.80 & 6594.08 & 2.35  & 7324.70 & 9.23 \\
(std) & 173.545 & 0.635 & 437.536 & 0.348 & 4753.903 & 0.056 & 6.508 & 0.723 \\
EGN-naive & 41259.13 & 120.11 & 81689.73 & 120.27 & 4425.52 & 120.39 & 7376.60 & 120.94 \\
(std) & 187.784 & 0.005 & 325.560 & 0.032 & 5480.695 & 0.435 & 7.303 & 0.420 \\
RL-transductive & 41461.65 & 300.00 & 73597.20 & 300.00 & 32143.20 & 1800.00 & 7307.00 & 1800.00 \\
(std) & 580.240 & 0.000 & 957.157 & 0.000 & 315.444 & 0.000 & 53.139 & 0.000 \\
RL-inductive & 34536.00 & 300.06 & 69155.00 & 300.18 & 19840.60 & 301.91 & 7320.50 & 301.78 \\
(std) & 22.154 & 0.017 & 42.216 & 0.022 & 55.146 & 0.254 & 9.578 & 0.195 \bigstrut[b] \\
\hline
\hline
ours-short & 44622.79 & 0.96  & 89130.65 & 1.83  & 33828.40 & 1.82  & 7607.10 & 2.00 \bigstrut[t] \\
(std) & 9.158 & 0.106 & 33.987 & 0.116 & 0.000 & 0.145 & 4.329 & 0.055 \\
ours-middle & 44971.97 & 15.16 & 89496.45 & 7.84  & 33828.40 & 11.43 & 7617.80 & 16.05 \\
(std) & 16.313 & 1.057 & 29.199 & 0.117 & 0.000 & 0.301 & 4.007 & 0.135 \\
ours-long & 45000.41 & 30.02 & 89721.94 & 73.54 & 33828.40 & 19.35 & 7620.50 & 16.09 \\
(std) & 16.399 & 2.097 & 23.553 & 1.007 & 0.000 & 0.578 & 5.206 & 0.118 \bigstrut[b]\\
\hline
\end{tabular}%
}
\end{table}

\subsection{Ablation Studies}\label{subapp:ablation_study}
Here, we provide the results of ablation studies.

\subsubsection{Q1: Are Good Probabilistic Objectives Helpful?}
Here, we check whether the probabilistic objectives derived by us are helpful.
We compare 
(a) EGN-naive (non-good objectives and iterative rounding) and 
(b) \ours-iterative (good objectives and iterative rounding).
It is difficult to compare the full-fledged version of \ours 
(good objectives and greedy derandomization) and 
a variant with 
non-good objectives and greedy derandomization,
because we find computing the incremental differences of non-good objectives nontrivial (yet less meaningful).

In Tables~\ref{tab:ablation_study:obj_fl} and \ref{tab:ablation_study:obj_mc}, we show the performance of EGN-naive and \ours-iterative on facility location and maximum coverage.
We observe that in most cases, the optimization objective with the good objectives is better.
However, we also observe that using the good objectives, the running time is sometimes higher.
This is because the good objective of cardinality constraints proposed by us is mathematically more complicated than the one used in EGN-naive formulated by~\cite{wang2022towards}, \revise{which is $\max(\sum_v p_v - k, 0)$ for the cardinality constraint that at most $k$ nodes are chosen. See also Appendix~\ref{subapp:conditions_wrong}.}
This also validates the necessity of our fast incremental derandomization scheme, which can improve the speed.

\begin{table}[t!]
\caption{Ablation study on facility location: are good probabilistic objectives helpful?
Running time (time): smaller the better.
Objective (obj): smaller the better.
} \label{tab:ablation_study:obj_fl}
\small
\centering
\scalebox{0.9}{
\begin{tabular}{l|r|r|r|r|r|r|r|r|r|r}
\hline
\multirow{2}[3]{*}{method} & \multicolumn{2}{c|}{rand500} & \multicolumn{2}{c|}{rand800} & \multicolumn{2}{c|}{starbucks} & \multicolumn{2}{c|}{mcd} & \multicolumn{2}{c}{subway} \bigstrut\\
\cline{2-11}      
& obj$\downarrow$   & time$\downarrow$
& obj$\downarrow$   & time$\downarrow$
& obj$\downarrow$   & time$\downarrow$
& obj$\downarrow$   & time$\downarrow$
& obj$\downarrow$   & time$\downarrow$
\bigstrut\\
\hline
EGN-naive & 2.65  & 78.80 & 2.63  & 85.30 & 0.33  & 120.87 & 1.56  & 48.08 & 2.63  & 120.87 \bigstrut[t]\\
\ours-iterative & 2.65  & 169.56 & 2.67  & 205.12 & 0.33  & 162.15 & 1.05  & 87.45 & 1.96  & 52.37 \bigstrut[b]\\
\hline
\end{tabular}%
}
\end{table}

\begin{table}[t!]
\caption{Ablation study on maximum coverage: are good probabilistic objectives helpful?
Running time (time): smaller the better.
Objective (obj): larger the better.
} \label{tab:ablation_study:obj_mc}
\small
\centering
\scalebox{0.9}{
\begin{tabular}{l|r|r|r|r|r|r|r|r}
\hline
\multirow{2}[3]{*}{method} & \multicolumn{2}{c|}{rand500} & \multicolumn{2}{c|}
{rand1000} & \multicolumn{2}{c|}{twitch} & \multicolumn{2}{c}{railway} \bigstrut\\
\cline{2-9}      
& obj$\uparrow$   & time$\downarrow$
& obj$\uparrow$   & time$\downarrow$
& obj$\uparrow$   & time$\downarrow$
& obj$\uparrow$   & time$\downarrow$
\bigstrut\\
\hline
EGN-naive & 41378.43 & 120.72 & 81393.77 & 101.23 & 15448.20 & 120.72 & 7290.00 & 120.76 \bigstrut[t]\\
\ours-iterative & 42820.04 & 131.78 & 84397.79 & 209.95 & 16093.80 & 137.08 & 7304.50 & 120.21 \bigstrut[b]\\
\hline
\end{tabular}%
}
\end{table}

\subsubsection{Q2: Is Greedy Derandomization Better than Iterative Rounding?}
Here, we check whether the proposed greedy derandomization is helpful, especially when compared to the iterative rounding proposed by~\cite{wang2022unsupervised}.
We compare 
(a) the full-fledged version of \ours 
(good objectives and greedy derandomization) and 
(b) \ours-iterative 
(good objectives and iterative rounding).
In Tables~\ref{tab:ablation_study:grd_derand_fl} and \ref{tab:ablation_study:grd_derand_mc}, we show the performance of \ours and \ours-iterative on facility location and maximum coverage.

We observe that when using (incremental) greedy derandomization (compared to iterative rounding), \ours archives better optimization objectives within a shorter time, validating that the greedy derandomization scheme proposed by us is indeed helpful.

\begin{table}[t!]
\caption{Ablation study on facility location: is greedy derandomization better than iterative rounding?
Running time (time): smaller the better.
Objective (obj): smaller the better.
} \label{tab:ablation_study:grd_derand_fl}
\small
\centering
\scalebox{0.9}{
\begin{tabular}{l|r|r|r|r|r|r|r|r|r|r}
\hline
\multirow{2}[3]{*}{method} & \multicolumn{2}{c|}{rand500} & \multicolumn{2}{c|}{rand800} & \multicolumn{2}{c|}{starbucks} & \multicolumn{2}{c|}{mcd} & \multicolumn{2}{c}{subway} \bigstrut\\
\cline{2-11}      
& obj$\downarrow$   & time$\downarrow$
& obj$\downarrow$   & time$\downarrow$
& obj$\downarrow$   & time$\downarrow$
& obj$\downarrow$   & time$\downarrow$
& obj$\downarrow$   & time$\downarrow$
\bigstrut\\
\hline
\ours-iterative & 2.65  & 169.56 & 2.67  & 205.12 & 0.33  & 162.15 & 1.05  & 87.45 & 1.96  & 52.37 \bigstrut\\
\hline
\hline
\ours-short & 2.51  & 0.91  & 2.38  & 1.91  & 0.30  & 0.52  & 0.99  & 2.56  & 1.86  & 10.35 \bigstrut[t]\\
\ours-middle & 2.41  & 29.68 & 2.31  & 29.90 & 0.30  & 2.26  & 0.95  & 8.77 & 1.80  & 26.23 \\
\ours-long & 2.40  & 73.86 & 2.31  & 59.43 & 0.29  & 10.54 & 0.94  & 38.04 & 1.79  & 45.99 \bigstrut[b]\\
\hline
\end{tabular}%
}
\end{table}

\begin{table}[t!]
\caption{Ablation study on maximum coverage: is greedy derandomization better than iterative rounding?
Running time (time): smaller the better.
Objective (obj): larger the better.
} \label{tab:ablation_study:grd_derand_mc}
\small
\centering
\scalebox{0.9}{
\begin{tabular}{l|r|r|r|r|r|r|r|r}
\hline
\multirow{2}[3]{*}{method} & \multicolumn{2}{c|}{rand500} & \multicolumn{2}{c|}{rand1000} & \multicolumn{2}{c|}{twitch} & \multicolumn{2}{c}{railway} \bigstrut \\
\cline{2-9}      
& obj$\uparrow$   & time$\downarrow$
& obj$\uparrow$   & time$\downarrow$
& obj$\uparrow$   & time$\downarrow$
& obj$\uparrow$   & time$\downarrow$
\bigstrut\\
\hline
\ours-iterative & 42820.04 & 131.78 & 84397.79 & 209.95 & 16093.80 & 137.08 & 7304.50 & 120.21 \bigstrut\\
\hline
\hline
\ours-short & 44622.80 & 0.96  & 89130.70 & 1.83  & 33828.40 & 1.82  & 7607.10 & 2.00 \bigstrut[t]\\
\ours-middle & 44972.00 & 15.16 & 89496.50 & 7.84  & 33828.40 & 11.43 & 7616.00 & 8.17 \\
\ours-long & 45000.40 & 30.02 & 89721.90 & 73.54 & 33828.40 & 19.35 & 7620.50 & 16.04 \bigstrut[b]\\
\hline
\end{tabular}%
}
\end{table}

In conclusion, each component in \ours is helpful in most cases, but only when combining both good objectives with greedy derandomization can we obtain the best synergy.

\subsubsection{Q3: Does Incremental Derandomization Improve the Speed?}
Here, we want to check how much the proposed incremental derandomization scheme using incremental differences helps in improving the speed.
With greedy derandomization, we compare the running time of incremental derandomization and naive derandomization (i.e., evaluating the objective on each possible local derandomization case), on facility location and maximum coverage.

In Tables~\ref{tab:ablation_study:incre_fl} and \ref{tab:ablation_study:incre_mc}, we show the running time of \ours when using incremental derandomization and when using naive derandomization, on facility location and maximum coverage.

We observe that using incremental derandomization significantly improves the derandomization speed, and the superiority is usually more significant when the dataset sizes increase.

\begin{table}[t!]
\caption{Ablation study on facility location: does incremental derandomization improve the speed?
} \label{tab:ablation_study:incre_fl}
\small
\centering
\scalebox{0.9}{
\begin{tabular}{l|rrrrr}
\hline
      & rand500 & rand800 & starbucks & mcd   & subway \bigstrut\\
\hline
naive derandomization & 317.46 & 1061.02 & 231.85 & 1710.84 & 10196.05 \bigstrut[t]\\
incremental derandomization & 0.37  & 1.70  & 1.28  & 3.56  & 11.25 \bigstrut[b]\\
\hline
\hline
speed-up ratio & 849.65 & 623.30 & 180.77 & 480.14 & 906.30 \bigstrut\\
\hline
\end{tabular}%
}
\end{table}

\begin{table}[t!]
\caption{Ablation study on maximum coverage: does incremental derandomization improve the speed?
} \label{tab:ablation_study:incre_mc}
\small
\centering
\scalebox{0.9}{
\begin{tabular}{l|rrrr}
\hline
      & rand500 & rand1000 & twitch & railway \bigstrut \\
\hline
naive derandomization & 240.77 & 1186.06 & 2247.88 & 359.86 \bigstrut[t]\\
incremental derandomization & 0.91  & 2.48  & 1.82  & 1.90 \bigstrut[b]\\
\hline
\hline
speed-up ratio & 265.49 & 478.14 & 1231.81 & 189.52 \bigstrut\\
\hline
\end{tabular}%
}
\end{table}

\subsubsection{Q4: How Does \ours Perform with Different Constraint Coefficients?}
Here, we want to check how \ours performs when using different constraint coefficients (i.e., different $\beta$ values) and fixing the other hyperparameters.

In Tables~\ref{tab:ablation_study:beta_fl} to \ref{tab:ablation_study:beta_rc}, we show the performance of \ours when using different $\beta$ values, on facility location, maximum coverage, and robust coloring.

For facility location and maximum coverage, the candidate $\beta$ values are the same as in Appendices~\ref{subsubapp:additional_results:settings:FL} and \ref{subsubapp:additional_results:settings:MC}.
We use the fastest version of \ours without test-time augmentation.

For robust coloring, let the originally used $\beta_0 \coloneqq \max_{e \in E_s} \log(1 - P(e))$, we consider three candidate values: $\frac{1}{2}\beta_0$, $\beta_0$, and $2\beta_0$.
The other hyperparameters are fixed as the same.

Our observations are as follows.
For facility location and maximum coverage:
\begin{itemize}
    \item For random graphs, since the distribution of the training set and the distribution of the test set are the same, the originally used $\beta$ values perform well, usually the best among the candidates.
    \item For real-world graphs, the originally used $\beta$ values do not achieve the best performance in some cases. In our understanding, this is because we use the smallest graph in each group of datasets as the validation graph, while the smallest graph possibly has a slightly different data distribution from the other graphs in the group, i.e., the test set.
    \item Overall, certain sensitivity w.r.t $\beta$ can be observed, but usually, multiple $\beta$ values can achieve reasonable performance.
\end{itemize}
For robust coloring:
\begin{itemize}
    \item Overall, all the candidates $\beta$ vales can achieve similar performance.
    \item In other words, the performance of our method is not very sensitive to the value of $\beta$ on robust coloring.
\end{itemize}

\begin{table}[t!]
\caption{Ablation study on facility location: how does \ours perform with different constraint coefficients? The results with the constraint coefficient originally used in our experiments are marked in bold. The numbers here are objectives (smaller the better).}\label{tab:ablation_study:beta_fl}
\small
\centering
\scalebox{0.9}{
\begin{tabular}{l|r|r|r|r|r}
\hline
$\beta$ & rand500 & rand800 & starbucks & mcd   & subway \bigstrut\\
\hline
1e-1 & 2.50  & 2.47  & 0.31  & 1.02  & 1.75 \bigstrut\\
\hline
1e-2 & \textbf{2.51} & \textbf{2.38} & \textbf{0.30} & \textbf{0.99} & \textbf{1.86} \bigstrut\\
\hline
1e-3 & 3.19  & 2.79  & 1.85  & 1.41  & 3.83 \bigstrut\\
\hline
\end{tabular}%
}
\end{table}

\begin{table}[t!]
\caption{Ablation study on maximum coverage: how does \ours perform with different constraint coefficients? The results with the constraint coefficient originally used in our experiments are marked in bold. The numbers here are objectives (larger the better).}\label{tab:ablation_study:beta_mc}
\small
\centering
\scalebox{0.9}{
\begin{tabular}{l|r|r|r|r}
\hline
$\beta$ & rand500 & rand1000 & twitch & railway \bigstrut\\
\hline
10 & 43744.80 & 87165.08 & 33801.80 & \textbf{7607.10} \bigstrut\\
\hline
100 & 44382.36 & 88543.73 & \textbf{33828.40} & 7602.00 \bigstrut\\
\hline
500 & \textbf{44622.80} & \textbf{89130.70} & 33825.80 & 7575.50 \bigstrut\\
\hline
\end{tabular}%
}
\end{table}

\begin{table}[t!]
\caption{Ablation study on robust coloring: how does \ours perform with different constraint coefficients? The results with the constraint coefficient originally used in our experiments are marked in bold. The numbers here are objectives (smaller the better).}\label{tab:ablation_study:beta_rc}
\small
\centering
\scalebox{0.9}{
\begin{tabular}{l|r|r|r|r|r|r|r|r}
\hline
\multirow{2}[3]{*}{$\beta$} & \multicolumn{2}{c|}{collins} & \multicolumn{2}{c|}{gavin} & \multicolumn{2}{c|}{krogan} & \multicolumn{2}{c}{ppi} \bigstrut\\
\cline{2-9}      & 18 colors & 25 colors & 8 colors & 15 colors & 8 colors & 15 colors & 47 colors & 50 colors \bigstrut\\
\hline
$\frac{1}{2}\beta_0$ & 78.32 & 15.61 & 46.56 & 6.70  & 52.04 & 0.87  & 2.93  & 1.01 \bigstrut\\
\hline
$\beta_0$ (originally used) & \textbf{82.26} & \textbf{15.16} & \textbf{42.99} & \textbf{6.72}  & \textbf{52.44} & \textbf{0.87}  & \textbf{2.93}  & \textbf{1.01} \bigstrut\\
\hline
$2\beta_0$ & 81.17 & 15.83 & 44.96 & 6.77  & 55.25 & 0.87  & 2.93  & 1.01 \bigstrut\\
\hline
\end{tabular}%
}
\end{table}

\section{Additional discussions}\label{app:discussions}

\subsection{Inductive Settings and Transductive Settings}\label{subapp:discussions:inductive_and_transductive}
As discussed in Appendix~\ref{subapp:background:diff_optim}, the differentiable optimization in the pipeline can be done either in an inductive setting or in a transductive setting.
Although ideally, a well-trained encoder can save much time without degrading the performance,
in practice,
inductive settings can be less effective~\citep{li2023distribution},
especially when the training set and the test set have very different distributions~\citep{drakulic2023bq}.

As shown in our experimental results, the performance of CardNN~\citep{wang2022towards} highly relies on test-time optimization (compare CardNN and CardNN-noTTO), which implies that the training is actually less essential than the direct optimization on test instances.

For \ours, we also observe that,
when the training set and the test set are from different distributions, the training can be less helpful.
Even applying derandomization on random probabilities can work well sometimes (but not always).

\subsection{Reinforcement Learning and Probabilistic-Method-Based UL4CO}\label{subapp:discussions:rl_and_egn}
The connections between reinforcement learning and 
probabilistic-method-based UL4CO have been discussed by~\cite{wang2022unsupervised}.
The direct connection comes from the fact that
the policy gradient tries to approximate expectations by sampling, while probabilistic-method-based UL4CO aims to directly evaluate expectations.

Differences also exist.
In many cases, RL methods generate decisions in an autoregressive manner, while UL4CO methods try to do it in a one-shot manner~\citep{wang2022towards}, although one-shot RL has also been recently considered~\citep{viquerat2023policy}.
Both the overhead of sampling and the autoregressive decision-encoding
can potentially explain why UL4CO is usually more efficient than RL methods.

We focus on cases under \revise{prevalent} conditions in this work.
In RL, there are also similar subfields studying RL under constraints.
On top of the basic difficulties of ``sampling'', constrained sampling for RL is even trickier and less efficient.
Moreover, the analysis has been limited to simple constraints, e.g., linear and convex ones~\citep{miryoosefi2022simple}.
We believe that this work shows that UL4CO is especially promising in cases under \revise{prevalent} conditions.

\subsection{Local decomposability}\label{subapp:discussions:local_decomp}
\revise{As discussed in Section~\ref{subsec:analy_conds:uncertainty}, a common technique we used in our derivations is decomposing objectives or constraints into sub-terms and analyzing the sub-terms.
Here, we would like to further discuss the importance and implications of this ``local decomposability''.
As discussed in existing works~\citep{ahn2020learning,Jo2023robust}, local decomposability allows us to deal with each sub-term separately, and it is convenient for constructing loss functions.
We would like to point out that this is especially useful for probabilistic-method-based UL4CO, since we can thus use linearity of expectation to take the expectation of each sub-term.}

\revise{Moreover, this technique can be used in combination with another common idea when we construct tight upper-bounds (TUBs), i.e., relaxing the binary ``a constraint is violated'' to ``the number of violations (which was also mentioned in Section~\ref{sec:analy_conds}).
Typically, with such relaxation, we obtain a group of sub-terms, where each sub-term represents the probability of a violation.}

\revise{Notably, not all decomposable objectives are easy to handle, since we also need to consider the number of sub-terms.
Specifically, if an objective (or constraint) can be represented as a polynomial $f(X)$ of degree $d$ with $t$ terms, then the naive computation of $\mathbb{E}[f(X)]$ takes $O(td)$ time, assuming independent Bernoulli variables and multiplication is $O(1)$. Even when the degree $d$ is low, this might still become prohibitive when the number $t$ of terms is high. Regarding the conditions covered in this work:
\begin{itemize}[leftmargin=*]
    \item For cardinality constraints, even for the simplest case where we aim to choose exactly $k$ nodes, the naive computation of expectation takes $O(n^{k+1})$ by enumerating all the $k$-subsets, which would quickly become computationally prohibitive as $k$ increases.\footnote{$f(X)=\sum_{V_k \in \binom{[n]}{k}}\prod_{v \in V_k} X_v \prod_{u \in [n] \setminus V_{k}} (1 - X_u)$, where the degree $d=n$ and the number of terms $t = \binom{n}{k}$.}
    \item For minimum (or maximum) w.r.t. a subset, the naive computation of expectation requires considering all possible decisions, which takes $O(2^n)$.
    \item For cliques, the naive computation of expectation requires considering all possible decisions, which takes $O(2^n)$.
    \item For covering, the constraint can be represented as a polynomial with the degree being the number of neighbors of the target node; even naive evaluation is doable, but our derivation of incremental differences is still nontrivial.
\end{itemize}
}

\end{document}